\newacronym{KL}{KL}{Kullback-Leibler}
\newacronym{ELBO}{elbo}{\emph{evidence lower bound}}
\newacronym{MCMC}{mcmc}{Markov chain Monte Carlo}
\newacronym{ML}{ML}{machine learning}
\newacronym{VAE}{VAE}{variational auto-encoder}
\newacronym{VAETV}{VAE-TV}{variational auto-encoder with twin hidden variables}
\newacronym{AE}{AE}{auto-encoder}
\newacronym{SGD}{SGD}{stochastic gradient descent}
\newacronym[sort=beta]{BVAE}{\(\beta\)-vae}{}
\newacronym{TC}{TC}{total correlation}
\newacronym{DGM}{DGM}{deep generative model}
\newacronym{MMD}{MMD}{maximum mean discrepancy}
\newacronym{GAN}{GAN}{generative adversarial network}
\newacronym{AAE}{AAE}{adversarial auto-encoder}
\newacronym{CCM}{CCM}{constant curvature manifold}
\newacronym{IWAE}{IWAE}{importance weighted auto-encoder}
\newacronym{MC}{MC}{Monte Carlo}
\newacronym{NN}{NN}{neural network}
\newacronym{FNN}{FNN}{feedforward neural network}
\newacronym{MLP}{MLP}{multilayer perceptron}
\newacronym{SVM}{SVM}{support vector machine}
\newacronym{KDE}{KDE}{kernel density estimation}
\newacronym{RKHS}{RKHS}{reproducing kernel Hilbert space}
\newacronym{VI}{VI}{variational inference}
\newacronym{maxlike}{ML}{maximum likelihood}
\DeclareMathOperator{\E}{\mathbb{E}}
\DeclareMathOperator{\R}{\mathbb{R}}
\DeclareMathOperator{\X}{\mathcal{X}}
\DeclareMathOperator{\Y}{\mathcal{Y}}
\DeclareMathOperator{\Z}{\mathcal{Z}}
\DeclareMathOperator{\ELBO}{\mathcal{L}}
\DeclareMathOperator{\pdata}{p_{\text{data}}(x)}
\DeclareRobustCommand{\KL}[2]{\ensuremath{D_{\textrm{KL}}\left(#1\;\|\;#2\right)}}
\DeclareMathOperator*{\argmax}{arg\,max}
\newcommand\Todo[1]{\textcolor{red}{\\Todo: #1}}
\newcommand{\method}{InteL-VAE\xspace}
\newcommand{\methods}{InteL-VAEs\xspace}
\Crefname{equation}{Eq.}{Eqs.}
\Crefname{figure}{Fig.}{Figs.}
\Crefname{tabular}{Tab.}{Tabs.}
\Crefname{section}{Sec.}{Secs.}
\newcommand\given[1][]{\:#1\vert\:}
\renewcommand\paragraph{\@startsection{paragraph}{4}{\z@}%
{0.6ex \@plus.2ex \@minus.2ex}%
{-1em}%
{\normalfont\normalsize\bfseries}}
\title{On Incorporating Inductive Biases into VAEs}
\author{
	Ning Miao\textsuperscript{1*}\quad Emile Mathieu\textsuperscript{1}\quad N. Siddharth\textsuperscript{2}\quad Yee Whye Teh\textsuperscript{1}\quad Tom Rainforth\textsuperscript{1*}\thanks{\textsuperscript{1}Department of Statistics, University of Oxford,~ \textsuperscript{2}University of Edinburgh}\thanks {\textsuperscript{*}Correspondence to: Ning Miao <ning.miao@stats.ox.ac.uk>, Tom Rainforth <rainforth@stats.ox.ac.uk>}
}
\renewcommand\footnotemark{}
\begin{document}

\maketitle

\renewcommand{\thefootnote}{\arabic{footnote}}
\setcounter{footnote}{0}
\begin{abstract}
We explain why directly changing the prior can be a surprisingly ineffective mechanism for incorporating inductive biases into \glspl{VAE}, and introduce a simple and effective alternative approach: \emph{Intermediary Latent Space VAEs} (\methods).
\methods use an intermediary set of latent variables to control the stochasticity of the encoding process, before mapping these in turn to the latent representation using a parametric function that encapsulates our desired inductive bias(es).
This allows us to impose properties like sparsity or clustering on learned representations, and incorporate human knowledge into the generative model.  
Whereas changing the prior only indirectly encourages behavior through regularizing the encoder, \methods are able to directly enforce desired characteristics.
Moreover, they bypass the computation and encoder design issues caused by non-Gaussian priors, while allowing for additional flexibility through training of the parametric mapping function.
We show that these advantages, in turn, lead to both better generative models and better representations being learned.

\end{abstract}

\section{Introduction}
\label{sec:intro}
\Glspl{VAE} provide a rich class of \glspl{DGM} with many variants~\citep{kingma2013auto, rezende2015variational, burda2016importance, gulrajani2016pixelvae, vahdat2020nvae}. Based on an encoder-decoder structure, \glspl{VAE} encode datapoints into latent embeddings before decoding them back to data space.
By parameterizing the encoder and decoder using expressive neural networks, \glspl{VAE} provide a powerful basis for learning both generative models and representations.

The standard \gls{VAE} framework assumes an isotropic Gaussian prior.
However, this can cause issues, such as when one desires the learned representations to exhibit some properties of interest, for example sparsity~\citep{tonolini2020variational} or clustering~\citep{dilokthanakul2016deep}, or when the data distribution has very different topological properties from a Gaussian, for example multi-modality~\citep{shi2020dispersed} or group structure~\citep{falorsi2018Explorations}.
Therefore, a variety of recent works have looked to use non-Gaussian priors~\citep{van2017neural,tomczak2018vae,casale2018gaussian,razavi2019generating,bauer2019resampled}, often with the motivation of adding inductive biases into the model~\citep{s-vae18,mathieu2019disentangling,nagano2019Wrapped,skopek2019Mixedcurvature}.

In this work, we argue that this approach of using non-Gaussian priors can be a problematic, and even ineffective, mechanism for adding \emph{inductive biases} into~\glspl{VAE}.
Firstly, non-Gaussian priors will often necessitate complex encoder models to maintain consistency with the prior's shape and dependency structure~\citep{webb2018faithful}, which typically no longer permit simple parameterization.
Secondly, the latent encodings are still not guaranteed to follow the desired structure because the `prior' only appears in the training objective as a regularizer on the encoder.
Indeed,~\cite{mathieu2019disentangling} find that changing the prior is typically insufficient in practice to learn the desired representations at a \emph{population level}, with mismatches occurring between the data distribution and learned model.

To provide an alternative, more effective, approach that does not suffer from these pathologies, 
we introduce \emph{Intermediary Latent Space VAEs} (\methods), an extension to the standard VAE framework that allows a wide range of powerful inductive biases to be incorporated while maintaining an isotropic Gaussian prior.
This is achieved by introducing an \emph{intermediary} set of latent variables that deal with the stochasticity of the encoding process \emph{before} incorporating the desired inductive biases via a parametric function that maps these intermediary latents to the latent representation itself, with the decoder taking this final representation as input.  See~\cref{fig:model} for an example.

\begin{figure}[t]
 \centering
 \includegraphics[width=0.77\textwidth]{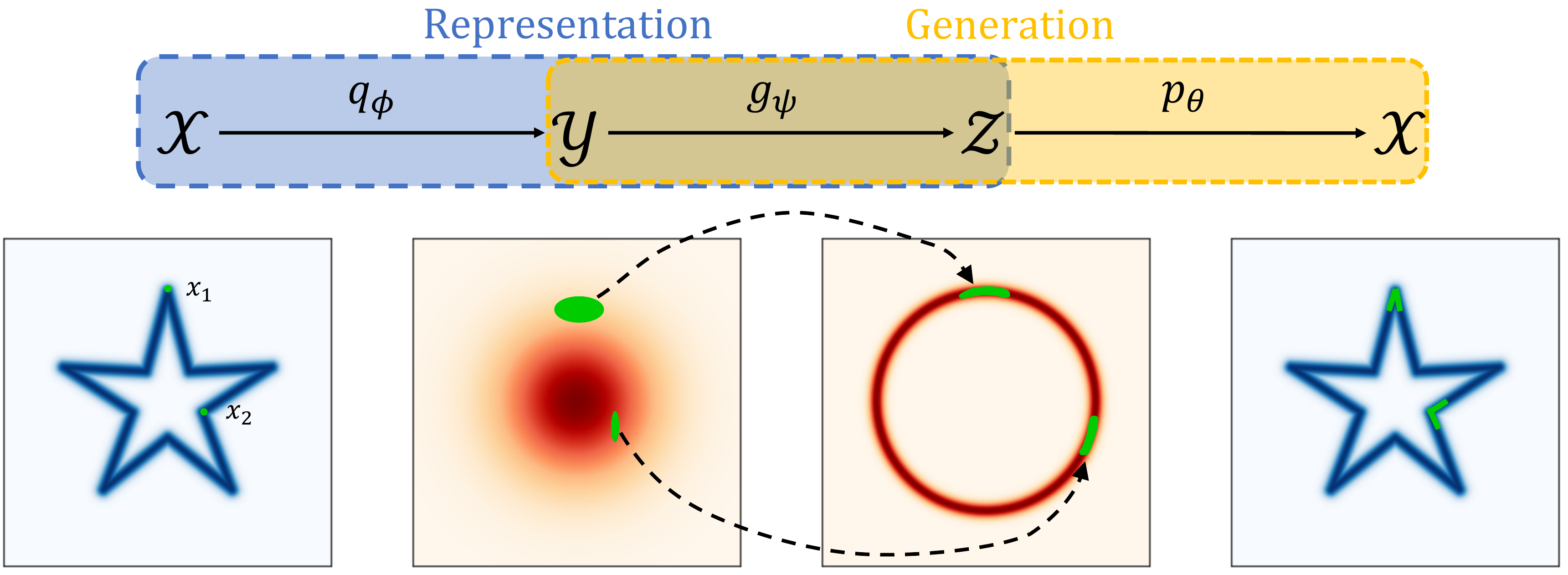}
 \vspace{-4pt}
 \caption{Example \method with star-like data. 
  We consider the auto-encoding for two example datapoints ($x_1$ and $x_2$, shown in green), which are first stochastically mapped to $\Y$ using a Gaussian encoder.
 This embedding is then pushed forward to $\Z$ using the \emph{non-stochastic} mapping $g_{\psi}$, which is a radial mapping to enforce a spherical distribution.
 Decoding is then done in the standard way from $\Z$, with the complexity of the decoder mapping simplified by the induced structural properties of $\Z$.
 \vspace{-12pt}
 }
 \label{fig:model}
\end{figure}

The \method framework provides a variety of advantages over directly replacing the prior.
Firstly, it directly enforces our inductive biases on the representations, rather than relying on the regularizing effect of the prior to encourage this implicitly.
Secondly, it provides a natural congruence between the generative and representational models via sharing of the mapping function, side-stepping the issues that non-Gaussian priors can cause for the inference model.
Finally, it allows for more general and more flexible inductive biases to be incorporated, by removing the need to express them with an explicit density function and allowing for parts of the mapping to be learned during training.

To further introduce a number of novel specific realizations of the \method framework, showing how they can be used to incorporate various inductive biases, enforcing latent representations that are, for example, multiply connected, multi-modal, sparse, or hierarchical. Experimental results show their superiority compared with baseline methods in both generation and feature quality, most notably providing state-of-the-art performance for learning sparse representations in the \gls{VAE} framework.

To summarize, we 
a) highlight the need for inductive biases in \glspl{VAE} and explain why directly changing the prior is a suboptimal means for incorporating them;
b) propose \methods as a simple but effective general framework to introduce inductive biases; and
c) introduce specific \method variants which can learn improved generative models and representations over existing baselines on a number of tasks.
Accompanying code is provided at {\scriptsize \url{https://github.com/NingMiao/InteL-VAE}}.

\vspace{-4pt}
\section{The Need for Inductive Biases in VAEs}
\vspace{-4pt}
\label{sec:background}
Variational auto-encoders (\glspl{VAE}) are deep stochastic auto-encoders that can be used for learning both deep generative models and low-dimensional representations of complex data.
Their key components are an encoder, $q_{\phi}(z|x)$, which probabilistically maps from data $x\in\X$ to latents $z \in \Z$; a decoder, $p_{\theta}(x|z)$, which probabilistically maps from latents to data; and a prior, $p(z)$, that completes the generative model, $p(z)p_{\theta}(x|z)$, and regularizes the encoder during training.
The encoder and decoder are parameterized by deep neural networks and are simultaneously trained using a dataset $\{x_1, x_2,..., x_N\}$ and a variational lower bound on the log-likelihood, most commonly,
\begin{align}
\label{eq:ELBO}
    \ELBO(x,\theta, \phi):=\E_{z\sim q_{\phi}(z|x)} \left[\log p_{\theta}(x|z)\right]- \KL{q_{\phi}(z|x)}{p(z)}.
\end{align}
Namely, we optimize $\ELBO(\theta, \phi) := \E_{x\sim \pdata}\left[\ELBO(x,\theta, \phi)\right]$, where $\pdata$ represents the empirical data distribution.
Here the prior is typically fixed to a standard Gaussian, i.e.~$p(z)=\mathcal{N}(z;0,I)$.

While it is well documented that this standard \gls{VAE} setup with a `Gaussian' latent space can be suboptimal~\citep{davidson2018Hyperspherical,mathieu2019disentangling,tomczak2018vae,bauer2019resampled,tonolini2020variational}, there is perhaps less of a unified high-level view on exactly when, why, and how one should change it to incorporate inductive biases.
Note here that the prior does not play the same role as in a Bayesian model: because the latents themselves are somewhat arbitrary and the model is learned from data, it does not encapsulate our initial beliefs in the way one might expect.

We argue that there are two core reasons why inductive biases can be important for \glspl{VAE}: (a) standard \glspl{VAE} can fail to encourage, and even prohibit, desired structure in the \emph{representations} we learn; and (b) standard \glspl{VAE} do not allow one to impart prior information or desired topological characteristic into the \emph{generative model}.

Considering the former, one often has some a priori desired characteristics, or constraints, on the representations learned~\citep{bengio2013representation}.
For example, sparse features can be desirable because they can improve data efficiency~\citep{yip1997sparse}, and provide robustness to noise~\citep{4483511,ahmad2019can} and attacks~\citep{gopalakrishnan2018combating}.
In other settings one might desire clustered~\citep{jiang2017variational}, disentangled~\citep{ansari2019hyperprior,pmlr-v80-kim18b,higgins2018towards} or hierarchical representations~\citep{song2013wavelbp,sonderby2016ladder,zhao2017learning}.
The KL-divergence term in~\cref{eq:ELBO} regularizes the encoding distribution towards the prior and, as a standard Gaussian distribution typically does not exhibit our desired characteristics, this regularization can significantly hinder our ability to learn representations with the desired properties.

Not only can this be problematic at an individual sample level, it can cause even more pronounced issues at the \emph{population level}: desired structural characteristics of our representations often relate to the pushforward distribution of the data in the latent space, $q_{\phi}(z):=\E_{\pdata}[q_{\phi}(z|x)]$, which is both difficult to control and only implicitly regularized to the prior~\citep{hoffman2016elbo}.

\begin{wrapfigure}[10]{r}{0.41\textwidth}
\vspace{-18pt}
 \centering
 \subfloat[Data]{
 \label{fig:motivation_MoG_data}
 \centering
 \includegraphics[width=0.4\textwidth]{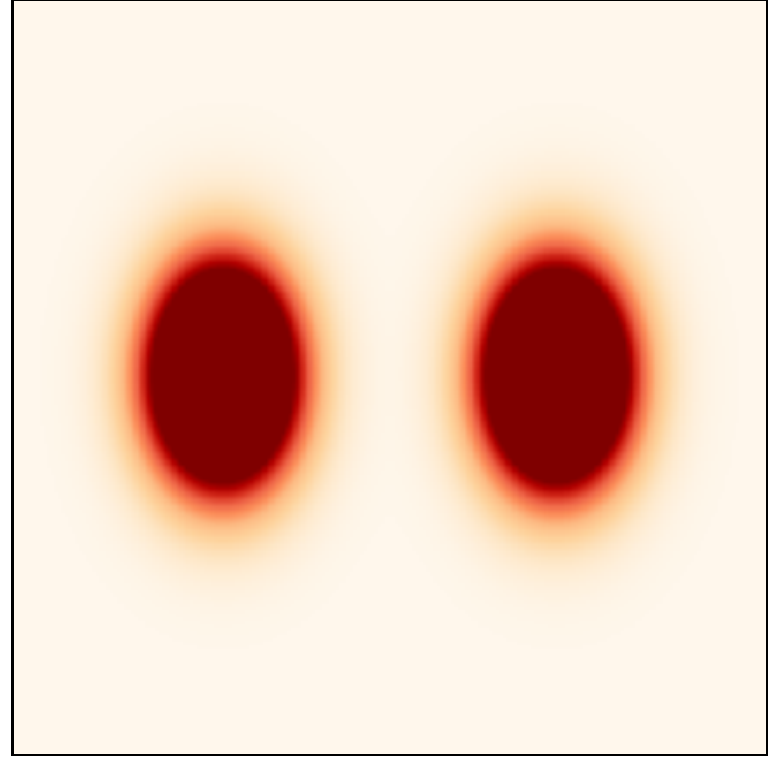}
 }
 \subfloat[\gls{VAE}]{
 \label{fig:motivation_MoG_VAE}
 \centering
 \includegraphics[width=0.4\textwidth]{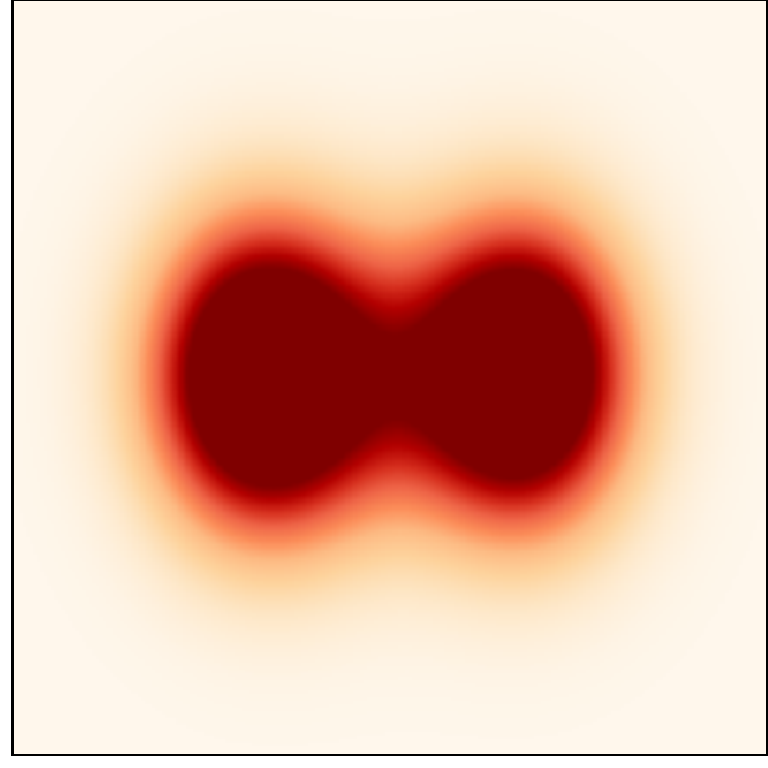}
 }
 \vspace{-8pt}
 \caption{VAE learned generative distribution $\mathbb{E}_{p(z)}[p_{\theta}(x|z)]$ for mixture data.}
 \label{fig:motivation_MoG}
\end{wrapfigure}
Inductive biases can also be essential to the generation quality of \glspl{VAE}: because the generation process of standard \glspl{VAE} is essentially pushing-forward the Gaussian prior on $\Z$ to data space $\X$ by a `smooth' decoder, there is an underlying inductive bias that standard \glspl{VAE} prefer sample distributions with similar topology structures to Gaussians.
As a result, \glspl{VAE} can perform poorly when the data manifold exhibits certain different topological properties~\citep{caterini2020variational}.
For example, they can struggle when data is clustered into unconnected components as shown in~\cref{fig:motivation_MoG}, or when data is not simply-connected.
This renders learning effective mappings using finite datasets and conventional architectures (potentially prohibitively) difficult.
In particular, it can necessitate large Lipschitz constants in the decoder, causing knock-on issues like unstable training and brittle models~\citep{scaman2018lipschitz}, as well as posterior collapse~\citep{van2017neural, alemi2018fixing}.
In short, the Gaussian prior of a standard \gls{VAE} can induce fundamental topological differences to the true data distribution~\citep{falorsi2018Explorations, shi2020dispersed}.

\vspace{-1pt}
\section{Shortfalls of VAEs with non-Gaussian Priors}
\label{sec:motivations}
\vspace{-1pt}
\begin{wrapfigure}[17]{r}{0.51\textwidth}
 \vspace{-21pt}
 \centering
 \subfloat[Directly replacing $p(z)$]{
 \label{fig:encoder_result}
 \centering
 \includegraphics[width=0.45\textwidth]{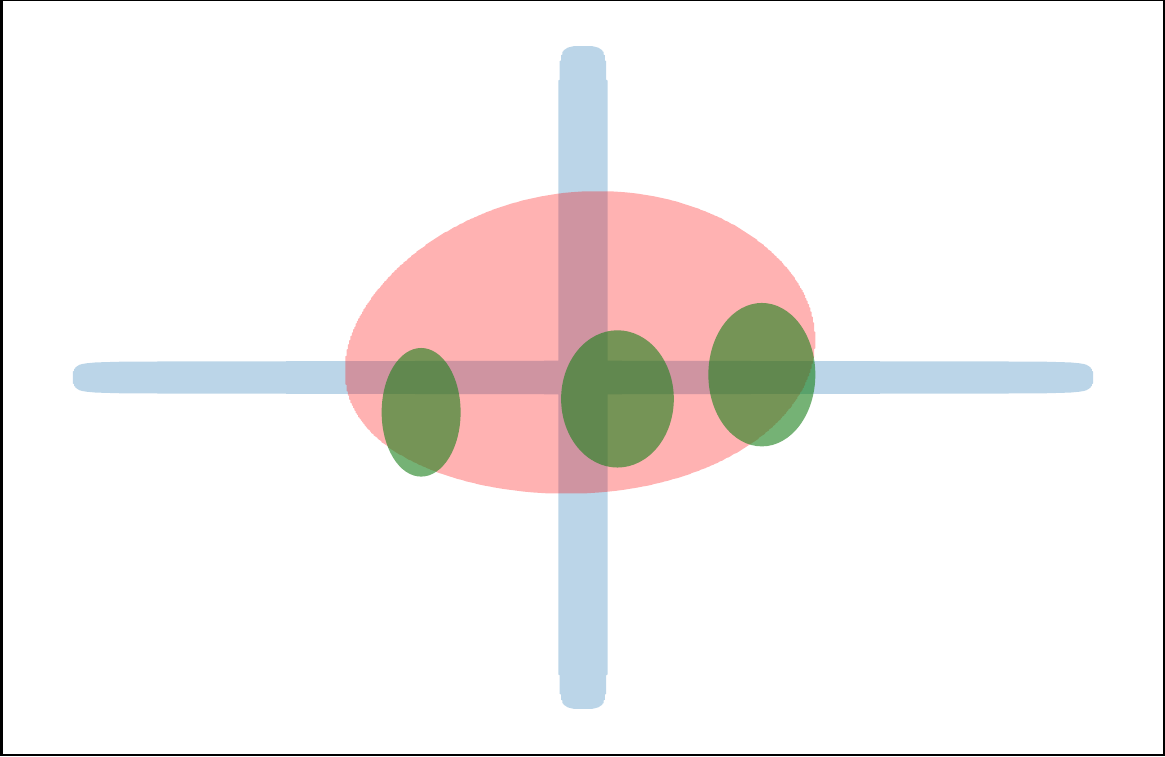}
 }
 \subfloat[\method]{
 \label{fig:encoder_result_ours}
 \centering
 \includegraphics[width=0.45\textwidth]{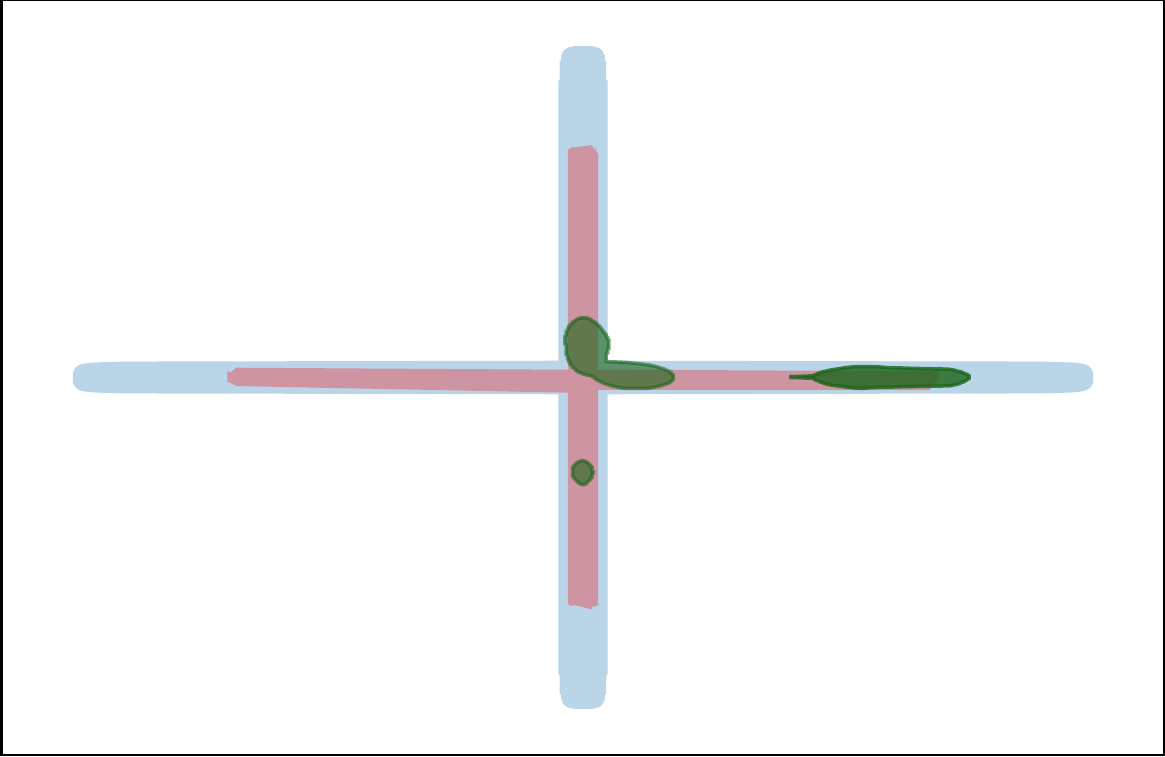}
 }
 \vspace{-6pt}
 \caption{Prior-encoder mismatch. We train (a) a \gls{VAE} with a sparse prior and (b) an \method with a sparse inductive bias on 2 dimensional sparse data.
 Figure shows target latent distribution $p(z)$ (blue), learned
 variational embeddings $q_{\phi}(z|x)$ of exemplar data (green), and data pushforward $q_{\phi}(z)$ (red shadow) for each method. Simply replacing the prior does not help the VAE match prior structure on either a per-sample or population level, whereas InteL-VAE produces an effective match.}
 \label{fig:encoder}
\end{wrapfigure}
Though directly replacing the Gaussian prior with a different prior sounds like a simple solution, effectively introducing inductive biases can, unfortunately, be more complicated.

Firstly, the only influence of the prior during training is as a regularizer on the encoder through the $\KL{q_{\phi}(z|x)}{p(z)}$ term.
This regularization is always competing with the need for effective reconstructions and only has an indirect influence on $q_{\phi}(z)$.
As such, simply replacing the prior can be an ineffective way of inducing desired structure at the population level~\citep{mathieu2019disentangling}, particularly if $p(z)$ is a complex distribution that it is difficult to fit (see, e.g.,~\cref{fig:encoder_result}).
Mismatches between $q_{\phi}(z)$ and $p(z)$ can also have further deleterious effects on the learned generative model: the former represents the distribution of the data in latent space during training, while the latter is what is used by the learned generative model, leading to unrepresentative generations if there is mismatch.

Secondly, it can be extremely difficult to construct appropriate encoder mappings and distributions for non-Gaussian priors.
While the typical choice of a mean-field Gaussian for the encoder distribution is simple, easy to train, and often effective for Gaussian priors, it is often inappropriate for other choices of prior.
For example, in~\cref{fig:encoder}, we consider replacement with a sparse prior.
A VAE with a Gaussian encoder struggles to encode points in a manner that even remotely matches the prior.
One might suggest replacing the encoder distribution as well, but this has its own issues, most notably that other distributions can be hard to effectively parameterize or train.
In particular, the form of the required encoding noise might become heavily spatially variant; in our sparse example, the noise must be elongated in a particular direction depending on where the mean embedding is.
If the prior has constraints or topological properties distinct from the data, it can even be difficult to learn a mean encoder mapping that respects these, due to the continuous nature of neural networks.

\vspace{-1pt}
\section{The \method Framework}
\vspace{-1pt}
\label{sec:method}

To solve the issues highlighted in the previous section, and provide a principled and effective method for adding inductive biases to \glspl{VAE}, we propose \emph{Intermediary Latent Space VAEs} (\methods).
The key idea behind \methods is to introduce an \emph{intermediary} set of latent variables $y\in\Y$, used as a stepping stone in the construction of the \emph{representation} $z\in\Z$.
Data is initially encoded in $\Y$ using a conventional \gls{VAE} encoder (e.g.~a mean-field Gaussian) before being passed through a \emph{non-stochastic} mapping $g_{\psi}: \Y\mapsto \Z$ that incorporates our desired inductive biases and which can be trained, if needed, through its parameters $\psi$.
The prior is defined on $\Y$ and taken to be a standard Gaussian, $p(y)=\mathcal{N}(y;0,I)$,
while our representations, $z=g_{\psi}(y)$, correspond to a pushforward of $y$.
By first encoding datapoints to $y$, rather than $z$ directly, 
we can deal with all the encoder and prior stochasticity in this first, well-behaved, latent space, while maintaining $z$ as our representation and using it for the decoder $p_{\theta}(x|z)$.
In principle, $g_{\psi}$ can be any arbitrary parametric (or fixed) mapping, including non-differentiable or even discontinuous functions. 
However, to allow for reparameterized gradient estimators~\citep{kingma2013auto,rezende2015variational}, we will restrict ourselves to $g_{\psi}$ that are sub-differentiable (and thus continuous) with respect to both their inputs and parameters.
Note that setting $g_{\psi}$ to the identity mapping recovers a conventional \gls{VAE}.

\vspace{-4pt}
As shown in \cref{fig:model}, the auto-encoding process is now $\X\xrightarrow{q_{\phi}} \Y\xrightarrow{g_{\psi}} \Z\xrightarrow{p_{\theta}} \X$.
This three-step process no longer unambiguously fits into the encoder-decoder terminology of the standard VAE and permits a variety of interpretations; for now we take the convention of calling $q_{\phi}(y|x)$ the encoder and $p_{\theta}(x|z)$ the decoder, but also discuss some alternative interpretations below.
We emphasize here that these no longer respectively match up with our representation model---which corresponds to passing an input into the encoder and then mapping the resulting encoding using $g_{\psi}$---and our generative model---which corresponds to $\mathcal{N}(y;0,I)p_{\theta}(x|z=g_{\psi}(y))$, such that we sample a $y$ from the prior and then pass this through through $g_{\psi}$ and the decoder in turn.

The mapping $g_{\psi}$ introduces inductive biases into \emph{both} the generative model and our representations by imposing a particular form on $z$, such as the spherical structure enforced in \cref{fig:model} (see also \cref{sec:examples_and_experiments}).
It can be viewed as a \emph{shared module} between them, ensuring congruence between the two.
This congruence allows us to more directly introduce inductive biases through careful construction of $g_{\psi}$, without complicating the process of learning an effective inference network.
In particular, because $\Y$ is treated as our latent space for the purposes of training, we sidestep the inference issues that non-Gaussian priors usually cause.
Moreover, because all samples must explicitly pass through $g_{\psi}$ during both training and generation, we can more directly ensure the desired structure is enforced without causing a mismatch in the latent distribution between training and deployment.


\textbf{Training}~~As with standard~\glspl{VAE}, training of an~\method is done by maximizing a variational lower bound (ELBO) on the log evidence, which we denote $\ELBO_{\Y}$.  Most simply, we have
\begin{align}
\begin{split}
    \log p_{\theta,\psi}(x) :=&\,
    \log \left(\E_{p(y)} \left[p_{\theta}(x|g_{\psi}(y))\right]\right) = \log \left(\E_{q_{\phi}(y|x)} \left[ \frac{p_{\theta}(x|g_{\psi}(y))\mathcal{N}(y;0,I)}{q_{\phi}(y|x)}\right]\right) \\
    \geq& \E_{q_{\phi}(y|x)} [\log p_{\theta}(x|g_{\psi}(y))] 
        - \KL{q_{\phi}(y|x)}{\mathcal{N}(y; 0, I)} 
        =: \ELBO_{\Y}(x,\theta,\phi,\psi).
\end{split}
\end{align}
Note that the regularization is on $y$, but our representation corresponds to $z = g_{\psi}(y)$.
Training corresponds to the optimization $\argmax_{\theta,\phi,\psi} \E_{x\sim \pdata} \left[\ELBO_{\Y}(x,\theta,\phi,\psi)\right]$, which can be performed using stochastic gradient ascent with reparameterized gradients in the standard manner.
Although inductive biases are introduced, the calculation, and optimization, of $\ELBO _{\Y}$ is thus equivalent to the standard ELBO. 
In particular, parameterizing $q_{\phi}(y|x)$ with a Gaussian distribution still yields an analytical  $\KL{q_{\phi}(y|x)}{\mathcal{N}(y; 0, I)}$ term.

\textbf{Alternative Interpretations}~~ It is interesting to note that our representation, $g_{\psi}(y)$, only appears in the context of the decoder in this training objective.
As such, we see that an important alternative interpretation of \methods is to consider $g_{\psi}$ as being a customized first layer in the decoder, and our test--time representations as partial decodings of the latents $y$.
This viewpoint allows it to be applied with more general bounds and VAE variants (e.g.~\citet{burda2016importance,le2018auto,maddison2017filtering,naesseth2018variational,zhao2019infovae}), as it requires only a carefully customized decoder architecture during training and an adjusted mechanism for constructing representations at test--time.


Yet another interpretation is to think about \methods as implicitly defining a conventional \gls{VAE} with latents $z$, but where both the non-Gaussian prior, $p_{\psi}(z)$, and our encoder distribution, $q_{\phi,\psi}(z|x)$, are themselves defined implicitly as pushforwards along $g_{\psi}$, which acts 
as a shared module that instills a natural compatibility between the two.
Formally we have the following theorem.
%
\begin{restatable}{theorem}{maintheorem}
Let $p_{\psi}(z)$ and $q_{\phi,\psi}(z|x)$ represent the respective pushforward distributions of $\mathcal{N}(0,I)$ and $q_{\phi}(y|x)$ induced by the mapping $g_{\psi} : \Y \mapsto \Z$. 
The following holds for all measurable $g_{\psi}$:
\begin{align}
\label{eq:bound_on_KL}
    \KL{q_{\phi,\psi}(z|x)}{p_{\psi}(z)} \le \KL{q_{\phi}(y|x)}{\mathcal{N}(y; 0, I)}.
\end{align}
If $g_\psi$ is also an invertible function then the above becomes an equality and $\ELBO _{\Y}$ equals the standard ELBO on the space of $\Z$ as follows
\begin{align}
\label{eq:elbo_equivalence}
    \ELBO_{\Y}(x,\theta,\phi,\psi) = 
    \E_{q_{\phi,\psi}(z|x)} [\log p_{\theta}(x|z)]-\KL{q_{\phi,\psi}(z|x)}{p_{\psi}(z)}.
\end{align}
\end{restatable}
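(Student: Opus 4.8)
The plan is to recognize \cref{eq:bound_on_KL} as an instance of the data-processing inequality for relative entropy: pushing two distributions through a common measurable map cannot increase their KL divergence. Rather than cite this as a black box, I would derive it from the chain rule for relative entropy, so the argument is self-contained. Write $P = \mathcal{N}(0,I)$ and $Q_x = q_\phi(y|x)$ for the two base measures on $\Y$, and abbreviate $g = g_\psi$. If the right-hand side of \cref{eq:bound_on_KL} is $+\infty$ there is nothing to prove, so assume it is finite; then $Q_x \ll P$, and since pushforwards preserve absolute continuity we get $q_{\phi,\psi}(z|x) = g_* Q_x \ll g_* P = p_\psi(z)$, so the left-hand side is well defined and finite as well.

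For the core step, I would consider the joint laws $\bar Q_x$ and $\bar P$ on $\Y\times\Z$ obtained by drawing $y$ from $Q_x$ (resp. $P$) and setting $z = g(y)$ deterministically. Their $\Y$-marginals are $Q_x$ and $P$, and their $\Z$-marginals are exactly the pushforwards $q_{\phi,\psi}(z|x)$ and $p_\psi(z)$. Applying the chain rule by conditioning on $y$ gives $\KL{\bar Q_x}{\bar P} = \KL{Q_x}{P}$, since the conditional law of $z$ given $y$ is the same Dirac mass $\delta_{g(y)}$ under both, so the conditional-divergence term vanishes. Applying the chain rule the other way, by conditioning on $z$, gives $\KL{\bar Q_x}{\bar P} = \KL{q_{\phi,\psi}(z|x)}{p_\psi(z)} + \E_{z\sim q_{\phi,\psi}(z|x)}\!\big[\KL{\bar Q_x(\cdot\mid z)}{\bar P(\cdot\mid z)}\big] \ge \KL{q_{\phi,\psi}(z|x)}{p_\psi(z)}$, because the second term is a nonnegative average of KL divergences. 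Combining the two relations yields \cref{eq:bound_on_KL}.

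For the equality claim, I would observe that if $g_\psi$ is invertible — more precisely, a bimeasurable bijection — then $g_\psi^{-1}$ is itself a measurable map with $q_\phi(y|x) = (g_\psi^{-1})_* q_{\phi,\psi}(z|x)$ and $\mathcal{N}(0,I) = (g_\psi^{-1})_* p_\psi(z)$. Applying the inequality just proven to the map $g_\psi^{-1}$ gives $\KL{q_\phi(y|x)}{\mathcal{N}(0,I)} \le \KL{q_{\phi,\psi}(z|x)}{p_\psi(z)}$, which together with \cref{eq:bound_on_KL} forces equality. For the ELBO identity \cref{eq:elbo_equivalence}, the reconstruction term transforms by the law of the unconscious statistician: since $z = g_\psi(y)$ and $y\sim q_\phi(y|x)$ pushes forward to $z\sim q_{\phi,\psi}(z|x)$, one has $\E_{q_\phi(y|x)}[\log p_\theta(x|g_\psi(y))] = \E_{q_{\phi,\psi}(z|x)}[\log p_\theta(x|z)]$ with no invertibility needed; substituting the KL equality into the definition of $\ELBO_\Y$ then gives the stated form.

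The main obstacle is measure-theoretic bookkeeping rather than any real difficulty: one must ensure the regular conditional distributions used in the chain rule exist (which holds since $\Y$ and $\Z$ are standard Borel spaces, e.g.\ Euclidean), track absolute continuity through the pushforwards so every divergence is well defined, and note that ``invertible'' must be read as bimeasurable for the reverse-direction argument to apply. None of these obstruct the proof; they simply need to be stated with the appropriate care.
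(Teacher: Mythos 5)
Your proposal is correct, but it reaches \cref{eq:bound_on_KL} by a genuinely different route from the paper. You prove the data-processing inequality via the chain rule for relative entropy applied to the joint law of $(y, g_\psi(y))$, decomposing $\KL{\bar Q_x}{\bar P}$ once by conditioning on $y$ (where the deterministic kernel makes the conditional term vanish) and once by conditioning on $z$ (where it is nonnegative). The paper instead works directly with Radon--Nikodym derivatives: it establishes that $\E_{p}\left[q_\phi/p \mid \sigma(g_\psi)\right]$ coincides with the density ratio of the pushforwards composed with $g_\psi$, and then applies the conditional Jensen inequality to $f(x)=x\log x$. Each approach buys something: yours is more elementary in that it avoids explicitly constructing the conditional expectation identity, and your equality argument (apply the inequality to $g_\psi^{-1}$ and sandwich) is cleaner and makes transparent why bimeasurability is the right hypothesis; the paper's argument extends verbatim to arbitrary $f$-divergences and yields a strictly weaker sufficient condition for equality, namely that $q_\phi/p$ be $\sigma(g_\psi)$-measurable, which can hold even for non-invertible $g_\psi$. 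Your handling of the reconstruction term and of the degenerate infinite-KL and absolute-continuity cases matches the paper's, and your caveats about regular conditional distributions on standard Borel spaces are the right ones (indeed, on such spaces a measurable bijection is automatically bimeasurable by Lusin--Souslin, so the paper's bare ``invertible'' is recoverable).
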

%

The proof is given in \cref{sec:proof}.
Here,~\eqref{eq:bound_on_KL} shows that the divergence in our representation space $\Z$ is never more than that in $\Y$, or equivalently that the implied ELBO on the space of $\Z$ is always at least as tight as that on $\Y$;~\eqref{eq:elbo_equivalence} shows they are exactly equal if $g_{\psi}$ is invertible.
As the magnitude of $\KL{q_{\phi}(y|x)}{\mathcal{N}(y; 0, I)}$ in an \method will remain comparable to the KL divergence in a standard Gaussian prior \gls{VAE} setup, this, in turn, ensures that $\KL{q_{\phi,\psi}(z|x)}{p_{\psi}(z)}$ does not become overly large.
This is in stark contrast to the conventional non-Gaussian prior setup, where it can be difficult to avoid $\KL{q_{\phi}(z|x)}{p_{\psi}(z)}$ exploding without undermining reconstruction~\citep{mathieu2019disentangling}.
The intuition here is that having the stochasticity in the encoder \emph{before} it is passed through $g_{\psi}$ ensures that the form of the noise in the embedding is inherently appropriate for the space: the same mapping is used to warp this noise as to define the generative model in the first place.
For example, when $g_{\psi}$ is a sparse mapping, the Gaussian noise in $q_{\phi}(y|x)$ will be compressed to a sparse subspace by $g_{\psi}$, leading to a sparse variational posterior $q_{\phi,\psi}(z|x)$ as shown in~\cref{fig:encoder_result_ours}.
In particular, $q_{\phi}(y|x)$ does not need to learn any complex spatial variations that result from properties of $\Z$.
In turn, \methods further alleviate issues of mismatch between $p_{\psi}(z)$ and $q_{\phi,\psi}(z)$.

\textbf{Further Benefits}~~ A key benefit of \methods is that the extracted features are \emph{guaranteed} to have the desired structure. 
Take the spherical case for example, all extracted features $g_{\psi}(\mu_{\phi}(x))$ lie within a small neighborhood of the unit sphere. 
By comparison, 
methods based on training loss modifications, e.g.~\cite{mathieu2019disentangling}, often fail to generate features with the targeted properties.

A more subtle advantage is that we do not need to explicitly specify $p_{\psi}(z)$. 
This can be extremely helpful when we want to specify complex inductive biases: designing a non-stochastic mapping is typically much easier than a density function, particularly for complex spaces.
Further, this can make it much easier to parameterize and learn aspects of $p_{\psi}(z)$ in a data-driven manner (see e.g.~Sec.~\ref{sec:sparse}). 

\section{Related work}
\label{sec:related}
\vspace{-2pt}

\textbf{Inductive biases}~~ There is much prior work on introducing human knowledge to deep learning models by structural design, such as CNNs~\citep{lecun1989backpropagation}, RNNs~\citep{hochreiter1997long} and  transformers~\citep{vaswani2017attention}.
However, most of these designs are on the \emph{sample} level, utilizing low--level information such as transformation invariances or internal correlations in each sample. 
By contrast, \methods provide a convenient way to incorporate \emph{population} level knowledge---information about the global properties of data distributions can be effectively utilized. 

\textbf{Non-Gaussian priors}~~ There is an abundance of prior work utilizing non-Gaussian priors to improve the fit and generation capabilities of \glspl{VAE}, including MoG priors~\citep{dilokthanakul2016deep,shi2020dispersed}, sparse priors~\citep{mathieu2019disentangling, tonolini2020variational, barello2018SparseCoding}, Gaussian-process priors~\citep{casale2018gaussian} and autoregressive priors~\citep{razavi2019generating, van2017neural}. However, these methods often require specialized algorithms to train and are primarily applicable only to specific kinds of data.
Moreover, as we have explained, changing the prior alone often provides insufficient pressure on its own to induce the desired characteristics.
Others have proposed non-Gaussian priors to reduce the prior-posterior gap, such as Vamp-VAE~\citep{tomczak2018vae} and LARS~\citep{bauer2019resampled}, but these are tangential to our inductive bias aims. 


\textbf{Non-Euclidean latents}~~
A related line of work has focused on non-Euclidean latent spaces.
For instance \cite{davidson2018Hyperspherical} leveraged a von Mises-Fisher distribution on a hyperspherical latent space, \cite{falorsi2018Explorations} endowed the latent space with a $\text{SO}(3)$ group structure, and \cite{mathieu2019Continuous,ovinnikov2019Poincar,nagano2019Wrapped} with hyperbolic geometry.
Other spaces like product of constant curvature spaces \citep{skopek2019Mixedcurvature} and embedded manifolds \citep{rey2019Diffusion} have also been considered.
However, these works generally require careful design and training.

\textbf{Normalizing flows}~~ Our use of a non-stochastic mapping shares some interesting links to normalizing flows (NFs)~\citep{rezende2015variational, papamakarios2019normalizing,grathwohl2018FFJORD,dinh2017Density,huang2018Neural,papamakarios2018Masked}.  
Indeed a NF would be a valid choice for $g_{\psi}$, albeit an unlikely one due to their architectural constraints.
However, unlike previous use of NFs in \glspl{VAE}, our $g_{\psi}$ is crucially \emph{shared} between the generative and representational models,
rather than just being used in the encoder, while the KL divergence in our framework is taken before, not after, the mapping.
Moreover, the underlying motivation, and type of mapping typically used, differs substantially: our mapping is used to introduce inductive biases, not purely to improve inference.
Our mapping is also more general than a NF (e.g.~it need not be invertible) and does not introduce additional constraints or computational issues.

\vspace{-2pt}
\section{Specific Realizations of the \method Framework}
\vspace{-2pt}
\label{sec:examples_and_experiments}
We now present several novel example \methods, introducing various inductive biases through different choices of $g_{\psi}$. 
We will start with artificial, but surprisingly challenging, examples where some precise topological properties of the target distributions are known, incorporating them directly through a fixed $g_{\psi}$.
We will then move onto experiments where we impose a fixed clustering inductive bias when training on image data, allowing us to learn \methods that account effectively for multi-modality in the data distribution.
Finally, we consider the example of learning sparse representations of high--dimensional data.
Here we will see that it is imperative to exploit the ability of \methods to learn aspects of $g_{\psi}$ during training, providing a flexible inductive bias framework, rather than a pre-fixed mapping.
By comparing \methods with strong baselines, we show that \methods are effective in introducing these desired inductive biases, and consequently both improve generation quality and learn better data representations for downstream tasks.
One note of particular importance is that we find that \methods provide state-of-the-art performance for learning sparse \gls{VAE} representations.
A further example of using \methods to learn hierarchical representations is presented in~\cref{sec:hierarchical}, while full details on the various examples are given in~\cref{sec:mapping_detail}.


\subsection{Multiple--Connectivity}
\label{sec:mapping_multiply_connected}

\begin{wrapfigure}[20]{r}{0.45\columnwidth}
\vspace{-5em}
 \subfloat{
 \label{fig:square_real}
 \includegraphics[trim=50 50 50 50, clip, width=0.28\textwidth]{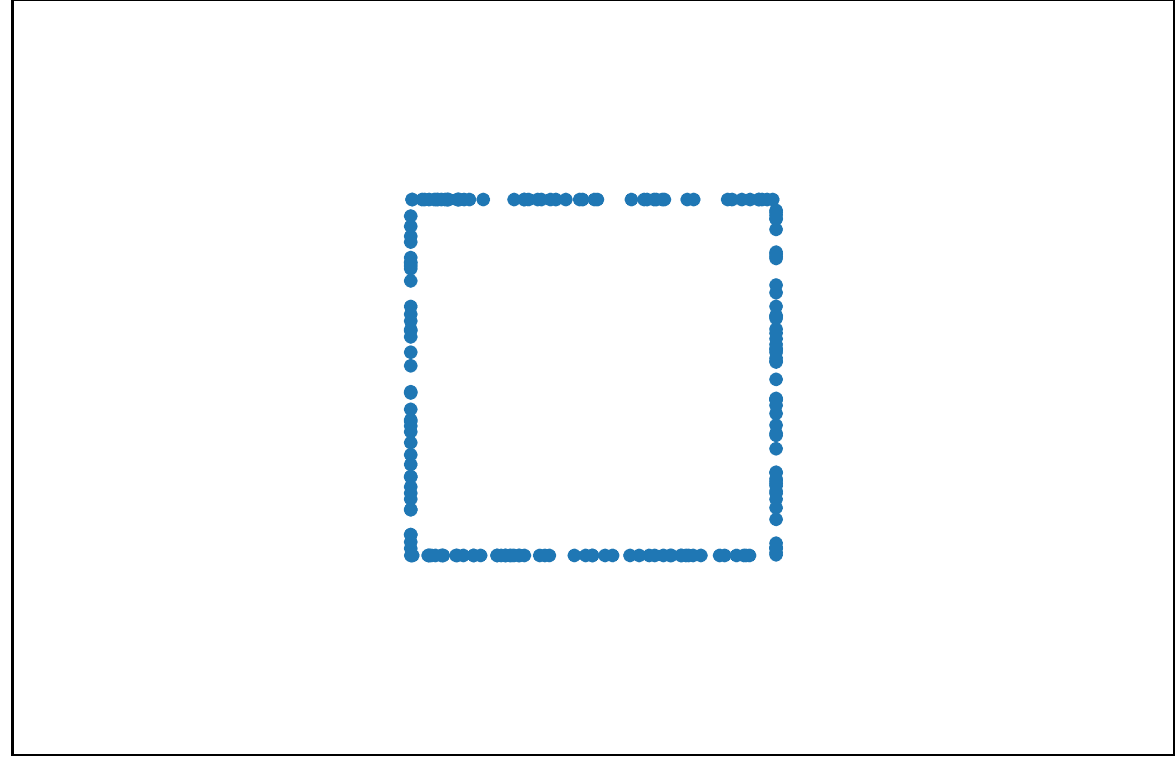}
 }
 ~
 \subfloat{
 \label{fig:square_vanilla}
 \includegraphics[trim=50 50 50 50, clip, width=0.28\textwidth]{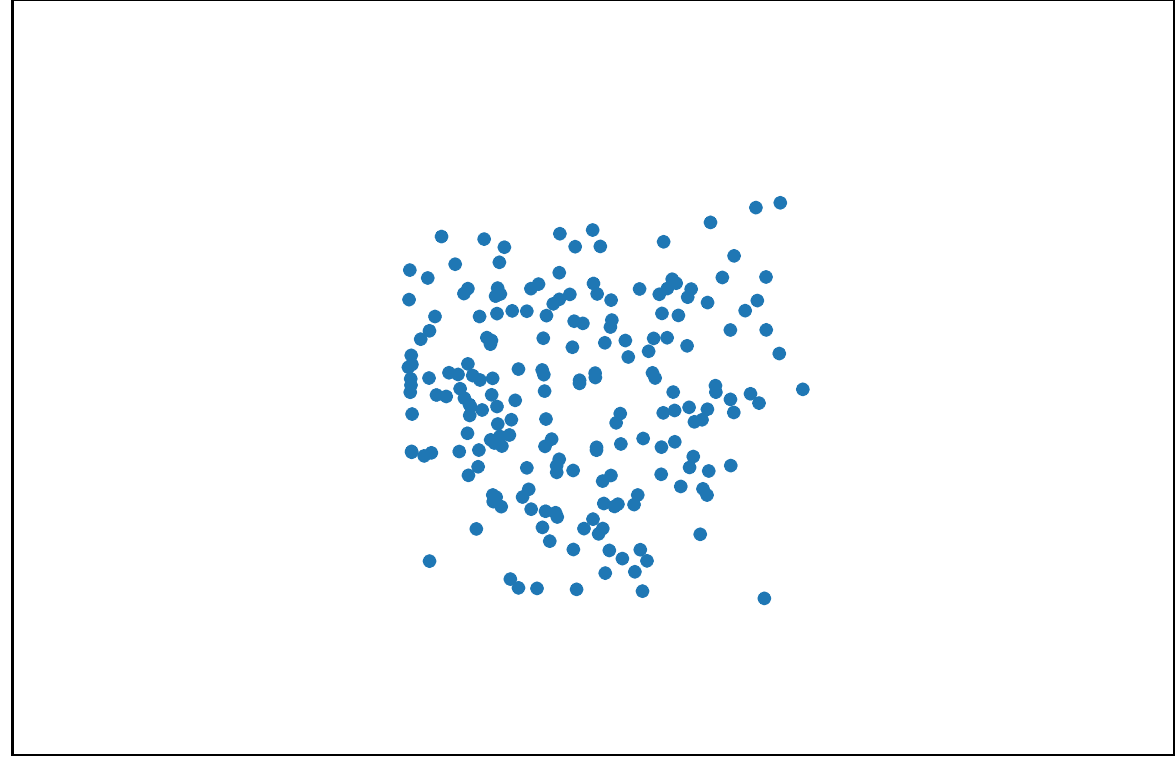}
 }
 ~
 \subfloat{
 \label{fig:square_ours}
 \includegraphics[trim=50 50 50 50, clip, width=0.28\textwidth]{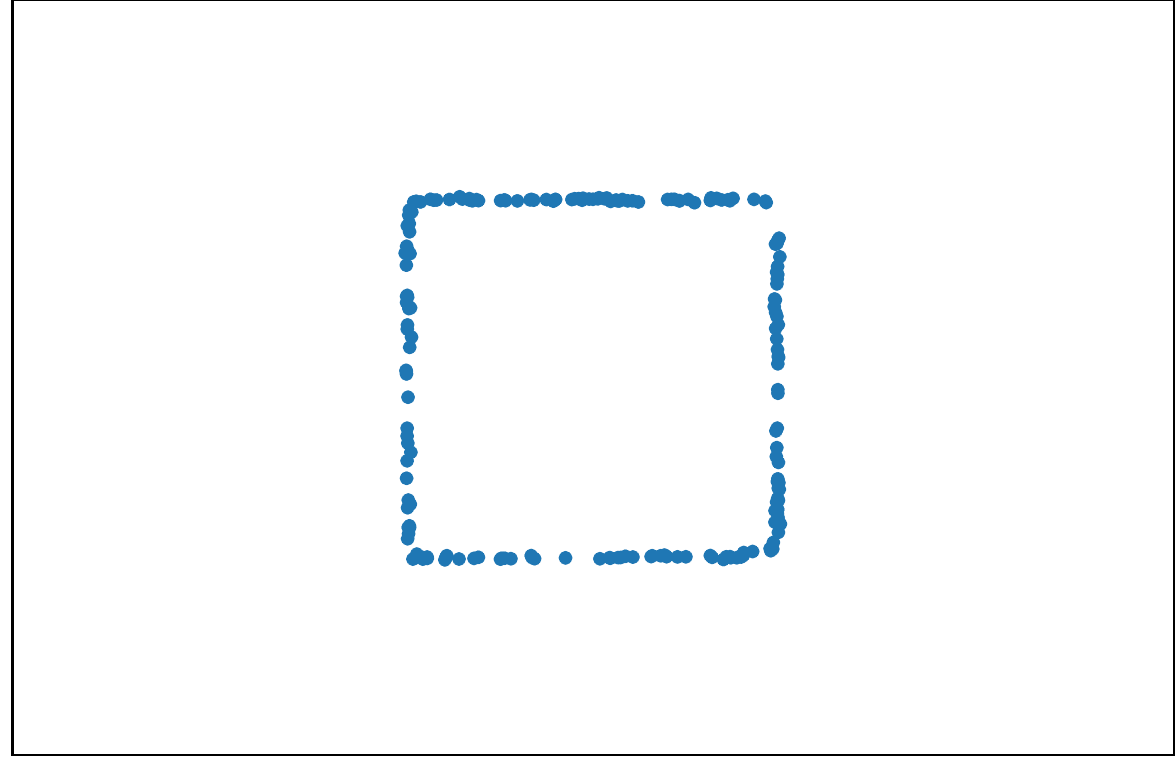}
 }
\hfill
  \subfloat{
 \label{fig:star_real}
 \includegraphics[trim=15 40 15 20, clip, width=0.28\textwidth]{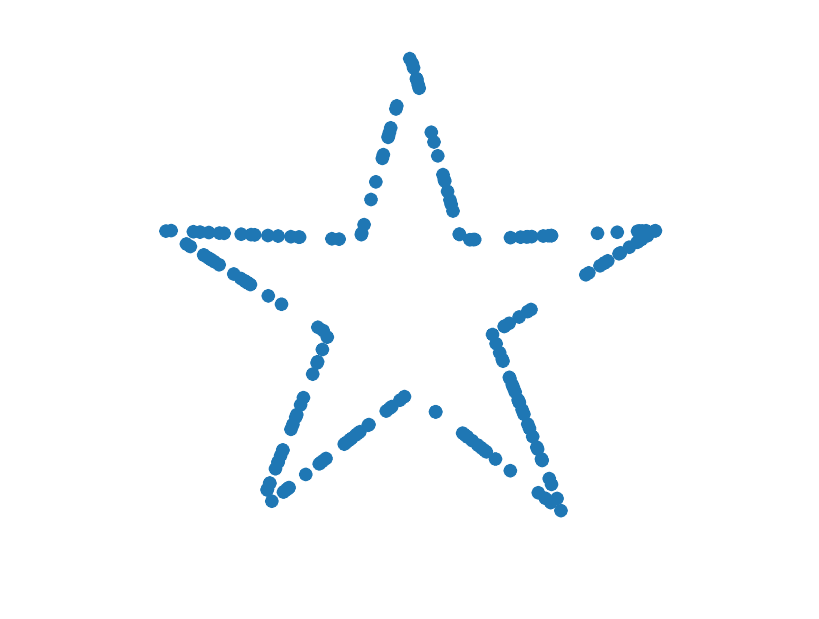}
 }
 ~
 \subfloat{
 \label{fig:star_vanilla}
 \includegraphics[trim=15 40 15 20, clip, width=0.28\textwidth]{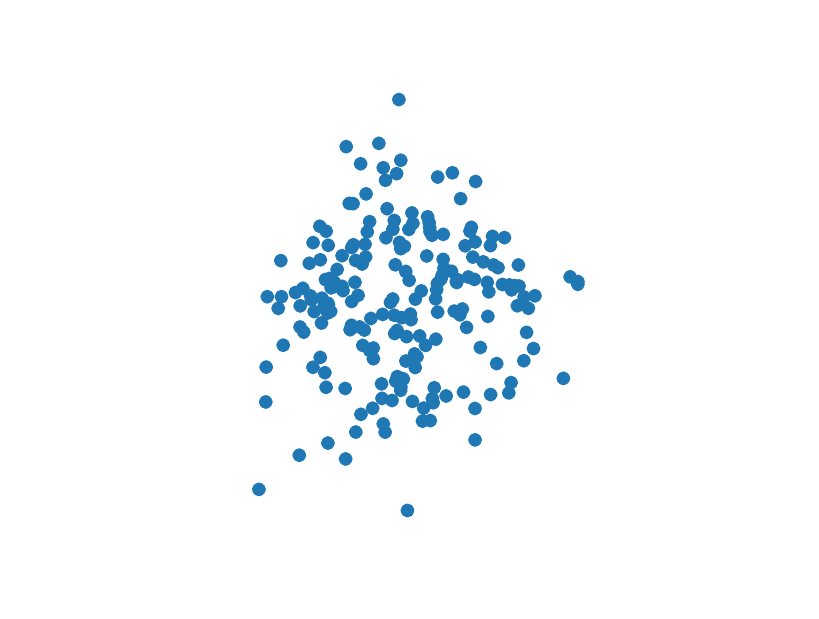}
 }
 ~
 \subfloat{
 \label{fig:star_ours}
 \includegraphics[trim=15 40 15 20, clip, width=0.28\textwidth]{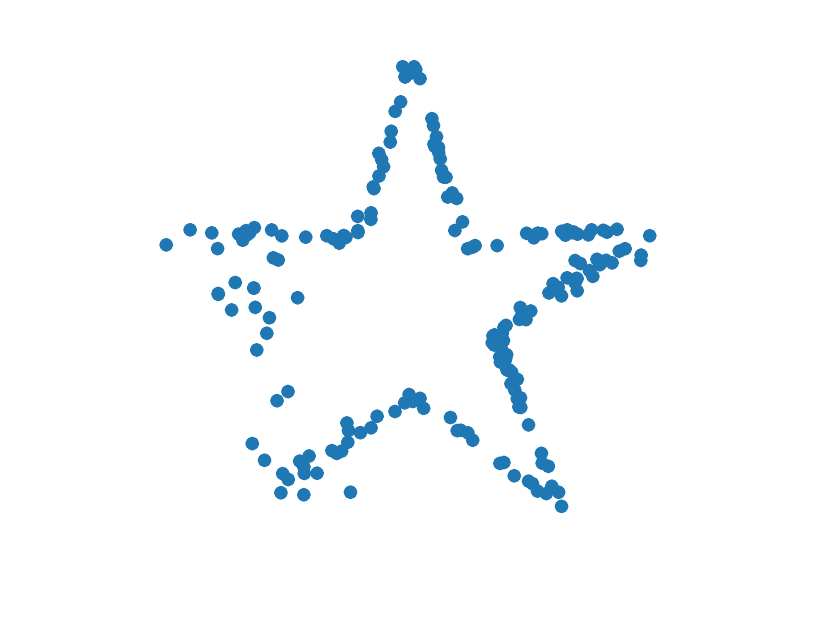}
 }
 \hfill
  \subfloat{
 \label{fig:infinity_real}
 \includegraphics[trim=15 40 15 40, clip, width=0.28\textwidth]{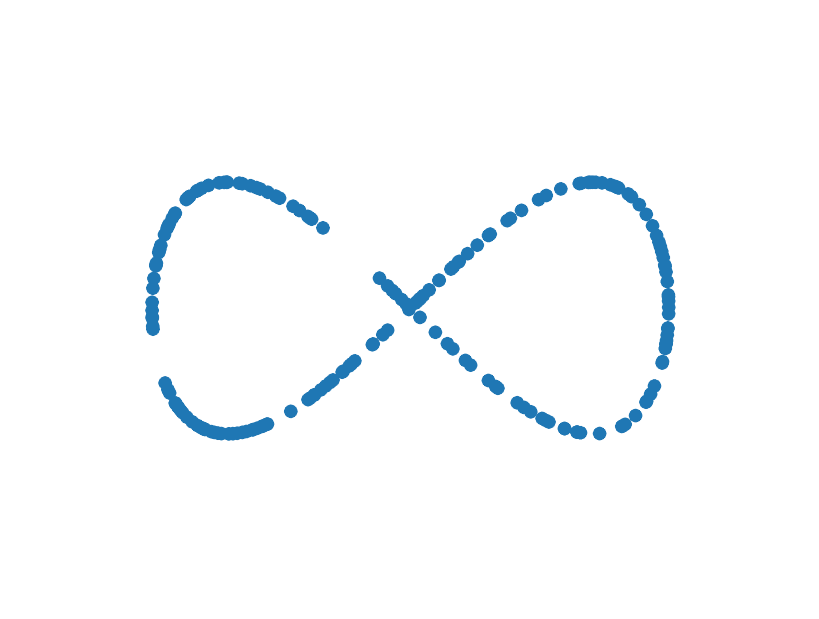}
 }
 ~
 \subfloat{
 \label{fig:infinity_vanilla}
 \includegraphics[trim=15 40 15 40, clip, width=0.28\textwidth]{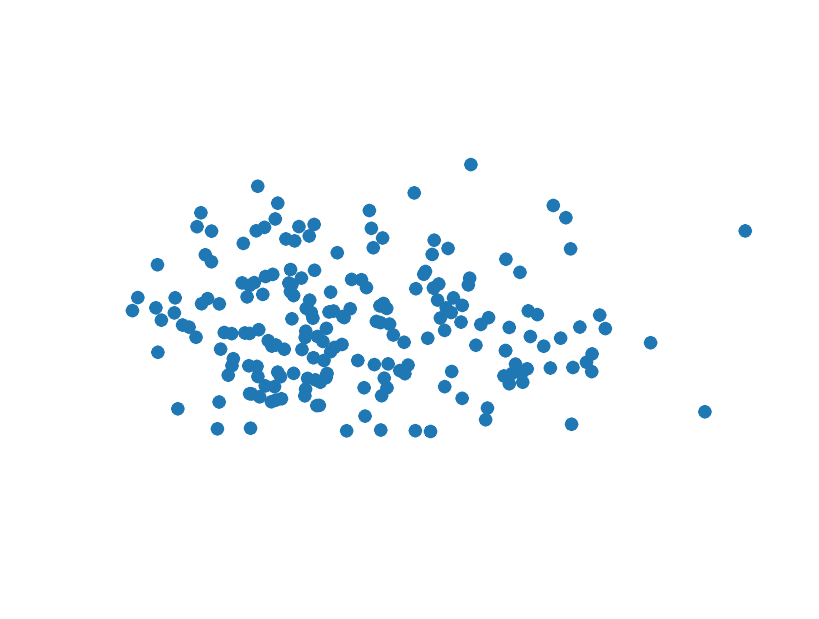}
 }
 ~
 \subfloat{
 \label{fig:infinity_ours}
 \includegraphics[trim=15 40 15 40, clip, width=0.28\textwidth]{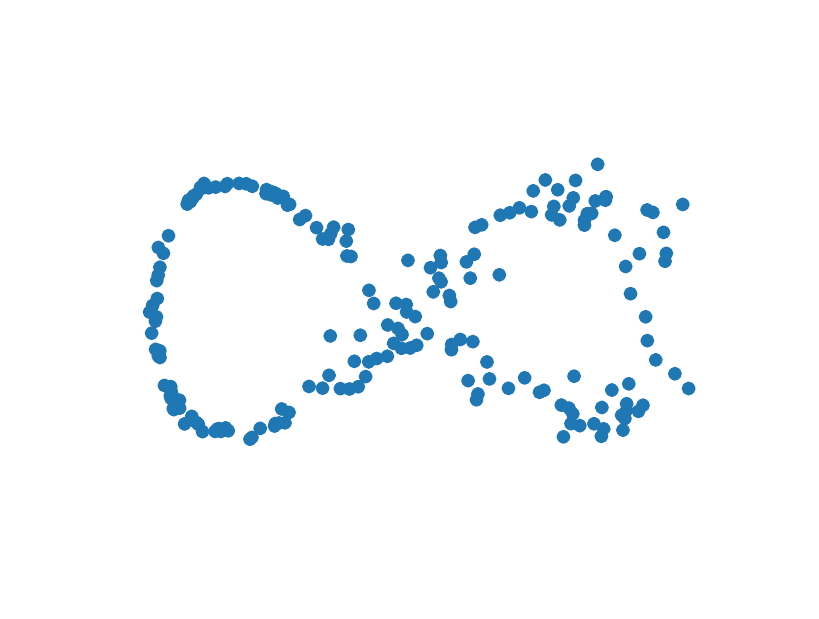}
 }
 \hfill
 \setcounter{subfigure}{0}
 \subfloat[Data]{
 \label{fig:cluster_real}
 \includegraphics[trim=15 25 15 50, clip, width=0.26\textwidth]{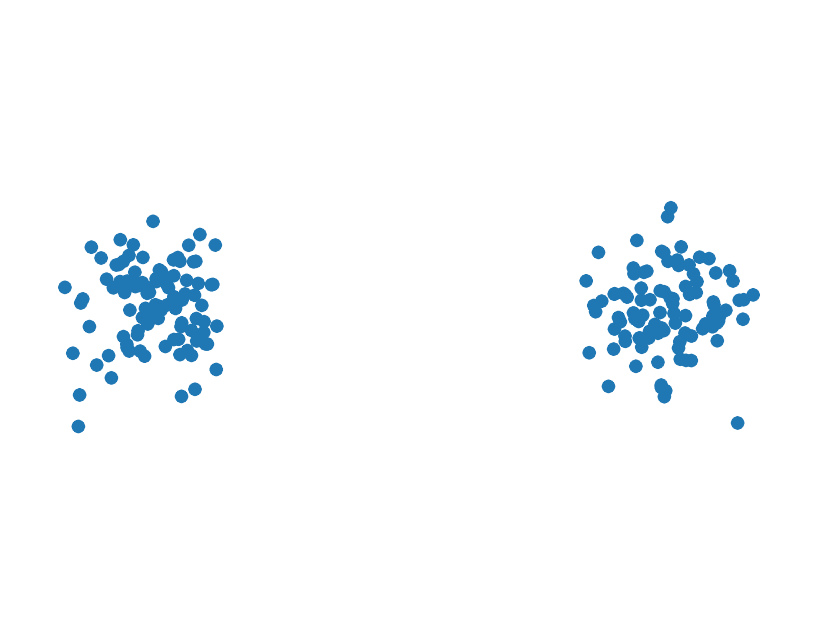}
 }
 ~
 \subfloat[VAE]{
 \label{fig:cluster_vanilla}
 \includegraphics[trim=15 25 15 50, clip, width=0.26\textwidth]{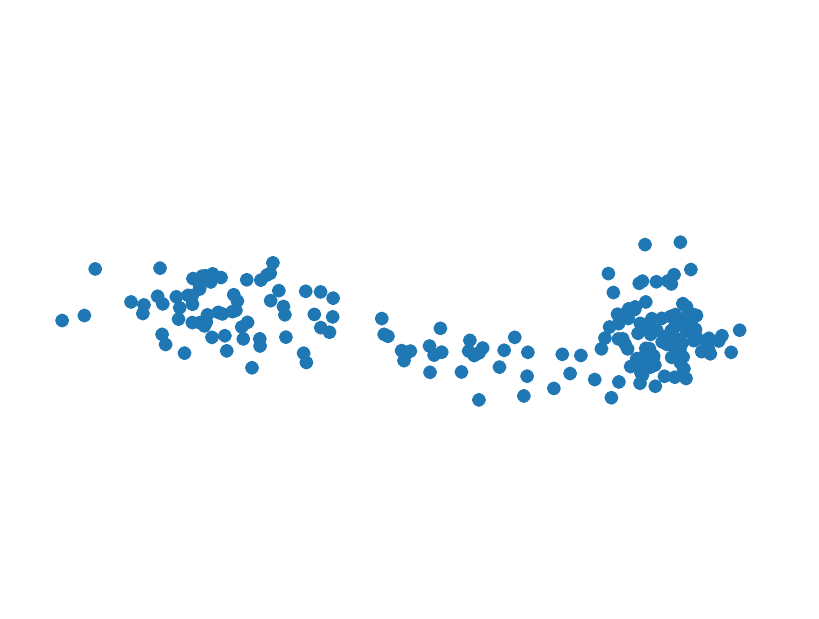}
 }
 ~
 \subfloat[\method]{
 \label{fig:cluster_ours}
 \includegraphics[trim=15 25 15 50, clip, width=0.26\textwidth]{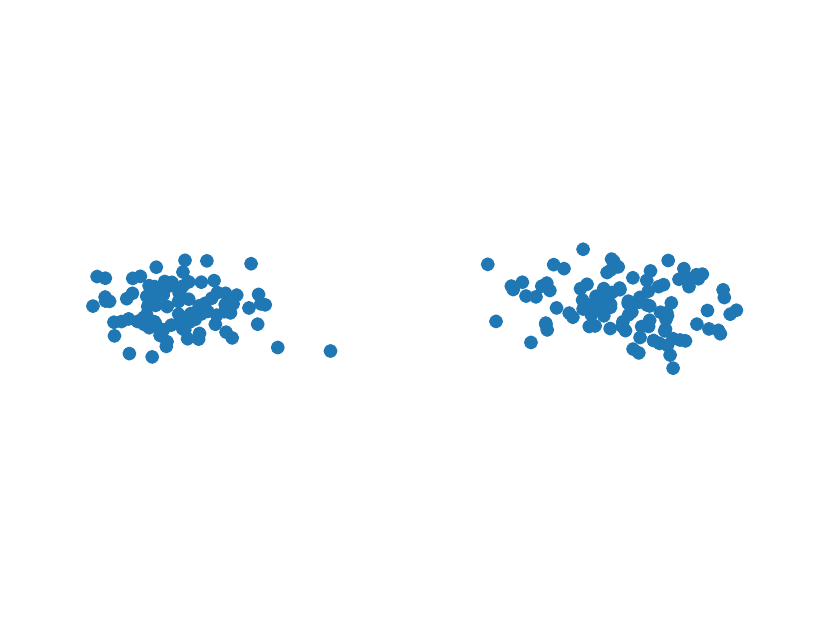}
 } 
 \vspace*{-1.3ex}
 \caption{
 Training data and samples from learned generative models of vanilla-\gls{VAE} and \method for multiply-connected and clustered distributions.
 \method uses [Rows 1,2] circular prior with one hole, [Row 3] multiply-connected prior with two holes, and [Row 4] clustered prior.
 Vamp-\gls{VAE} behaves similarly to a vanilla \gls{VAE}; its results are presented in~\cref{fig:examples_low_dim}.
  \vspace{-1.0em}
 }
 \label{fig:examples_low_dim}
\end{wrapfigure}
Data is often most naturally described on non-Euclidean spaces such as circles, e.g.\ wind directions~\citep{mardia2000Directional}, and other multiply-connected shapes, e.g.\ holes in disease databases~\citep{liu1997discovering}. 
For reasons previously explained in \cref{sec:background}, standard \glspl{VAE} cannot practically model such topologies, which prevents them from learning generative models which match even the simplest data distributions with non-trivial topological structures, as shown in \cref{fig:cluster_vanilla}.

Luckily, by designing $g_{\psi}$ to map the Gaussian prior to a simple representative distribution in a topological class, we can easily equip \methods with the knowledge to approximate any data distributions with similar topological properties.
Specifically, by defining $g_{\psi}$ as the orthogonal projection to $\mathbb{S}^1$, $g_{\psi}(z)=z/(||z||_2+\epsilon)$, we map the Gaussian prior approximately to a uniform distribution to $\mathbb{S}^1$, where $\epsilon$ is a small positive constant to ensure the continuity of $g_{\psi}$ near the origin. From Rows 1 and 2 of \cref{fig:examples_low_dim}, we find that this inductive bias gives \methods the ability to learn various distributions with a hole.
We can add further holes by simply `gluing' point pairs.
For example, for two holes we can use
\begin{align}
    g_2(y)&=\text{Concat}\left(g_{1}(y)_{[:,1]},~ g_{1}(y)_{[:,2]}\sqrt{(4/3-(1-|g_{1}(y)_{[:,1]}|)^2)}-1/\sqrt{3}\right),
\end{align}
which first map $y$ to approximately $S^1$, and then glues $(0, 1)$ and $(0, -1)$ together to create new holes (see \cref{fig:glue} for an illustration).
Furthermore, we can continue to glue points together to achieve a higher number of holes $h$, and thus more complex connectivity.
Row 3 of \cref{fig:examples_low_dim} gives an example of learning an infinity sign by introducing a `two-hole' inductive bias.

Compared with vanilla-VAE and Vamp-VAE, which try to find a convex hull for real data distributions, \methods can deal with distributions with highly non-convex and very non-smooth supports~(see \cref{fig:examples_low_dim} and \cref{sec:mapping_multiply_connected_appendix}).
We emphasize here that our inductive bias does not contain the information about the precise shape of the data, only the number of holes.
We thus see that \methods can provide substantial improvements in performance by incorporating only basic prior information about the topological properties of the data, which point out a way to approximate distributions on more complex structures, such as linear groups~\citep{gupta2018topology}.

\subsection{Multi--Modality}
\label{sec:clustered}

Many real-world datasets exhibit multi-modality.
For example, data with distinct classes are often naturally clustered into (nearly) disconnected components representing each class.
However, vanilla \glspl{VAE} generally fail to fit multi-modal data due to the topological issues explained in \cref{sec:background}.
Previous work~\citep{johnson2017Composing,mathieu2019disentangling} has thus proposed the use of a multi-modal prior, such as a mixture of Gaussian (MoG) distribution, so as to capture all components of the data.
Nonetheless, \glspl{VAE} with such priors often still struggle to model multi-modal data because of mismatch between $q_{\phi}(z)$ and $p(z)$ or training instability issues.

\begin{wrapfigure}[14]{R}{0.45\textwidth}
 \vspace{-28pt}
 \centering
 \subfloat[]{
 \label{fig:cluster_dist_a_large}
 \centering
 \includegraphics[width=0.22\textwidth]{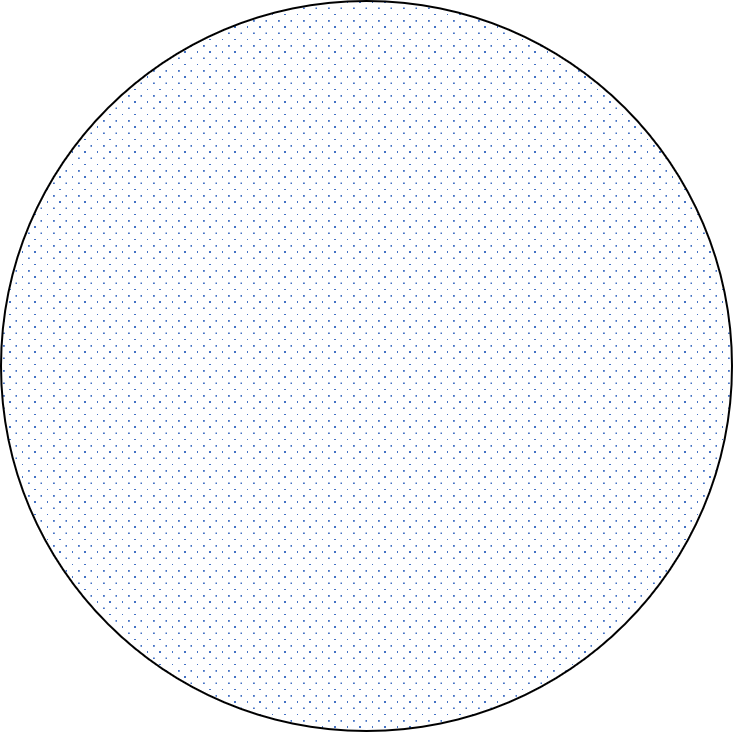}
 }
 \hspace{6pt}
 \subfloat[]{
 \label{fig:cluster_dist_b_large}
 \centering
 \includegraphics[width=0.22\textwidth]{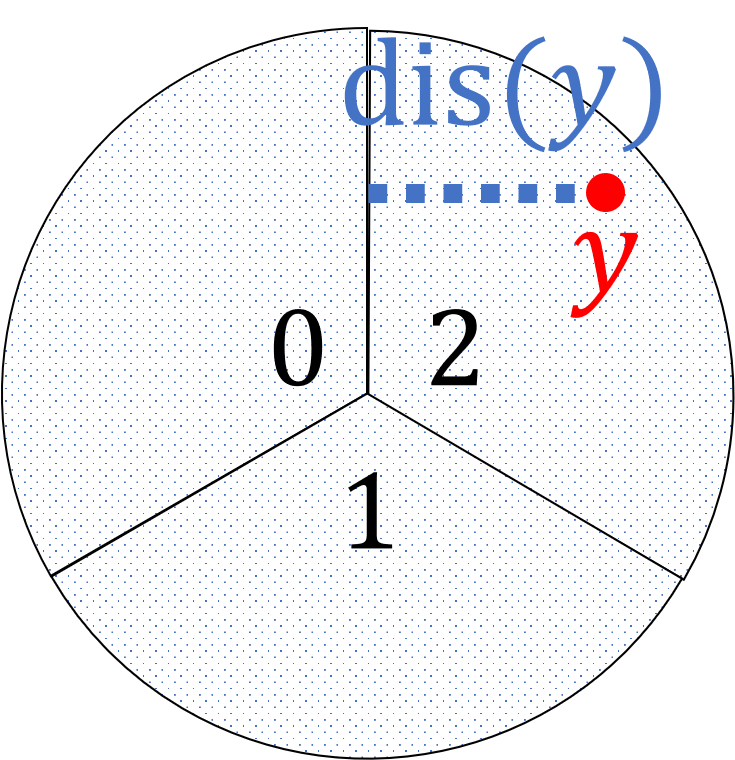}
 }
 \hspace{6pt}
 \subfloat[]{
 \label{fig:cluster_dist_c_large}
 \centering
 \includegraphics[width=0.31\textwidth]{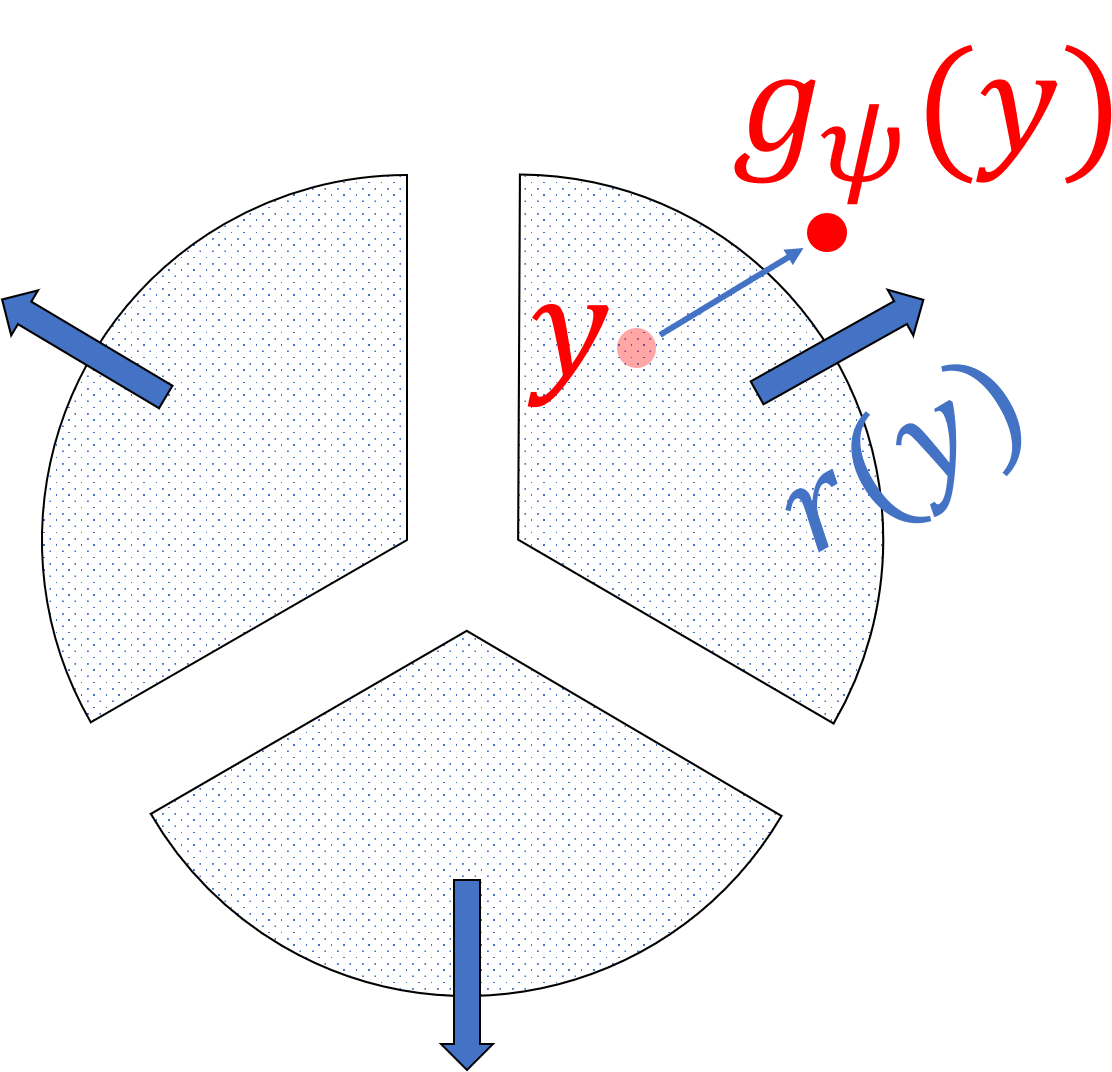}
 }
 \caption{\small{Illustration of clustered mapping where $K=3$. The circle represents a density isoline of a Gaussian.  Note that not all points in the sector are moved equally: points close to the boundaries between sectors are moved less, with points on the boundary themselves not moved at all.}}
 
 \label{fig:cluster_mapping_large}
\end{wrapfigure}
We tackle this problem by using a mapping $g_\psi$ which contains a clustering inductive bias.
The high-level idea is to design a mapping $g_\psi$ with a localized high Lipschitz constant that `splits' the continuous Gaussian distribution into $K$ disconnected parts and then pushes them away from each other. In particular, we split $\Y$ it into $K$ equally sized sectors using its first two dimensions (noting it is not needed to split on all dimensions to form clusters), as shown in~\cref{fig:cluster_mapping_large}.
For any point $y$, we can easily get the center direction $\text{r}(y)$ of the sector that $y$ belongs to and the distance $\text{dis}(y)$ between $y$ and the sector boundary. 
Then we define $g_\psi(y)$ as:
\begin{align}
    \label{eqn:cluster_large}
    g_\psi(y) = y + {c_1}\text{dis}(y)^{c_2}\text{r}(y),
\end{align}
where $c_1$ and $c_2$ are empirical constants.
We can see that although $g_\psi$ has very different function on different sectors, it is still continuous on the whole plane with $g_\psi(y)=y$ on sector boundaries, which is desirable for gradient-based training. 
See \cref{sec:clustered_appendix} for more details.

To assess the performance of our approach, we first consider a simple 2-component MoG synthetic dataset in the last row of \cref{fig:examples_low_dim}.
We see that the vanilla VAE fails to learn a clustered distribution that fits the data, while the \method sorts this issue and fits the data well. 

\begin{wraptable}[11]{r}{0.45\textwidth}
 \centering
  \vspace{-12pt}
   {\small
    \begin{tabular}{ lc } 
         \toprule
          Method & FID Score~($\downarrow$)  \\ \midrule
         \gls{VAE} &$42.0 \pm 1.1$\\
         GM-VAE&$41.0 \pm 4.7$\\
         MoG-VAE&$41.2 \pm 3.3$\\
         Vamp-VAE&$38.8 \pm 2.4$\\
         VAE with Sylvester NF &$35.0 \pm 0.9$\\
         \method&\textbf{$32.2 \pm1.5$} \\
         \bottomrule
    \end{tabular}}
    \vspace*{-1ex}
 \vspace*{-0.5ex}
 \caption{Generation quality on MNIST.
 Shown is mean FID score (lower better) $\pm$ standard deviation over 10 runs.
 \label{tab:mnist}}
\end{wraptable}
To provide a more real-world example, we train an \method and a variety of baselines on the \textbf{MNIST} dataset, comparing the generation quality of the learned models using the FID score~
\citep{heusel2017gans} in \cref{tab:mnist}.
We find that the GM-VAE~\citep{dilokthanakul2016deep} and MoG-VAE~(VAE with a fixed MoG prior) achieve performance gains by using non-Gaussian priors. 
The Vamp-VAE~\citep{tomczak2018vae} and a \gls{VAE} with a Sylvester Normalizing Flow~\citep{berg2018sylvester} encoder provide further gains by making the prior and encoder distributions more flexible respectively. However, the \method comfortably outperforms all of them.

\begin{wrapfigure}[7]{r}{0.45\textwidth}
\vspace{-1em}
 \centering
 \subfloat[VAE]{
 \label{fig:mnist01_vanilla}
 \centering
 \includegraphics[trim=10 10 10 10, clip, width=0.28\textwidth]{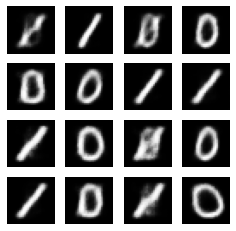}
 }
 \subfloat[MoG-VAE]{
 \label{fig:mnist01_MoG}
 \centering
 \includegraphics[trim=30 30 30 30, clip, width=0.28\textwidth]{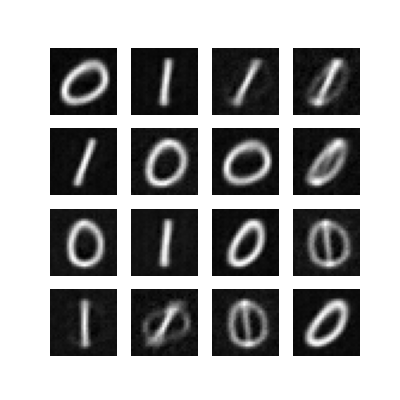}
 }
 %
 \subfloat[\method]{
 \label{fig:mnist01_ours}
 \centering
 \includegraphics[trim=10 10 10 10, clip, width=0.28\textwidth]{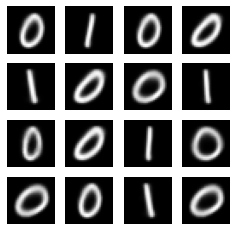}
 }
 \vspace*{-3.0ex}
 \caption{
 Generated samples for \textbf{MNIST-01}.
 \vspace{-1.5em}
 }
 \label{fig:mnist-01}
\end{wrapfigure}
To gain insight into how \methods achieve superior generation quality, we perform analysis on a simplified setting where
we select only the  `0' and `1' digits from the \textbf{MNIST} dataset to form a strongly clustered dataset, \textbf{MNIST-01}.
We further decrease the latent dimension to $1$ to make the problem more challenging. 
Fig.~\ref{fig:mnist-01} shows that here the vanilla \gls{VAE} generates some samples which look like interpolations between '0' and '1', meaning that it still tries to learn a connected distribution containing '0' and '1'. 
Further, the general generation quality is poor, with blurred images and a lack of diversity in generated samples (e.g.~all the `1's have the same slant).
Despite using a clustered prior, the MoG-VAE still produces unwanted interpolations between the classes. By contrast, \method generates digits that are unambiguous and crisper.

\begin{table}
\centering

\caption{
Quantitative results on \textbf{MNIST-01}. \textbf{Uncertainty} is the proportion of images whose labels are `indistinguishable' by the pre-trained classifier, defined as having prediction confidence $<80\%$. \textbf{`1' proportion} is the proportion of images classified as `1'. 
\vspace*{-3ex}
}
\label{tab:mnist01}
{\scriptsize
\begin{tabular}{ lccccccc } 
 \toprule
 Method & Data & \gls{VAE} & GM-\gls{VAE}& MoG-\gls{VAE} & Vamp-\gls{VAE}& Flow &\method \\ \midrule
 \textbf{Uncertainty}(\%) &0.2 $\pm$ 0.1 &2.5 $\pm$ 0.4& 3.5 $\pm$ 1.8& 4.5 $\pm$ 0.8&2.4 $\pm$ 0.3& 16.2 $\pm$ 2.1 &\textbf{0.9 $\pm$ 0.8}\\
 \textbf{`1' proportion}(\%) &50.0 $\pm$ 0.2&48.8 $\pm$ 0.2&48.1 $\pm$ 0.3 &47.7 $\pm$ 0.4 &48.8 $\pm$ 0.1& 42.5 $\pm$ 1.0 &\textbf{49.5 $\pm$ 0.4}\\
 %
 \bottomrule
\end{tabular}}
\vspace*{-1ex}
\end{table}

\begin{wraptable}[10]{r}{0.4\textwidth}
\centering
\vspace{-14pt}
\caption{
Learned proportions of `0's on \textbf{MNIST-01} for different ground truths.
Error bars are std.~dev.~from 10 runs.
}
\label{tab:mnist01_proportion}

{\small
\begin{tabular}{ lc } 
 \toprule
 True Prop.\ & Learned Prop.\ 
 \\ \midrule
  0.5             & 0.47 $\pm$ 0.01 \\
  0.4              & 0.36 $\pm$ 0.10  \\
  0.25              & 0.25 $\pm$ 0.08 \\
  0.2              & 0.16 $\pm$ 0.11 \\
  0              & 0.02 $\pm$ 0.01 \\
 \bottomrule
\end{tabular}}
\vspace*{-1ex}
\end{wraptable}
To quantify these results, we further train a logistic classifier on \textbf{MNIST-01} and use it to classify images generated by each method. For each method, we calculate the proportion of samples produced by the generative model that are assigned to each class by this pre-trained classifier, as well as the proportion of samples for which the classifier is uncertain.
From \cref{tab:mnist01} we see that \method significantly outperforms its competitors in the ability to generate balanced and unambiguous digits. 
To extend this example further, and show the ability of \methods to learn aspects of $g_{\psi}$ during training, we further consider parameterizing and then learning the relative size of the clusters.
\cref{tab:mnist01_proportion} shows that this can be successfully learned by \methods on \textbf{MNIST-01}.


\vspace{-4pt}
\subsection{Sparsity}
\label{sec:sparse}
\vspace{-4pt}

Sparse features are often well-suited to data efficiency on downstream tasks~\citep{huang2006sparse}, in addition to being naturally easier to visualize and manipulate than dense features~\citep{ng2011sparse}. 
However, existing \gls{VAE} models for sparse representations trade off generation quality to achieve this sparsity~\citep{mathieu2019disentangling, tonolini2020variational, barello2018SparseCoding}.
Here, we show that \methods can instead \emph{simultaneously} increase feature sparsity and generation quality.
Moreover, they are able to achieve state-of-the-art scores on sparsity metrics.

Compared with our previous examples, the $g_{\psi}$ here needs to be more flexible so that it can learn to map points in a data-specific way and induce sparsity without unduly harming reconstruction.
To achieve this, we use the simple form for the mapping:
$g_\psi(y)= y \odot ~\text{DS}_\psi(y)$, 
where $\odot$ is pointwise multiplication, and DS is a `dimension selector' network that selects dimensions to deactivate given $y$.
DS outputs values between $[0, 1]$ for each dimension, with $0$ being fully deactivated and $1$ fully activated; the more dimensions we deactivate, the sparser the representation.
By learning DS during training, this setup allows us to learn a sparse representation in a data-driven manner.
%
To control the degree of sparsity, we add a sparsity regularizer, $\ELBO_{sp}$, to the ELBO with weighting parameter $\gamma$ (higher $\gamma$ corresponds to more sparsity). Namely, we optimize $\ELBO_{\Y}(\theta,\phi,\psi)+\gamma \ELBO_{sp}(\phi,\psi)$, where
\begin{align}
\label{eq:sparsity_reg}
   \ELBO_{sp}(\phi,\psi) := \mathbb{E} \left[\frac{1}{M}\sum_{i=1}^M (H\left(DS(y_i))\right) - H\left(\frac{1}{M}\sum_{i=1}^M DS(y_i)\right)\right],
\end{align}
$H(v) = -\sum_i \left(v_i/\|v\|_1\right) \log \left(v_i/\|v\|_1\right)$ is the normalized entropy of an positive vector $v$, and the expectation is over drawing a minibatch of samples $x_1,\dots,x_M$ and then sampling each corresponding $y_i\sim q_{\phi}(\cdot|x=x_i)$.
$\ELBO_{sp}$ encourages DS to deactivate more dimensions, while also encouraging diversity in which dimensions are activated for different data points, improving utilization of the latent space. Please see \cref{sec:sparse_appendix} for more details and intuitions.
Initial qualitative results are shown in~\cref{fig:fashion_mnist_manipulation}, where we see that our \method is able to learn sparse and intuitive representations.

\begin{figure}[t]
 \centering
 \subfloat{
 \centering
 \includegraphics[width=0.4\textwidth]{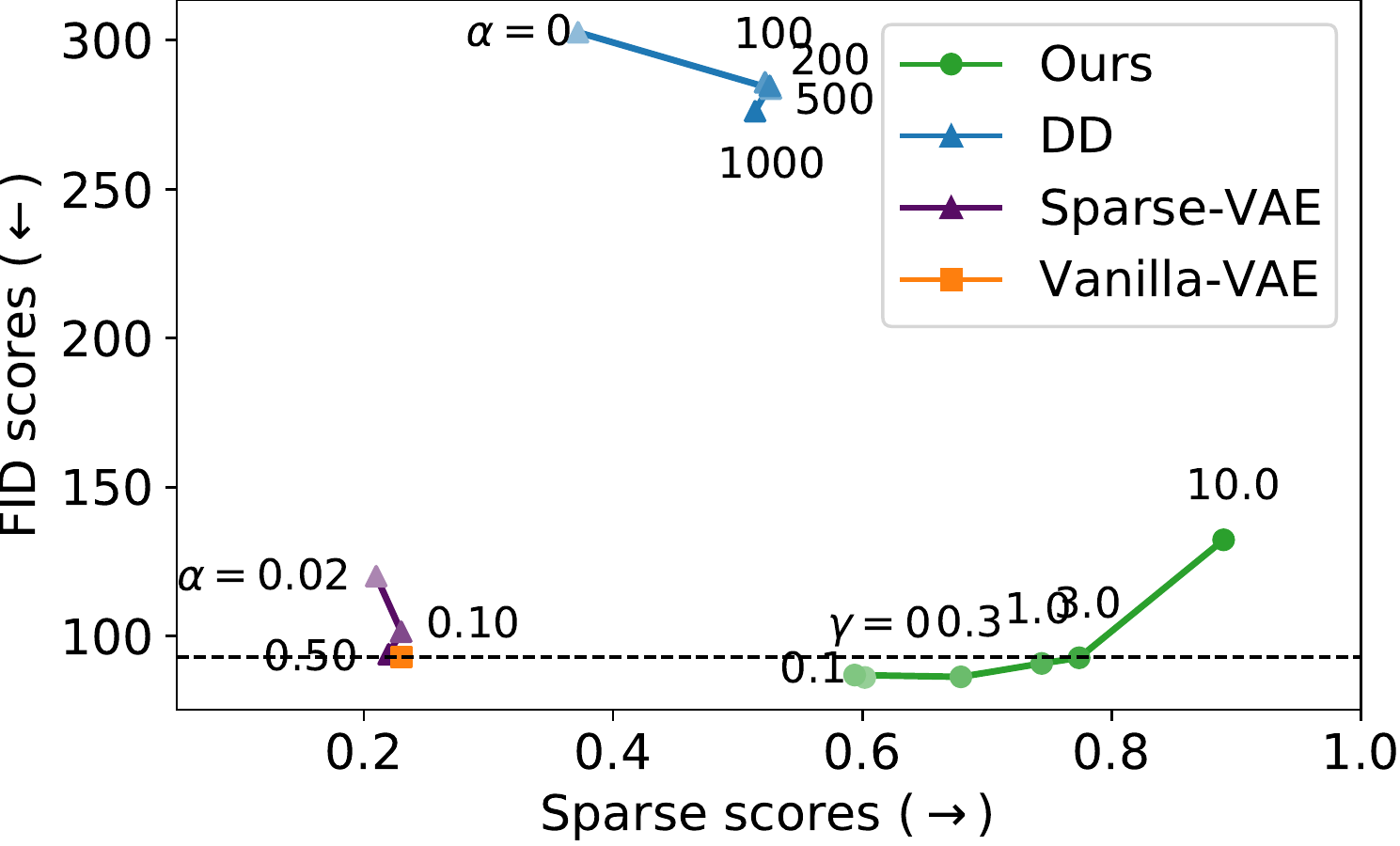}
 }
 \qquad
 \subfloat{
 \centering
 \includegraphics[width=0.39\textwidth]{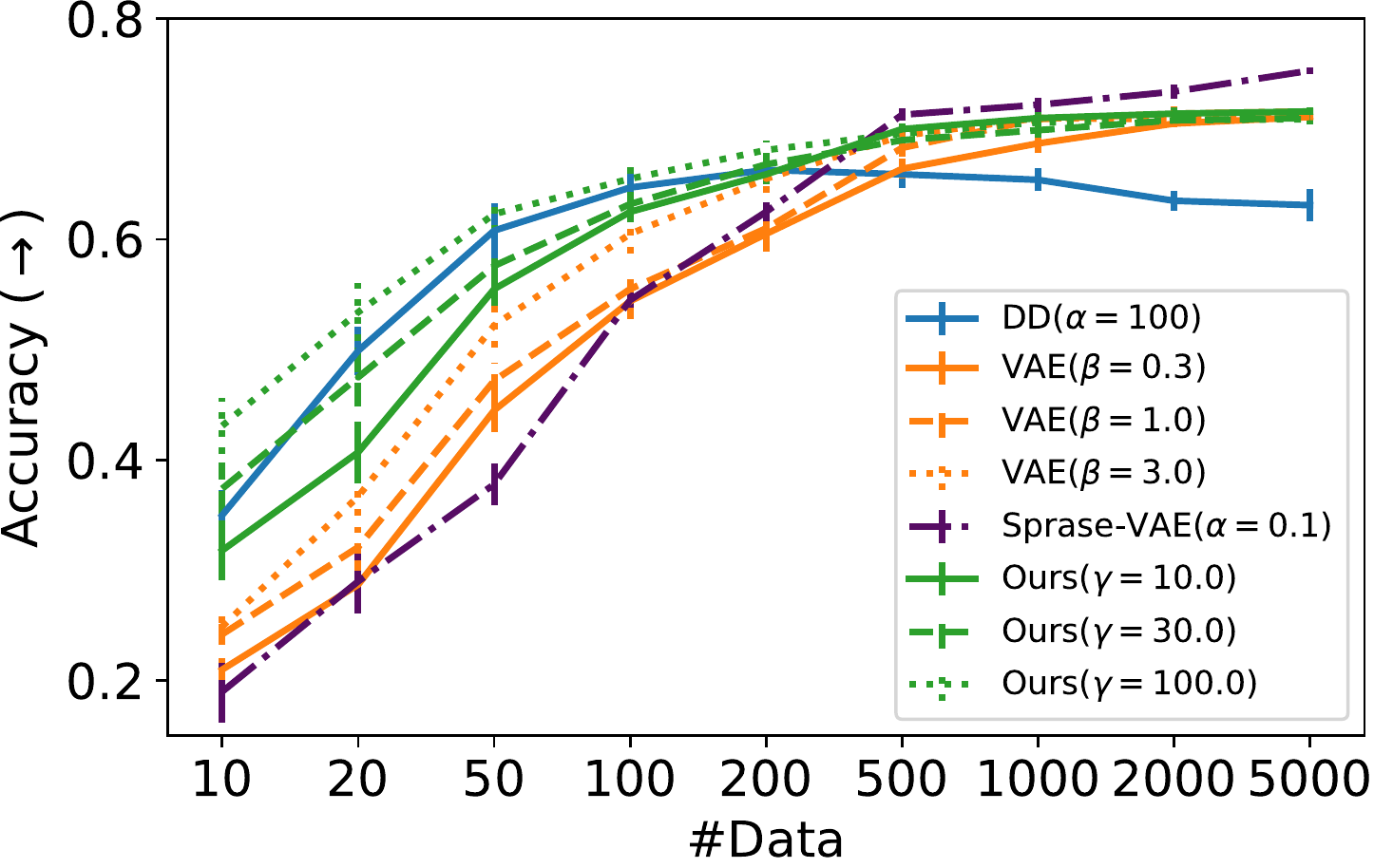}
 }
 \vspace{-6pt} 
 \caption{Results on \textbf{Fashion-MNIST}. The \textit{left} figure shows FID and sparsity scores. Lower FID scores~($\downarrow$) represent better sample quality while higher sparse scores~($\rightarrow$) indicate sparser features.  The \textit{right} figure shows the performance of sparse features from \method on downstream classification tasks.
 See~\cref{sec:sparse_appendix} for details and results for \textbf{MNIST}.
 \vspace{-20pt}}
 \label{fig:fid_sparse_fashion_mnist}
\end{figure}

\begin{wrapfigure}[24]{r}{0.5\textwidth}
 \centering
 \sidesubfloat{
 \label{fig:fashion_mnist_0}
 \centering
 \includegraphics[width=0.9\textwidth]{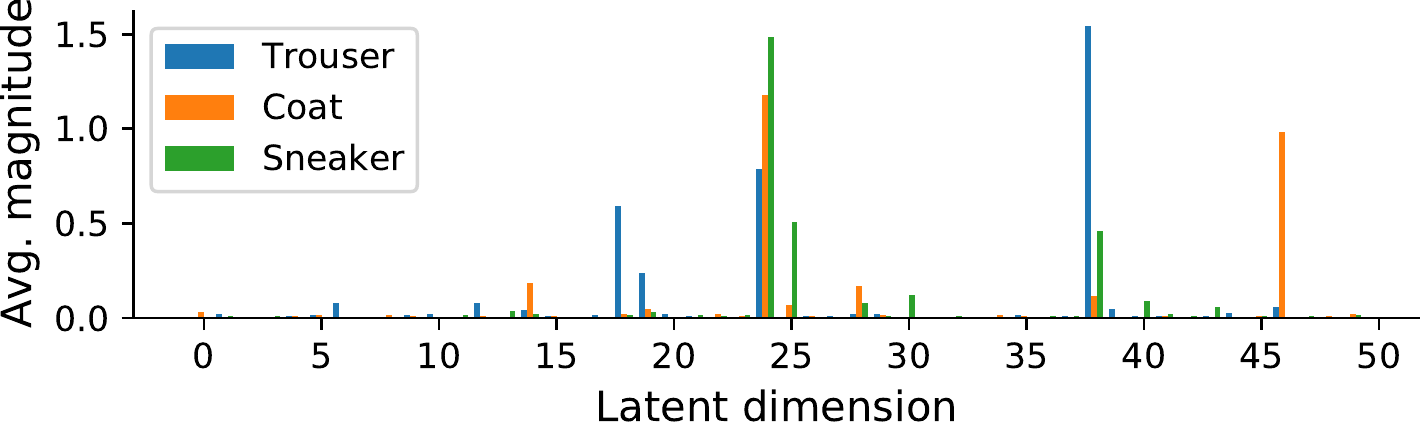}
 }
 \vspace{5pt}
 \setcounter{subfigure}{0}
 \sidesubfloat[]{
 \label{fig:fashion_mnist_1}
 \centering
 \includegraphics[width=0.8\textwidth]{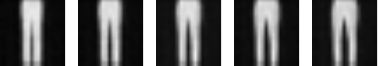}
 }
 \vspace{5pt}
 \sidesubfloat[]{
 \label{fig:fashion_mnist_2}
 \centering
 \includegraphics[width=0.8\textwidth]{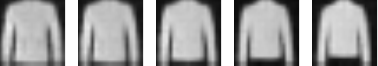}
 }
 \vspace{5pt}
 \sidesubfloat[]{
 \label{fig:fashion_mnist_3}
 \centering
 \includegraphics[width=0.8\textwidth]{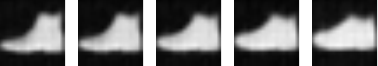}
 }
 \caption{
 Qualitative evaluation of sparsity. [Top] Average magnitude of each latent dimension for three example classes in \textbf{Fashion-MNIST}; less than $10\%$ dimensions are activated for each class. [Bottom] Activated dimensions are different between classes:
 (a-c) show the results of separately manipulating an activated dimension for each class. (a) Trouser separation  (Dim 18). (b) Coat length (Dim 46). (c) Shoe style (formal/sport, Dim 25).
 }
 \label{fig:fashion_mnist_manipulation}
\end{wrapfigure}
To quantitatively assess the ability of our approach to yield sparse representations and good quality generations, we compare against vanilla VAEs, the specially customized sparse-VAE of~\citet{tonolini2020variational}, and the sparse version of \cite{mathieu2019disentangling}~(DD) on \textbf{Fashion-MNIST}~\citep{xiao2017/online} and \textbf{MNIST}.
As shown in Fig.~\ref{fig:fid_sparse_fashion_mnist}~(\textit{left}), we find that \methods increase sparsity of the representations---measured by the Hoyer metric \citep{hurley2009comparing}---while increasing generative sample quality at the same time.
Indeed, the FID score 
obtained by \method outperforms the vanilla VAE when $\gamma < 3.0$, while the sparsity score substantially increases with $\gamma$, reaching extremely high levels.
By comparison, DD significantly degrades generation quality and only provides a more modest increase in sparsity, while its sparsity also drops if the regularization coefficient is set too high.
The level of sparsity achieved by sparse-VAEs was substantially less than both DD and \methods.

To further evaluate the quality of the learned features for downstream tasks, we trained a classifier to predict class labels from the latent representations.
For this, we choose a random forest~\citep{breiman2001random} with maximum depth $4$ as it is well-suited for sparse features. 
We vary the size of training data given to the classifier to measure the data efficiency of each model.
\Cref{fig:fid_sparse_fashion_mnist}~(\textit{right}) shows that \method typically outperforms other the models, especially in few-shot scenarios.
%

\begin{wraptable}[10]{r}{0.51\textwidth}
 \centering
 \vspace{-10pt}
   {\small
    \begin{tabular}{ lcc } 
         \toprule
         Method & FID~($\downarrow$) &Sparsity~($\uparrow$) \\ \midrule
         \gls{VAE} &68.6$\pm$1.1 &0.22$\pm$0.01\\
         Vamp-VAE&67.5$\pm$1.1 &0.22$\pm$0.01\\
         VAE with Sylvester NF&66.3$\pm$0.4 &0.22$\pm$0.01\\
         Sparse-VAE ($\alpha=0.01$) &328$\pm$10.1 &0.25$\pm$0.01\\
         Sparse-VAE ($\alpha=0.2$) &337$\pm$8.1 &0.28$\pm$0.01\\
         \method ($\gamma=30$) & 64.9$\pm$0.4 &0.25$\pm$0.01\\
         \method ($\gamma=70$) & 68.0$\pm$0.6 &0.46$\pm$0.02\\
         \bottomrule
    \end{tabular}}
    \vspace*{-1ex}
 \vspace*{-1ex}
 \caption{Generation results on CelebA.
 }
 \label{tab:celeba_sparse}
\end{wraptable}
Finally, to verify \method 's effectiveness on larger and higher-resolution datasets, we also make comparisons on \textbf{CelebA}~\citep{liu2015faceattributes}.
From \cref{tab:celeba_sparse}, we can see that \method increase sparse scores to 0.46 without sacrificing generation quality. By comparison, the maximal sparse score that sparse-VAE gets is 0.30, with unacceptable sample quality.
Interestingly, \methods with ly low regulation $\gamma$ achieved particularly good generative sample quality, outperforming even the Vamp-VAE and a VAE with a Sylvester NF encoder.

\textbf{Conclusions}~~ In this paper, we proposed \methods, a general schema for incorporating inductive biases into \glspl{VAE}. Experiments show that \methods can both provide representations with desired properties and improve generation quality, outperforming a variety of baselines such as directly changing the prior.
This is achieved while maintaining the simplicity and stability of standard VAEs. 

\bibliography{main}

\begin{thebibliography}{74}
\providecommand{\natexlab}[1]{#1}
\providecommand{\url}[1]{\texttt{#1}}
\expandafter\ifx\csname urlstyle\endcsname\relax
  \providecommand{\doi}[1]{doi: #1}\else
  \providecommand{\doi}{doi: \begingroup \urlstyle{rm}\Url}\fi

\bibitem[Abadi et~al.(2015)Abadi, Agarwal, Barham, Brevdo, Chen, Citro,
  Corrado, Davis, Dean, Devin, Ghemawat, Goodfellow, Harp, Irving, Isard, Jia,
  Jozefowicz, Kaiser, Kudlur, Levenberg, Man\'{e}, Monga, Moore, Murray, Olah,
  Schuster, Shlens, Steiner, Sutskever, Talwar, Tucker, Vanhoucke, Vasudevan,
  Vi\'{e}gas, Vinyals, Warden, Wattenberg, Wicke, Yu, and
  Zheng]{tensorflow2015-whitepaper}
Mart\'{\i}n Abadi, Ashish Agarwal, Paul Barham, Eugene Brevdo, Zhifeng Chen,
  Craig Citro, Greg~S. Corrado, Andy Davis, Jeffrey Dean, Matthieu Devin,
  Sanjay Ghemawat, Ian Goodfellow, Andrew Harp, Geoffrey Irving, Michael Isard,
  Yangqing Jia, Rafal Jozefowicz, Lukasz Kaiser, Manjunath Kudlur, Josh
  Levenberg, Dandelion Man\'{e}, Rajat Monga, Sherry Moore, Derek Murray, Chris
  Olah, Mike Schuster, Jonathon Shlens, Benoit Steiner, Ilya Sutskever, Kunal
  Talwar, Paul Tucker, Vincent Vanhoucke, Vijay Vasudevan, Fernanda Vi\'{e}gas,
  Oriol Vinyals, Pete Warden, Martin Wattenberg, Martin Wicke, Yuan Yu, and
  Xiaoqiang Zheng.
\newblock {TensorFlow}: Large-scale machine learning on heterogeneous systems,
  2015.
\newblock URL \url{https://www.tensorflow.org/}.
\newblock Software available from tensorflow.org.

\bibitem[Ahmad \& Scheinkman(2019)Ahmad and Scheinkman]{ahmad2019can}
Subutai Ahmad and Luiz Scheinkman.
\newblock How can we be so dense? the benefits of using highly sparse
  representations.
\newblock \emph{arXiv preprint arXiv:1903.11257}, 2019.

\bibitem[Alemi et~al.(2018)Alemi, Poole, Fischer, Dillon, Saurous, and
  Murphy]{alemi2018fixing}
Alexander Alemi, Ben Poole, Ian Fischer, Joshua Dillon, Rif~A Saurous, and
  Kevin Murphy.
\newblock Fixing a broken elbo.
\newblock In \emph{International Conference on Machine Learning}, pp.\
  159--168. PMLR, 2018.

\bibitem[Ansari \& Soh(2019)Ansari and Soh]{ansari2019hyperprior}
Abdul~Fatir Ansari and Harold Soh.
\newblock Hyperprior induced unsupervised disentanglement of latent
  representations.
\newblock In \emph{Proceedings of the AAAI Conference on Artificial
  Intelligence}, volume~33, pp.\  3175--3182, 2019.

\bibitem[Bahdanau et~al.(2014)Bahdanau, Cho, and Bengio]{bahdanau2014neural}
Dzmitry Bahdanau, Kyunghyun Cho, and Yoshua Bengio.
\newblock Neural machine translation by jointly learning to align and
  translate.
\newblock \emph{arXiv preprint arXiv:1409.0473}, 2014.

\bibitem[Barello et~al.(2018)Barello, Charles, and
  Pillow]{barello2018SparseCoding}
Gabriel Barello, Adam~S. Charles, and Jonathan~W. Pillow.
\newblock Sparse-coding variational auto-encoders.
\newblock Preprint, {Neuroscience}, August 2018.

\bibitem[Bauer \& Mnih(2019)Bauer and Mnih]{bauer2019resampled}
Matthias Bauer and Andriy Mnih.
\newblock Resampled priors for variational autoencoders.
\newblock In \emph{The 22nd International Conference on Artificial Intelligence
  and Statistics}, pp.\  66--75. PMLR, 2019.

\bibitem[Belghazi et~al.(2018)Belghazi, Rajeswar, Mastropietro, Rostamzadeh,
  Mitrovic, Courville, and Element]{belghazi2018hierarchical}
Mohamed~Ishmael Belghazi, Sai Rajeswar, Olivier Mastropietro, Negar
  Rostamzadeh, Jovana Mitrovic, Aaron Courville, and AI~Element.
\newblock Hierarchical adversarially learned inference.
\newblock \emph{stat}, 1050:\penalty0 4, 2018.

\bibitem[Bengio et~al.(2013)Bengio, Courville, and
  Vincent]{bengio2013representation}
Yoshua Bengio, Aaron Courville, and Pascal Vincent.
\newblock Representation learning: A review and new perspectives.
\newblock \emph{IEEE transactions on pattern analysis and machine
  intelligence}, 35\penalty0 (8):\penalty0 1798--1828, 2013.

\bibitem[Berg et~al.(2018)Berg, Hasenclever, Tomczak, and
  Welling]{berg2018sylvester}
Rianne van~den Berg, Leonard Hasenclever, Jakub~M Tomczak, and Max Welling.
\newblock Sylvester normalizing flows for variational inference.
\newblock \emph{arXiv preprint arXiv:1803.05649}, 2018.

\bibitem[Breiman(2001)]{breiman2001random}
Leo Breiman.
\newblock Random forests.
\newblock \emph{Machine learning}, 45\penalty0 (1):\penalty0 5--32, 2001.

\bibitem[Burda et~al.(2016)Burda, Grosse, and
  Salakhutdinov]{burda2016importance}
Yuri Burda, Roger~B Grosse, and Ruslan Salakhutdinov.
\newblock Importance weighted autoencoders.
\newblock In \emph{ICLR (Poster)}, 2016.

\bibitem[Casale et~al.(2018)Casale, Dalca, Saglietti, Listgarten, and
  Fusi]{casale2018gaussian}
Francesco~Paolo Casale, Adrian~V Dalca, Luca Saglietti, Jennifer Listgarten,
  and Nicol{\'o} Fusi.
\newblock Gaussian process prior variational autoencoders.
\newblock In \emph{NeurIPS}, 2018.

\bibitem[Caterini et~al.(2020)Caterini, Cornish, Sejdinovic, and
  Doucet]{caterini2020variational}
Anthony~L Caterini, Robert Cornish, Dino Sejdinovic, and Arnaud Doucet.
\newblock Variational inference with continuously-indexed normalizing flows.
\newblock 2020.

\bibitem[Davidson et~al.(2018{\natexlab{a}})Davidson, Falorsi, De~Cao, Kipf,
  and Tomczak]{davidson2018Hyperspherical}
Tim~R. Davidson, Luca Falorsi, Nicola De~Cao, Thomas Kipf, and Jakub~M.
  Tomczak.
\newblock Hyperspherical variational auto-encoders.
\newblock \emph{arXiv:1804.00891 [cs, stat]}, September 2018{\natexlab{a}}.
\newblock URL \url{http://arxiv.org/abs/1804.00891}.

\bibitem[Davidson et~al.(2018{\natexlab{b}})Davidson, Falorsi, De~Cao, Kipf,
  and Tomczak]{s-vae18}
Tim~R. Davidson, Luca Falorsi, Nicola De~Cao, Thomas Kipf, and Jakub~M.
  Tomczak.
\newblock Hyperspherical variational auto-encoders.
\newblock \emph{34th Conference on Uncertainty in Artificial Intelligence
  (UAI-18)}, 2018{\natexlab{b}}.

\bibitem[De~la Fuente \& Aduviri(2019)De~la Fuente and
  Aduviri]{de2019replication}
Alfredo De~la Fuente and Robert Aduviri.
\newblock Replication/machine learning.
\newblock 2019.

\bibitem[Dilokthanakul et~al.(2016)Dilokthanakul, Mediano, Garnelo, Lee,
  Salimbeni, Arulkumaran, and Shanahan]{dilokthanakul2016deep}
Nat Dilokthanakul, Pedro~AM Mediano, Marta Garnelo, Matthew~CH Lee, Hugh
  Salimbeni, Kai Arulkumaran, and Murray Shanahan.
\newblock Deep unsupervised clustering with gaussian mixture variational
  autoencoders.
\newblock \emph{arXiv preprint arXiv:1611.02648}, 2016.

\bibitem[Dinh et~al.(2017)Dinh, {Sohl-Dickstein}, and Bengio]{dinh2017Density}
Laurent Dinh, Jascha {Sohl-Dickstein}, and Samy Bengio.
\newblock Density estimation using real nvp.
\newblock \emph{arXiv:1605.08803 [cs, stat]}, February 2017.
\newblock URL \url{http://arxiv.org/abs/1605.08803}.

\bibitem[Falorsi et~al.(2018)Falorsi, {de Haan}, Davidson, De~Cao, Weiler,
  Forr{\'e}, and Cohen]{falorsi2018Explorations}
Luca Falorsi, Pim {de Haan}, Tim~R. Davidson, Nicola De~Cao, Maurice Weiler,
  Patrick Forr{\'e}, and Taco~S. Cohen.
\newblock Explorations in homeomorphic variational auto-encoding.
\newblock \emph{arXiv:1807.04689 [cs, stat]}, July 2018.
\newblock URL \url{http://arxiv.org/abs/1807.04689}.

\bibitem[Glorot et~al.(2011)Glorot, Bordes, and Bengio]{glorot2011deep}
Xavier Glorot, Antoine Bordes, and Yoshua Bengio.
\newblock Deep sparse rectifier neural networks.
\newblock In \emph{Proceedings of the fourteenth international conference on
  artificial intelligence and statistics}, pp.\  315--323. JMLR Workshop and
  Conference Proceedings, 2011.

\bibitem[Gopalakrishnan et~al.(2018)Gopalakrishnan, Marzi, Madhow, and
  Pedarsani]{gopalakrishnan2018combating}
Soorya Gopalakrishnan, Zhinus Marzi, Upamanyu Madhow, and Ramtin Pedarsani.
\newblock Combating adversarial attacks using sparse representations.
\newblock \emph{arXiv preprint arXiv:1803.03880}, 2018.

\bibitem[Grathwohl et~al.(2018)Grathwohl, Chen, Bettencourt, Sutskever, and
  Duvenaud]{grathwohl2018FFJORD}
Will Grathwohl, Ricky T.~Q. Chen, Jesse Bettencourt, Ilya Sutskever, and David
  Duvenaud.
\newblock {{FFJORD}}: Free-form continuous dynamics for scalable reversible
  generative models.
\newblock \emph{arXiv:1810.01367 [cs, stat]}, October 2018.
\newblock URL \url{http://arxiv.org/abs/1810.01367}.

\bibitem[Gulrajani et~al.(2016)Gulrajani, Kumar, Ahmed, Taiga, Visin, Vazquez,
  and Courville]{gulrajani2016pixelvae}
Ishaan Gulrajani, Kundan Kumar, Faruk Ahmed, Adrien~Ali Taiga, Francesco Visin,
  David Vazquez, and Aaron Courville.
\newblock Pixelvae: A latent variable model for natural images.
\newblock \emph{arXiv preprint arXiv:1611.05013}, 2016.

\bibitem[Gupta \& Mishra(2018)Gupta and Mishra]{gupta2018topology}
Ved~Prakash Gupta and Mukund~Madhav Mishra.
\newblock On the topology of certain matrix groups.
\newblock \emph{THE MATHEMATICS STUDENT}, pp.\ ~61, 2018.

\bibitem[Heusel et~al.(2017)Heusel, Ramsauer, Unterthiner, Nessler, and
  Hochreiter]{heusel2017gans}
Martin Heusel, Hubert Ramsauer, Thomas Unterthiner, Bernhard Nessler, and Sepp
  Hochreiter.
\newblock Gans trained by a two time-scale update rule converge to a local nash
  equilibrium.
\newblock In \emph{Proceedings of the 31st International Conference on Neural
  Information Processing Systems}, pp.\  6629--6640, 2017.

\bibitem[Higgins et~al.(2018)Higgins, Amos, Pfau, Racaniere, Matthey, Rezende,
  and Lerchner]{higgins2018towards}
Irina Higgins, David Amos, David Pfau, Sebastien Racaniere, Loic Matthey,
  Danilo Rezende, and Alexander Lerchner.
\newblock Towards a definition of disentangled representations.
\newblock \emph{arXiv preprint arXiv:1812.02230}, 2018.

\bibitem[Hochreiter \& Schmidhuber(1997)Hochreiter and
  Schmidhuber]{hochreiter1997long}
Sepp Hochreiter and J{\"u}rgen Schmidhuber.
\newblock Long short-term memory.
\newblock \emph{Neural computation}, 9\penalty0 (8):\penalty0 1735--1780, 1997.

\bibitem[Hoffman \& Johnson(2016)Hoffman and Johnson]{hoffman2016elbo}
Matthew~D Hoffman and Matthew~J Johnson.
\newblock Elbo surgery: yet another way to carve up the variational evidence
  lower bound.
\newblock 2016.

\bibitem[Hou et~al.(2017)Hou, Shen, Sun, and Qiu]{hou2017deep}
Xianxu Hou, Linlin Shen, Ke~Sun, and Guoping Qiu.
\newblock Deep feature consistent variational autoencoder.
\newblock In \emph{2017 IEEE Winter Conference on Applications of Computer
  Vision (WACV)}, pp.\  1133--1141. IEEE, 2017.

\bibitem[Huang et~al.(2018)Huang, Krueger, Lacoste, and
  Courville]{huang2018Neural}
Chin-Wei Huang, David Krueger, Alexandre Lacoste, and Aaron Courville.
\newblock Neural autoregressive flows.
\newblock pp.\ ~10, 2018.

\bibitem[Huang \& Aviyente(2006)Huang and Aviyente]{huang2006sparse}
Ke~Huang and Selin Aviyente.
\newblock Sparse representation for signal classification.
\newblock \emph{Advances in neural information processing systems},
  19:\penalty0 609--616, 2006.

\bibitem[Hurley \& Rickard(2009)Hurley and Rickard]{hurley2009comparing}
Niall Hurley and Scott Rickard.
\newblock Comparing measures of sparsity.
\newblock \emph{IEEE Transactions on Information Theory}, 55:\penalty0
  4723--4741, 2009.

\bibitem[Jiang et~al.(2017)Jiang, Zheng, Tan, Tang, and
  Zhou]{jiang2017variational}
Zhuxi Jiang, Yin Zheng, Huachun Tan, Bangsheng Tang, and Hanning Zhou.
\newblock Variational deep embedding: An unsupervised and generative approach
  to clustering.
\newblock In \emph{IJCAI}, 2017.

\bibitem[Johnson et~al.(2017)Johnson, Duvenaud, Wiltschko, Datta, and
  Adams]{johnson2017Composing}
Matthew~J. Johnson, David Duvenaud, Alexander~B. Wiltschko, Sandeep~R. Datta,
  and Ryan~P. Adams.
\newblock Composing graphical models with neural networks for structured
  representations and fast inference.
\newblock \emph{arXiv:1603.06277 [stat]}, July 2017.
\newblock URL \url{http://arxiv.org/abs/1603.06277}.

\bibitem[Kim \& Mnih(2018)Kim and Mnih]{pmlr-v80-kim18b}
Hyunjik Kim and Andriy Mnih.
\newblock Disentangling by factorising.
\newblock In Jennifer Dy and Andreas Krause (eds.), \emph{Proceedings of the
  35th International Conference on Machine Learning}, volume~80 of
  \emph{Proceedings of Machine Learning Research}, pp.\  2649--2658. PMLR,
  10--15 Jul 2018.
\newblock URL \url{http://proceedings.mlr.press/v80/kim18b.html}.

\bibitem[Kingma \& Welling(2014)Kingma and Welling]{kingma2013auto}
Diederik~P Kingma and Max Welling.
\newblock Auto-encoding variational bayes.
\newblock In \emph{International Conference on Learning Representations}, 2014.

\bibitem[Klushyn et~al.()Klushyn, Chen, Kurle, Cseke, and van~der
  Smagt]{klushynlearning}
Alexej Klushyn, Nutan Chen, Richard Kurle, Botond Cseke, and Patrick van~der
  Smagt.
\newblock Learning hierarchical priors in vaes.

\bibitem[Kumar et~al.(2018)Kumar, Sattigeri, and
  Balakrishnan]{kumar2018variational}
Abhishek Kumar, Prasanna Sattigeri, and Avinash Balakrishnan.
\newblock Variational inference of disentangled latent concepts from unlabeled
  observations.
\newblock In \emph{International Conference on Learning Representations}, 2018.

\bibitem[Le et~al.(2018)Le, Igl, Rainforth, Jin, and Wood]{le2018auto}
Tuan~Anh Le, Maximilian Igl, Tom Rainforth, Tom Jin, and Frank Wood.
\newblock Auto-encoding sequential monte carlo.
\newblock In \emph{International Conference on Learning Representations}, 2018.

\bibitem[LeCun et~al.(1989)LeCun, Boser, Denker, Henderson, Howard, Hubbard,
  and Jackel]{lecun1989backpropagation}
Yann LeCun, Bernhard Boser, John~S Denker, Donnie Henderson, Richard~E Howard,
  Wayne Hubbard, and Lawrence~D Jackel.
\newblock Backpropagation applied to handwritten zip code recognition.
\newblock \emph{Neural computation}, 1\penalty0 (4):\penalty0 541--551, 1989.

\bibitem[Liu et~al.(1997)Liu, Ku, and Hsu]{liu1997discovering}
Bing Liu, Liang-Ping Ku, and Wynne Hsu.
\newblock Discovering interesting holes in data.
\newblock In \emph{Proceedings of the Fifteenth international joint conference
  on Artifical intelligence-Volume 2}, pp.\  930--935, 1997.

\bibitem[Liu et~al.(2015)Liu, Luo, Wang, and Tang]{liu2015faceattributes}
Ziwei Liu, Ping Luo, Xiaogang Wang, and Xiaoou Tang.
\newblock Deep learning face attributes in the wild.
\newblock In \emph{Proceedings of International Conference on Computer Vision
  (ICCV)}, December 2015.

\bibitem[Maddison et~al.(2017)Maddison, Lawson, Tucker, Heess, Norouzi, Mnih,
  Doucet, and Teh]{maddison2017filtering}
Chris~J Maddison, John Lawson, George Tucker, Nicolas Heess, Mohammad Norouzi,
  Andriy Mnih, Arnaud Doucet, and Yee~Whye Teh.
\newblock Filtering variational objectives.
\newblock In \emph{NIPS}, 2017.

\bibitem[Mardia \& Jupp(2000)Mardia and Jupp]{mardia2000Directional}
K.~V. Mardia and Peter~E. Jupp.
\newblock \emph{Directional Statistics}.
\newblock Wiley Series in Probability and Statistics. {J. Wiley}, {Chichester ;
  New York}, 2000.
\newblock ISBN 978-0-471-95333-3.

\bibitem[Mathieu et~al.(2019{\natexlab{a}})Mathieu, Lan, Maddison, Tomioka, and
  Teh]{mathieu2019Continuous}
Emile Mathieu, Charline~Le Lan, Chris~J. Maddison, Ryota Tomioka, and Yee~Whye
  Teh.
\newblock Continuous hierarchical representations with poincar\textbackslash 'e
  variational auto-encoders.
\newblock January 2019{\natexlab{a}}.
\newblock URL \url{https://arxiv.org/abs/1901.06033v3}.

\bibitem[Mathieu et~al.(2019{\natexlab{b}})Mathieu, Rainforth, Siddharth, and
  Teh]{mathieu2019disentangling}
Emile Mathieu, Tom Rainforth, N~Siddharth, and Yee~Whye Teh.
\newblock Disentangling disentanglement in variational autoencoders.
\newblock In \emph{International Conference on Machine Learning}, pp.\
  4402--4412. PMLR, 2019{\natexlab{b}}.

\bibitem[Naesseth et~al.(2018)Naesseth, Linderman, Ranganath, and
  Blei]{naesseth2018variational}
Christian Naesseth, Scott Linderman, Rajesh Ranganath, and David Blei.
\newblock Variational sequential monte carlo.
\newblock In \emph{International Conference on Artificial Intelligence and
  Statistics}, pp.\  968--977. PMLR, 2018.

\bibitem[Nagano et~al.(2019)Nagano, Yamaguchi, Fujita, and
  Koyama]{nagano2019Wrapped}
Yoshihiro Nagano, Shoichiro Yamaguchi, Yasuhiro Fujita, and Masanori Koyama.
\newblock A wrapped normal distribution on hyperbolic space for gradient-based
  learning.
\newblock \emph{arXiv:1902.02992 [cs, stat]}, May 2019.
\newblock URL \url{http://arxiv.org/abs/1902.02992}.

\bibitem[Ng et~al.(2011)]{ng2011sparse}
Andrew Ng et~al.
\newblock Sparse autoencoder.
\newblock \emph{CS294A Lecture notes}, 72\penalty0 (2011):\penalty0 1--19,
  2011.

\bibitem[Ovinnikov(2019)]{ovinnikov2019Poincar}
Ivan Ovinnikov.
\newblock Poincar\textbackslash 'e wasserstein autoencoder.
\newblock January 2019.
\newblock URL \url{https://arxiv.org/abs/1901.01427v2}.

\bibitem[Papamakarios et~al.(2018)Papamakarios, Pavlakou, and
  Murray]{papamakarios2018Masked}
George Papamakarios, Theo Pavlakou, and Iain Murray.
\newblock Masked autoregressive flow for density estimation.
\newblock \emph{arXiv:1705.07057 [cs, stat]}, June 2018.
\newblock URL \url{http://arxiv.org/abs/1705.07057}.

\bibitem[Papamakarios et~al.(2019)Papamakarios, Nalisnick, Rezende, Mohamed,
  and Lakshminarayanan]{papamakarios2019normalizing}
George Papamakarios, Eric Nalisnick, Danilo~Jimenez Rezende, Shakir Mohamed,
  and Balaji Lakshminarayanan.
\newblock Normalizing flows for probabilistic modeling and inference.
\newblock \emph{arXiv preprint arXiv:1912.02762}, 2019.

\bibitem[Ranganath et~al.(2016)Ranganath, Tran, and
  Blei]{ranganath2016hierarchical}
Rajesh Ranganath, Dustin Tran, and David Blei.
\newblock Hierarchical variational models.
\newblock In \emph{International Conference on Machine Learning}, pp.\
  324--333. PMLR, 2016.

\bibitem[Razavi et~al.(2019)Razavi, van~den Oord, and
  Vinyals]{razavi2019generating}
Ali Razavi, Aaron van~den Oord, and Oriol Vinyals.
\newblock Generating diverse high-fidelity images with vq-vae-2.
\newblock In \emph{NIPS}, 2019.

\bibitem[Rey et~al.(2019)Rey, Menkovski, and Portegies]{rey2019Diffusion}
Luis A.~P{\'e}rez Rey, Vlado Menkovski, and Jacobus~W. Portegies.
\newblock Diffusion variational autoencoders.
\newblock \emph{arXiv:1901.08991 [cs, stat]}, March 2019.
\newblock URL \url{http://arxiv.org/abs/1901.08991}.

\bibitem[Rezende \& Mohamed(2015)Rezende and Mohamed]{rezende2015variational}
Danilo Rezende and Shakir Mohamed.
\newblock Variational inference with normalizing flows.
\newblock In \emph{International Conference on Machine Learning}, pp.\
  1530--1538. PMLR, 2015.

\bibitem[Sason(2019)]{sason2019DataProcessing}
Igal Sason.
\newblock On data-processing and majorization inequalities for f-divergences
  with applications.
\newblock \emph{Entropy}, 21\penalty0 (10):\penalty0 1022, October 2019.
\newblock ISSN 1099-4300.
\newblock \doi{10.3390/e21101022}.

\bibitem[Scaman \& Virmaux(2018)Scaman and Virmaux]{scaman2018lipschitz}
Kevin Scaman and Aladin Virmaux.
\newblock Lipschitz regularity of deep neural networks: analysis and efficient
  estimation.
\newblock In \emph{Proceedings of the 32nd International Conference on Neural
  Information Processing Systems}, pp.\  3839--3848, 2018.

\bibitem[Shi et~al.(2020)Shi, Zhou, Miao, and Li]{shi2020dispersed}
Wenxian Shi, Hao Zhou, Ning Miao, and Lei Li.
\newblock Dispersed exponential family mixture vaes for interpretable text
  generation.
\newblock In \emph{International Conference on Machine Learning}, pp.\
  8840--8851. PMLR, 2020.

\bibitem[Skopek et~al.(2019)Skopek, Ganea, and
  B{\'e}cigneul]{skopek2019Mixedcurvature}
Ondrej Skopek, Octavian-Eugen Ganea, and Gary B{\'e}cigneul.
\newblock Mixed-curvature variational autoencoders.
\newblock November 2019.
\newblock URL \url{https://arxiv.org/abs/1911.08411v2}.

\bibitem[S{\o}nderby et~al.(2016)S{\o}nderby, Raiko, Maal{\o}e, S{\o}nderby,
  and Winther]{sonderby2016ladder}
Casper~Kaae S{\o}nderby, Tapani Raiko, Lars Maal{\o}e, S{\o}ren~Kaae
  S{\o}nderby, and Ole Winther.
\newblock Ladder variational autoencoders.
\newblock In \emph{NIPS}, 2016.

\bibitem[Song \& Li(2013)Song and Li]{song2013wavelbp}
Tiecheng Song and Hongliang Li.
\newblock Wavelbp based hierarchical features for image classification.
\newblock \emph{Pattern Recognition Letters}, 34\penalty0 (12):\penalty0
  1323--1328, 2013.

\bibitem[Tomczak \& Welling(2018)Tomczak and Welling]{tomczak2018vae}
Jakub~M Tomczak and Max Welling.
\newblock Vae with a vampprior.
\newblock In \emph{21st International Conference on Artificial Intelligence and
  Statistics, AISTATS 2018}, 2018.

\bibitem[Tonolini et~al.(2020)Tonolini, Jensen, and
  Murray-Smith]{tonolini2020variational}
Francesco Tonolini, Bj{\o}rn~Sand Jensen, and Roderick Murray-Smith.
\newblock Variational sparse coding.
\newblock In \emph{Uncertainty in Artificial Intelligence}, pp.\  690--700.
  PMLR, 2020.

\bibitem[Vahdat \& Kautz(2020)Vahdat and Kautz]{vahdat2020nvae}
Arash Vahdat and Jan Kautz.
\newblock Nvae: A deep hierarchical variational autoencoder.
\newblock \emph{arXiv preprint arXiv:2007.03898}, 2020.

\bibitem[van~den Oord et~al.(2017)van~den Oord, Vinyals, and
  Kavukcuoglu]{van2017neural}
Aaron van~den Oord, Oriol Vinyals, and Koray Kavukcuoglu.
\newblock Neural discrete representation learning.
\newblock In \emph{Proceedings of the 31st International Conference on Neural
  Information Processing Systems}, pp.\  6309--6318, 2017.

\bibitem[Vaswani et~al.(2017)Vaswani, Shazeer, Parmar, Uszkoreit, Jones, Gomez,
  Kaiser, and Polosukhin]{vaswani2017attention}
Ashish Vaswani, Noam Shazeer, Niki Parmar, Jakob Uszkoreit, Llion Jones,
  Aidan~N Gomez, Lukasz Kaiser, and Illia Polosukhin.
\newblock Attention is all you need.
\newblock In \emph{NIPS}, 2017.

\bibitem[Webb et~al.(2018)Webb, Golinski, Zinkov, Narayanaswamy, Rainforth,
  Teh, and Wood]{webb2018faithful}
Stefan Webb, Adam Golinski, Robert Zinkov, Siddharth Narayanaswamy, Tom
  Rainforth, Yee~Whye Teh, and Frank Wood.
\newblock Faithful inversion of generative models for effective amortized
  inference.
\newblock In \emph{NeurIPS}, 2018.

\bibitem[Wright et~al.(2009)Wright, Yang, Ganesh, Sastry, and Ma]{4483511}
John Wright, Allen~Y. Yang, Arvind Ganesh, S.~Shankar Sastry, and Yi~Ma.
\newblock Robust face recognition via sparse representation.
\newblock \emph{IEEE Transactions on Pattern Analysis and Machine
  Intelligence}, 31\penalty0 (2):\penalty0 210--227, 2009.
\newblock \doi{10.1109/TPAMI.2008.79}.

\bibitem[Xiao et~al.(2017)Xiao, Rasul, and Vollgraf]{xiao2017/online}
Han Xiao, Kashif Rasul, and Roland Vollgraf.
\newblock Fashion-mnist: a novel image dataset for benchmarking machine
  learning algorithms.
\newblock \emph{arXiv preprint arXiv:1708.07747}, 2017.

\bibitem[Yip \& Sussman(1997)Yip and Sussman]{yip1997sparse}
Kenneth Yip and Gerald~Jay Sussman.
\newblock Sparse representations for fast, one-shot learning.
\newblock In \emph{Proceedings of the fourteenth national conference on
  artificial intelligence and ninth conference on Innovative applications of
  artificial intelligence}, pp.\  521--527, 1997.

\bibitem[Zhao et~al.(2017)Zhao, Song, and Ermon]{zhao2017learning}
Shengjia Zhao, Jiaming Song, and Stefano Ermon.
\newblock Learning hierarchical features from generative models.
\newblock In \emph{International Conference on Machine Learning}, 2017.

\bibitem[Zhao et~al.(2019)Zhao, Song, and Ermon]{zhao2019infovae}
Shengjia Zhao, Jiaming Song, and Stefano Ermon.
\newblock Infovae: Balancing learning and inference in variational
  autoencoders.
\newblock In \emph{Proceedings of the aaai conference on artificial
  intelligence}, volume~33, pp.\  5885--5892, 2019.

\end{thebibliography}
\bibliographystyle{iclr2022_conference}

\newpage
\clearpage

\clearpage
\appendix
\numberwithin{equation}{section}
\numberwithin{figure}{section}
\numberwithin{table}{section}

\begin{appendices}
\section{Proofs}
\label{sec:proof}
\maintheorem*
\begin{proof}


We first prove the inequality from \cref{eq:bound_on_KL}, then we show that \cref{eq:bound_on_KL} is actually an equality when $g_{\psi}$ is invertible, and finally we prove that the reconstruction term is unchanged by $g_{\psi}$.

Let us denote by $\mathcal{F}$ and $\mathcal{G}$ the sigma-algebras of respectively $\Y$ and $\Z$, and we have by construction a measurable map $g_{\psi}: (\Y, \mathcal{F}) \rightarrow (\Z, \mathcal{G})$.
We can actually define the measurable space $(\Z, \mathcal{G})$ as the image of $(\Y, \mathcal{F})$ by $g_{\psi}$, then $g_{\psi}$ is automatically both surjective and measurable.\footnote{We recall that $g_{\psi}$ is said to be measurable if and only if for any $A \in \mathcal{G}$, $g_{\psi}^{-1}(A) \in \mathcal{F}$.}
We also assume that there exists a measure on $\Y$, which we denote $\xi$, and denote with $\nu$ the corresponding pushforward measure by $g_{\psi}$ on $\Z$.
We further have $\nu(A)=\xi(g_{\psi}^{-1}(A))$ for any $A \in \mathcal{G}$.\footnote{The notation $g_{\psi}^{-1}(A)$ does not imply that $g_{\psi}$ is invertible, but denotes the preimage of $A$ which is defined as $g_{\psi}^{-1}(A)=\{y \in \Y ~|~ g_{\psi}(y) \in A \}$.}

We start by proving \cref{eq:bound_on_KL}, where the Kullback-Leibler (KL) divergence between the two pushforward measures\footnote{We denote the pushforward of a probability measure $\chi$ along a map $g$ by $\chi \circ g^{-1}$.} 
$q_{\phi,\psi} \triangleq q_{\phi} \circ g_{\psi}^{-1}$ and $p_{\psi} \triangleq p \circ g_{\psi}^{-1}$ 
is upper bounded by $\KL{q_{\phi}(y|x)}{p(y)}$, where here we have $p(y)=\mathcal{N}(y;0,I)$ but we will use $p$ as a convenient shorthand.
At a high-level, we essentially have that \cref{eq:bound_on_KL} follows directly the data processing inequality \citep{sason2019DataProcessing} with a deterministic kernel $z=g_{\psi}(y)$.
%
Nonetheless, we develop in what follows a proof which additionally gives sufficient conditions for when this inequality becomes non-strict.
We can assume that $\KL{q_{\phi}(y|x)}{\mathcal{N}(y; 0, I)}$ is finite, as otherwise the result is trivially true, which in turn implies $q_{\phi} \ll p$.\footnote{We denote the absolute continuity of measures with $\ll$, where $\mu$ is said to be absolutely continuous w.r.t.\ $\nu$, i.e.\ $\mu \ll \nu$, if for any measurable set $A$, $\nu(A)=0$ implies $\mu(A)=0$.}
For any $A \in \mathcal{G}$, we have that if 
$p_{\psi}(A) = p \circ g_{\psi}^{-1}(A) = p(g_{\psi}^{-1}(A)) = 0$ then this implies $q_{\phi}(g_{\psi}^{-1}(A)) = q_{\phi} \circ g_{\psi}^{-1}(A) = q_{\phi, \psi}(A) = 0$.
As such, we have that $q_{\phi,\psi} \ll p_{\psi}$ and so the $\KL{q_{\phi,\psi}(z|x)}{p_{\psi}(z)}$ is also defined.

Our next significant step is to show that
\begin{align} \label{eq:lemma}
\E_{p(y)} \left[\frac{q_{\phi}}{p} \given[\Big] \sigma(g_{\psi}) \right] = \frac{q_{\phi} \circ g_{\psi}^{-1}}{p \circ g_{\psi}^{-1}} \circ g_{\psi},
\end{align}
where $\sigma(g_{\psi})$ denotes the sigma-algebra generated by the function $g_{\psi}$.
To do this, let $h : (\Z, \mathcal{G}) \rightarrow (\R_+, \mathcal{B}(\R_+))$ be a measurable function s.t.\ $\E_{p(y)} \left[\frac{q_{\phi}}{p} \given[\Big] \sigma(g_{\psi}) \right] = h \circ g_{\psi}$. 
To show this, we will demonstrate that they lead to equivalent measures when integrated over any arbitrary set $A \in \mathcal{G}$:
\begin{align*} 
   \int_{\Z} \mathds{1}_{A} ~\frac{q_{\phi} \circ g_{\psi}^{-1}}{p \circ g_{\psi}^{-1}} ~ p \circ g_{\psi}^{-1} ~d\nu
   &= \int_{\Z} \mathds{1}_{A} ~q_{\phi} \circ g_{\psi}^{-1} ~d\nu 
   = \int_{\Z} \mathds{1}_{A} ~d(q_{\phi} \circ g_{\psi}^{-1}) \\
   &\stackrel{(a)}{=} \int_{\Y} (\mathds{1}_{A} \circ g_{\psi}) ~dq_{\phi} 
   = \int_{\Y} (\mathds{1}_{A} \circ g_{\psi}) ~q_{\phi} ~d\xi \\
   &\stackrel{(b)}{=} \int_{\Y} (\mathds{1}_{A} \circ g_{\psi}) ~\frac{q_{\phi}}{p} ~p ~d\xi \\
   &\stackrel{(c)}{=} \int_{\Y} (\mathds{1}_{A} \circ g_{\psi}) ~\E_{p(y)} \left[\frac{q_{\phi}}{p} \given[\Big] \sigma(g_{\psi}) \right] ~p ~d\xi \\
   &\stackrel{(d)}{=} \int_{\Y} (\mathds{1}_{A} \circ g_{\psi}) ~(h \circ g_{\psi}) ~p ~d\xi 
   = \int_{\Y} (\mathds{1}_{A} \circ g_{\psi}) ~(h \circ g_{\psi}) ~dp \\
   \displaybreak[0]
   &\stackrel{(e)}{=} \int_{\Z} \mathds{1}_{A} ~h ~d(p \circ g_{\psi}^{-1}) 
   = \int_{\Z} \mathds{1}_{A} ~h ~(p \circ g_{\psi}^{-1}) ~d\nu,
\end{align*}
where we have leveraged the definition of pushforward measures in (a \& e); the absolute continuity of $q_{\phi}$ w.r.t.\ $p$ in (b); the conditional expectation definition in (c); and the definition of $h$ in (d).
By equating terms, we have that $q_{\phi} \circ g_{\psi}^{-1} / p \circ g_{\psi}^{-1} = h$, almost-surely with respect to $q_{\phi} \circ g_{\psi}^{-1}$ and thus that \cref{eq:lemma} is verified.

Let us define $f: x \mapsto x \log(x)$, which is strictly convex on $[0, \infty)$ (as it can be prolonged with $f(0)=0$).
We have the following
\begin{align*}
    \KL{q_{\phi,\psi}(z|x)}{p_{\psi}(z)}
    &\stackrel{(a)}{=} \int_{\Z} \log\left(\frac{q_{\phi,\psi}}{p_{\psi}} \right) q_{\phi,\psi} ~d\nu \\
    &\stackrel{(b)}{=} \int_{\Z} \log\left(\frac{q_{\phi,\psi}}{p_{\psi}} \right) \frac{q_{\phi,\psi}}{p_{\psi}} ~p_{\psi} ~d\nu \\
    &\stackrel{(c)}{=} \int_{\Z} f \left(\frac{q_{\phi,\psi}}{p_{\psi}} \right) ~p_{\psi}~ d\nu 
    = \int_{\Z} f \left(\frac{q_{\phi,\psi}}{p_{\psi}} \right) ~d(p \circ g_{\psi}^{-1}) \\
    &\stackrel{(d)}{=} \int_{\Y} f \left(\frac{q_{\phi,\psi}}{p_{\psi}} \circ g_{\psi} \right) ~dp
    \stackrel{}{=} \int_{\Y} f \left(\frac{q_{\phi} \circ g_{\psi}^{-1}}{p \circ g_{\psi}^{-1}} \circ g_{\psi} \right) p ~d\xi \\
    &\stackrel{(e)}{=} \int_{\Y} f\left( \E_{p(y)} \left[\frac{q_{\phi}}{p} \given[\Big] \sigma(g_{\psi}) \right] \right) p ~d\xi\\
    &\stackrel{(f)}{\le} \int_{\Y} \E_{p(y)} \left[ f\left(\frac{q_{\phi}}{p}\right) \given[\Big] \sigma(g_{\psi}) \right] p ~d\xi \\ 
    &\stackrel{(g)}{=} \int_{\Y} f \left( \frac{q_{\phi}}{p} \right) p ~d\xi \\
    &\stackrel{(h)}{=} \int_{\Y} \log \left( \frac{q_{\phi}}{p} \right) \frac{q_{\phi}}{p} ~p ~d\xi \\
    &\stackrel{(i)}{=}\E_{q_{\phi}(y|x)} \left[\log \left(\frac{q_{\phi}(y|x)}{p(y)} \right) \right] \\
    &\stackrel{(j)}{=}\KL{q_{\phi}(y|x)}{p(y)},
\end{align*}
where we leveraged the definition of the KL divergence in (a \& j); the absolute continuity of $q_{\phi}$ w.r.t.\ $p$ in (b \& i); the definition of $f$ in (c \& h); the definition of the pushforward measure in (d); \cref{eq:lemma} in (e); the conditional Jensen inequality in (f) and the law of total expectation in (g).
Note that this proof not only holds for the KL divergence, but for any f-divergences as they are defined as in (b) with $f$ convex.

To prove \cref{eq:elbo_equivalence}, we now need to show that line (f) above becomes an equality when $g_{\psi}$ is invertible.
As $f$ is strictly convex, this happens if and only if $\frac{q_{\phi}}{p} = \E_{p(y)} \left[\frac{q_{\phi}}{p} \given[\Big] \sigma(g_{\psi}) \right]$.
A sufficient condition for this to be true is for $\frac{q_{\phi}}{p}$ to be measurable w.r.t.\ $\sigma(g_{\psi})$ which is satisfied when $g_{\psi}: \Y \mapsto \Z$ is invertible as $\sigma(g_{\psi}) \supseteq \mathcal{F}$, as required.
We have thus shown that the KL divergences are equal when using an invertible $g_{\psi}$.
%



For the reconstruction term, we instead have
\begin{align*}
    \E_{q_{\phi}(y|x)} [\log p_{\theta}(x|g_{\psi}(y))]
    &=\int_{\Y} \log p_{\theta}(x|g_{\psi}(y)) q_{\phi}(y|x)d\xi\\
    &=\int_{\Z} \log p_{\theta}(x|z) q_{\phi,\psi}(z|x)d\nu\\
    &=\E_{q_{\phi,\psi}(z|x)} [\log p_{\theta}(x|z)].
\end{align*}
\cref{eq:elbo_equivalence} now follows from the fact that both the reconstruction and KL terms are equal.

\end{proof}

\newpage 
\section{Hierarchical Representations}


%
\label{sec:hierarchical_appendix}
 \begin{wrapfigure}[15]{r}{0.4\textwidth}
 \vspace{-0.6cm}
  \centering
  \includegraphics[width=0.95\textwidth]{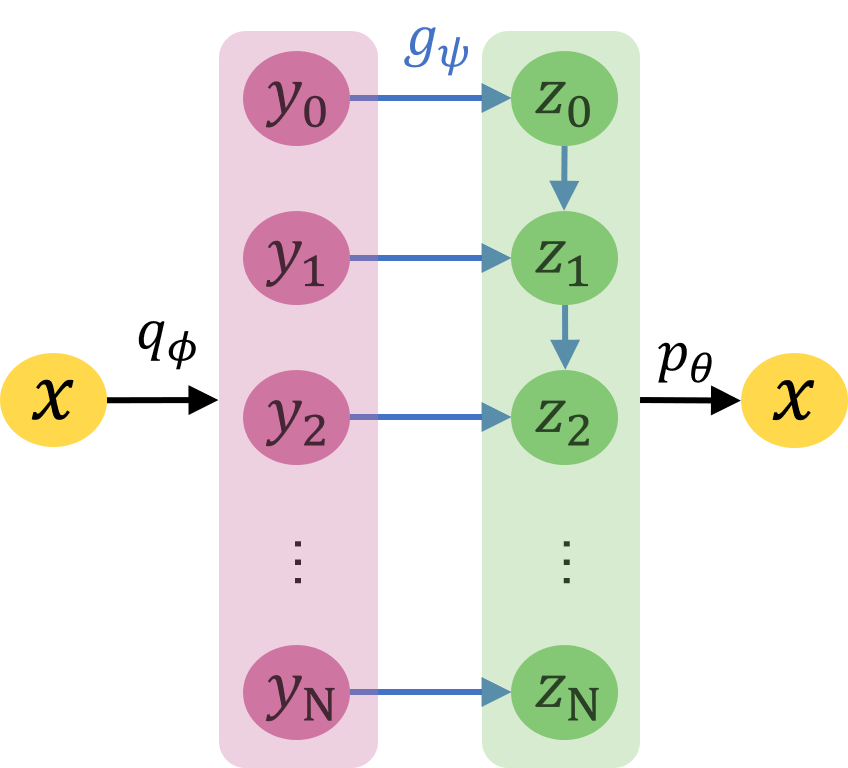}
  \vspace{-4pt}
  \caption{Graphical model for hierarchical
  \method
  }
  \label{fig:hierarchical_mapping}
\end{wrapfigure}
The isotropic Gaussian prior in standard \Glspl{VAE} assumes that representations are independent across dimensions~\citep{kumar2018variational}.
However, this assumption is often unrealistic~\citep{belghazi2018hierarchical,mathieu2019disentangling}.
For example, in Fashion-MNIST, high-level features such as object category, may affect low-level features such as shape or height. 
Separately extracting such global and local information can be beneficial for visualization and data manipulation~\citep{zhao2017learning}. 
To try and capture this, we introduce an inductive bias that is tailored to model and learn hierarchical features.
We note here that our aim is not to try and provide a state-of-the-art hierarchical \gls{VAE} approach, as a wide variety of highly--customized and powerful approaches are already well--established, but to show how easily the \method framework can be used to induce hierarchical representations in a simple, lightweight, manner.

\paragraph{Mapping design}
Following existing ideas from hierarchical \glspl{VAE}~\citep{sonderby2016ladder, zhao2017learning}, we propose a hierarchical mapping $g_\psi$.
As shown in \cref{fig:hierarchical_mapping}, the intermediary Gaussian variable $y$ is first split into a set of $N$ layers
$[y_0, y_1,...,y_N]$. 
The mapping $z=g_\psi(y)$ is then recursively defined as
$z_i = \text{NN}_i(z_{i-1}, y_i)$,
%
%
where $\text{NN}_i$ is a neural network combining information from higher-level feature $z_{i-1}$ and new information from $y_i$.
As a result, we get a hierarchical encoding $z=[z_0, z_1,...,z_N]$, where high-level features influence low-level ones but not vice-versa.
This $g_\psi$ thus endows \methods with hierarchical representations.

\begin{wrapfigure}[24]{r}{0.44\textwidth}
 \vspace{-1.0em}
 \centering
 \setcounter{subfigure}{0}
 \vspace{-3pt}
 \sidesubfloat[]{
 \label{fig:hierarchical_1}
 \centering
 \includegraphics[width=0.8\textwidth]{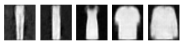}
 }
 \vspace{-3pt}
 \sidesubfloat[]{
 \label{fig:hierarchical_2}
 \centering
 \includegraphics[width=0.8\textwidth]{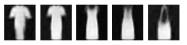}
 }\vspace{-3pt}
 \sidesubfloat[]{
 \label{fig:hierarchical_3}
 \centering
 \includegraphics[width=0.8\textwidth]{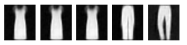}
 }\vspace{-3pt}
 \sidesubfloat[]{
 \label{fig:hierarchical_4}
 \centering
 \includegraphics[width=0.8\textwidth]{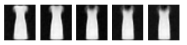}
 }\vspace{-3pt}
 \sidesubfloat[]{
 \label{fig:hierarchical_5}
 \centering
 \includegraphics[width=0.8\textwidth]{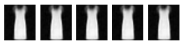}
 }
 \vspace{-1.2ex}
 \caption{
 Manipulating representations of a hierarchical \method. The features are split into 5 levels, with each of (a) [highest] to (e) [lowest] corresponding to an example feature from each.
 We see that high-level features control more complex properties, such as class label or topological structure, while low-level features control simpler details, (e.g.~(d) controls collar shape).
 \vspace{-1.2em}
 }
 \label{fig:example_hierarchical}
\end{wrapfigure}
\paragraph{Experiments}
While conventional hierarchical \glspl{VAE}, e.g.~\citep{sonderby2016ladder, zhao2017learning,vahdat2020nvae}, use hierarchies to try and improve generation quality, our usage is explicitly from the representation perspective, with our experiments set up accordingly.
Fig.~\ref{fig:example_hierarchical} shows some hierarchical features learned by \method on \textbf{Fashion-MNIST}. %
We observe that high-level information such as categories have indeed been learned in the top-level features, while low-level features control more detailed aspects.

%
To provide more quantitative investigation, we also consider the 
\textbf{CelebA} dataset~\citep{liu2015faceattributes} 
and investigate performance 
on downstream tasks, comparing to vanilla-\glspl{VAE} with different latent dimensions. 
For this, we train a linear classifier to predict all $40$ binary labels from the learned features for each method.
In order to eliminate the effect of latent dimensions, we compare \method (with fixed latent dimension $128$) and vanilla \gls{VAE} with different latent dimensions~($1,2,4,8,16,32,64,128$).
We show experiment results on some labels as well as the average accuracy on all labels in \cref{tab:celebA_downstream} and \cref{fig:hierarchical_downstream}.
We first find that the optimal latent dimension increases with the number of data points for the vanilla-\glspl{VAE}, but is always worse than the \method. 
Notably, the accuracy with \method is quite robust, even as the number of data points gets dramatically low, indicating high data efficiency. 
To the best of our knowledge, this is the first result showing that a hierarchical inductive bias in \gls{VAE} is beneficial to feature quality.

\textbf{Related work}~~ Hierarchical \glspl{VAE}~\citep{vahdat2020nvae, ranganath2016hierarchical,sonderby2016ladder, klushynlearning, zhao2017learning} seek to improve the fit and generation quality of \glspl{VAE} by recursively correcting the generative distributions.
However, they require careful design of neural layers, and the hierarchical KL divergence makes training deep hierarchical \glspl{VAE} unstable~\citep{vahdat2020nvae}. 
In comparison, \method with hierarchical mappings is extremely easy to implement without causing any computational instabilities, while its aims also differ noticeably: our approach successfully learns hierarchical \emph{representations}---something that is rarely mentioned in prior works.

\begin{figure}[p]
 \includegraphics[width=1.0\textwidth]{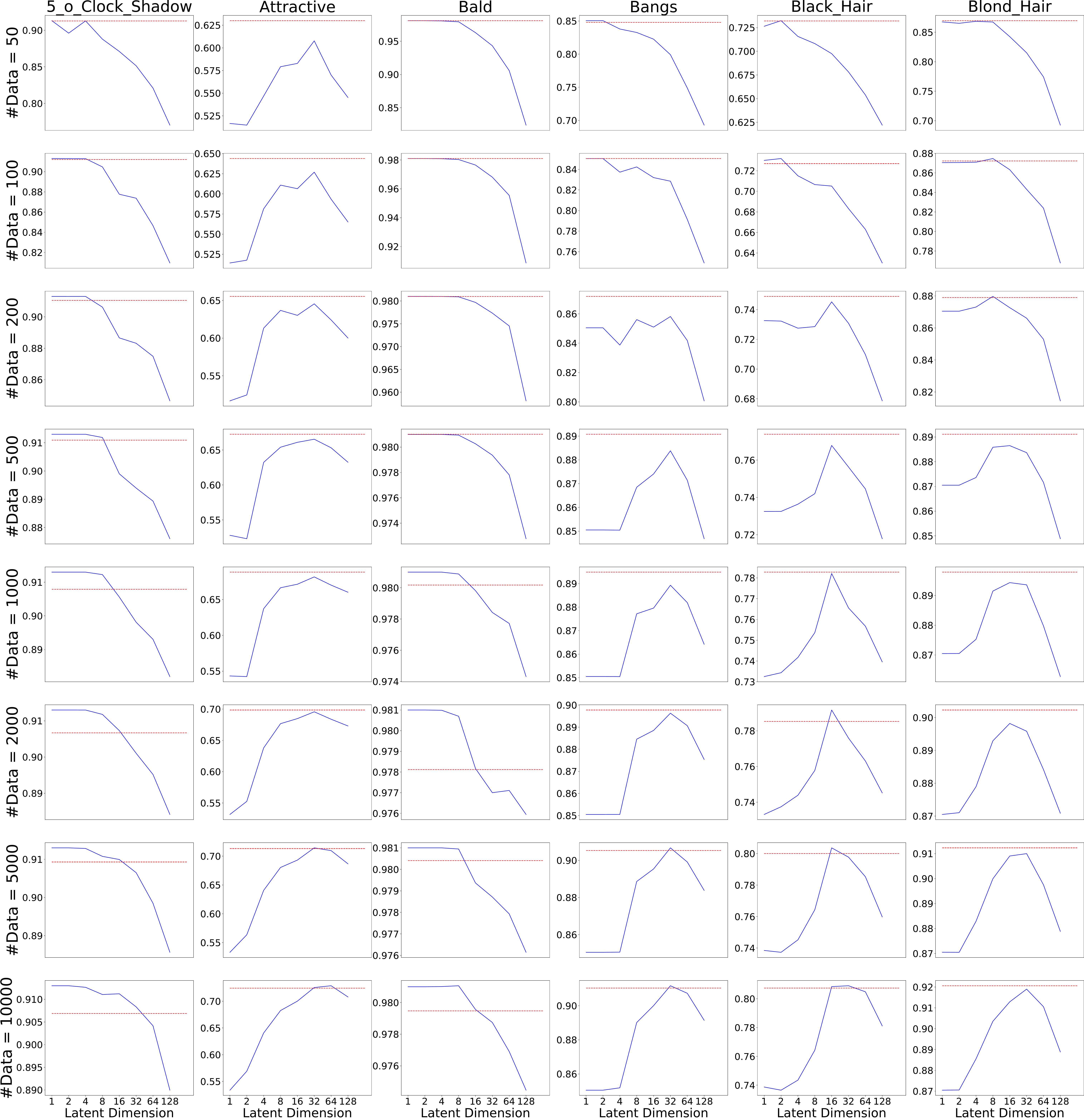}
 \caption{\method 's performance of attribute prediction on \textbf{CelebA} dataset. Each column shows results on the same feature with different data sizes and each column shows results on different features. In each graph, test accuracy of vanilla-\gls{VAE} with different latent dimensions are shown in blue line. And results of \method with hierarchical prior are shown in red. We find that our method (red line) achieves comparable or even better results compared with vanilla-VAE with all latent dimensions.}
 \label{fig:hierarchical_downstream}
\end{figure}

\begin{table}[tb]
\caption{
Average accuracy in predicting all 40 binary labels of \textbf{CelebA}. Overall best accuracy is shown in bold and best results of vanilla-\glspl{VAE} are underlined for comparison. Each experiment is repeated 10 times and differences are significant at the $5\%$ level for 
data size $\le 1000$.
}
\centering
\label{tab:celebA_downstream}
{\small \begin{tabular}{ llcccccc } 
 \toprule
 Model & Latent  dim&\multicolumn{6}{c}{Data size}\\\cmidrule{3-8}
 &&50&100&500&1000&5000&10000\\\midrule
 \gls{VAE}&8&\underline{0.791}&0.799&0.814&0.815&0.819&0.819\\
 &16&0.788&\underline{0.801}&0.820&0.824&0.829&0.831\\
 &32&0.769&0.795&0.825&\underline{0.832}&0.842&0.846\\
 &64&0.767&0.794&\underline{0.826}&\underline{0.832}&\underline{0.849}&\underline{0.855}\\
 &128&0.722&0.765&0.817&0.825&0.830&0.852\\\midrule
 \method&64&\textbf{0.817}&\textbf{0.824}&\textbf{0.841}&\textbf{0.846}&\textbf{0.854}&\textbf{0.857}\\
 \bottomrule
\end{tabular}}
\end{table}

\newpage
\label{sec:hierarchical}

\section{Full Method and Experiment Details}
\label{sec:mapping_detail}
In this section, we first provide complete details of the mapping designs used for our different \method realizations along with some additional experiments.  We then provide other general information about datasets, network structures, and experiment settings to facilitate results reproduction.


\subsection{Multiple-connectivity}
\label{sec:mapping_multiply_connected_appendix}
\paragraph{Mapping design}
Full details for this mapping were given in the main paper.~\Cref{fig:glue} provides a further illustration of the gluing process.  Additional resulting including the Vamp-VAE are given in~\cref{fig:examples_low_dim}.

\begin{figure}[h]
 \centering
 \subfloat[Circular prior with $h=1$]{
 \label{fig:ot_1}
 \centering
 \includegraphics[width=0.25\textwidth]{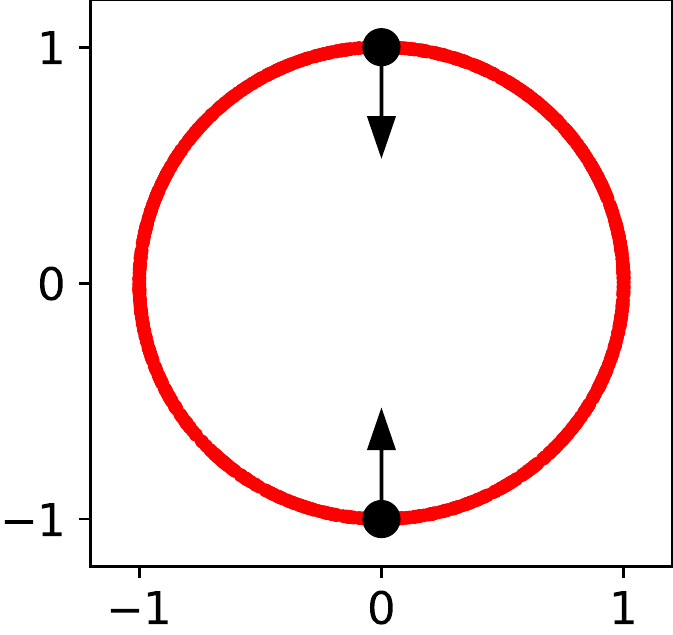}
 }
 \subfloat[Glue point pair]{
 \label{fig:ot_2}
 \centering
 \includegraphics[width=0.25\textwidth]{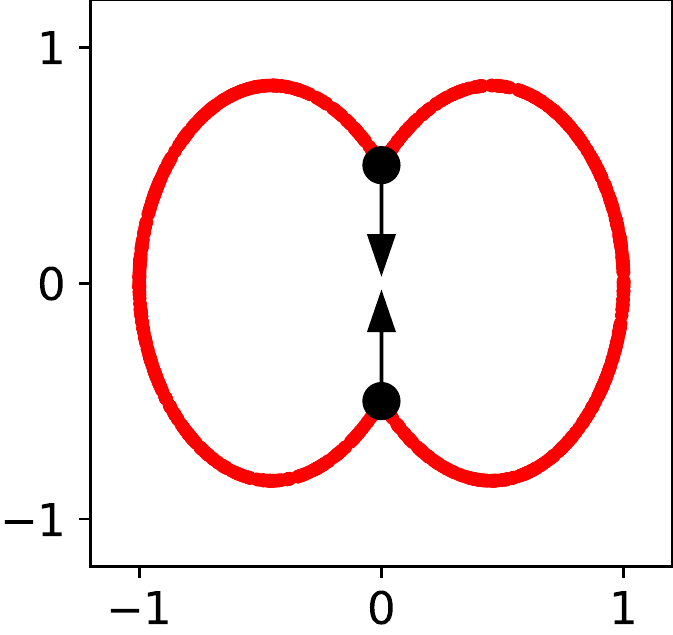}
 }
 \subfloat[Implied prior with $h=2$]{
 \label{fig:ot_3}
 \centering
 \includegraphics[width=0.25\textwidth]{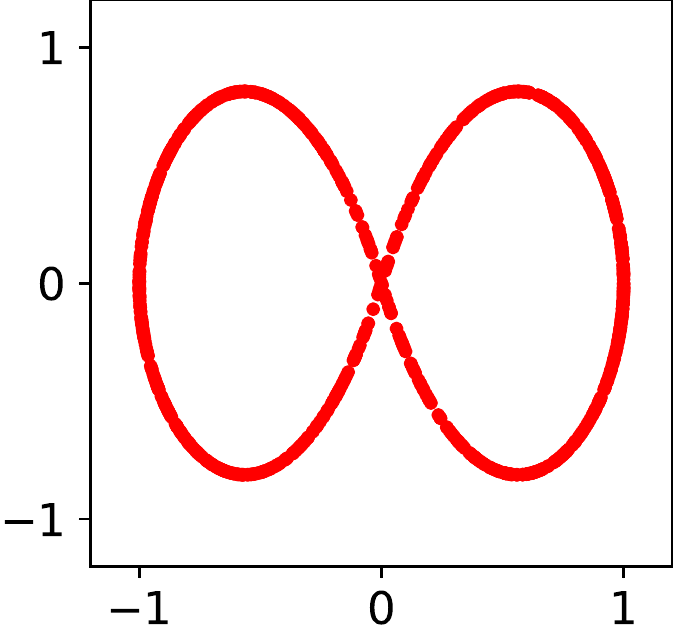}
 }
 \caption{An illustration of the glue function in multiply-connected mappings.}
 \label{fig:glue}
\end{figure}

\subsection{Multi-modality}
\paragraph{Mapping design}
\label{sec:clustered_appendix}

In \cref{sec:clustered}, we see the general idea of designing clustered mappings. In this part, we delve into the details of mapping design as well as extending it to 1 dimensional and high-dimensional cases.
For simplicity's sake let us temporarily assume that the dimension of $\Y$ is $2$.
Our approach is based on splitting the original space into $K$ equally sized sectors, where $K$ is the number of clusters we wish to create, as shown in~\cref{fig:cluster_dist_b_large}.
For any point $y$, we can get its component~(sector) index $\text{ci}(y)$ 
as well as its distance from the sector boundary $\text{dis}(y)$. By further defining the radius direction for the $k$-th sector (cf~\cref{fig:cluster_dist_c_large}) as 
\[
\Delta(k) = \left(\cos\left(\frac{2\pi}{K}\left(k+\frac{1}{2}\right)\right), \sin\left(\frac{2\pi}{K}\left(k+\frac{1}{2}\right)\right)\right) \quad \forall k\in\{1,\dots,K\},
\]
we can in turn define $g(y)$ as:
\begin{align}
    \label{eqn:cluster}
    \text{r}(y) &= \Delta(\text{ci}(y)),\\
    g(y) &= y + {c_1}\text{dis}(y)^{c_2}\text{r}(y),
\end{align}
where $c_1$ and $c_2$ are constants, which are set to 5 and 0.2 in our experiments.
we make sure $g$ still continuous by keeping $g(y)=y$ on boundaries.

\begin{figure}[t]
 \centering
 \subfloat{
 \label{fig:sphere_real_appendix}
 \centering
 \includegraphics[width=0.22\textwidth]{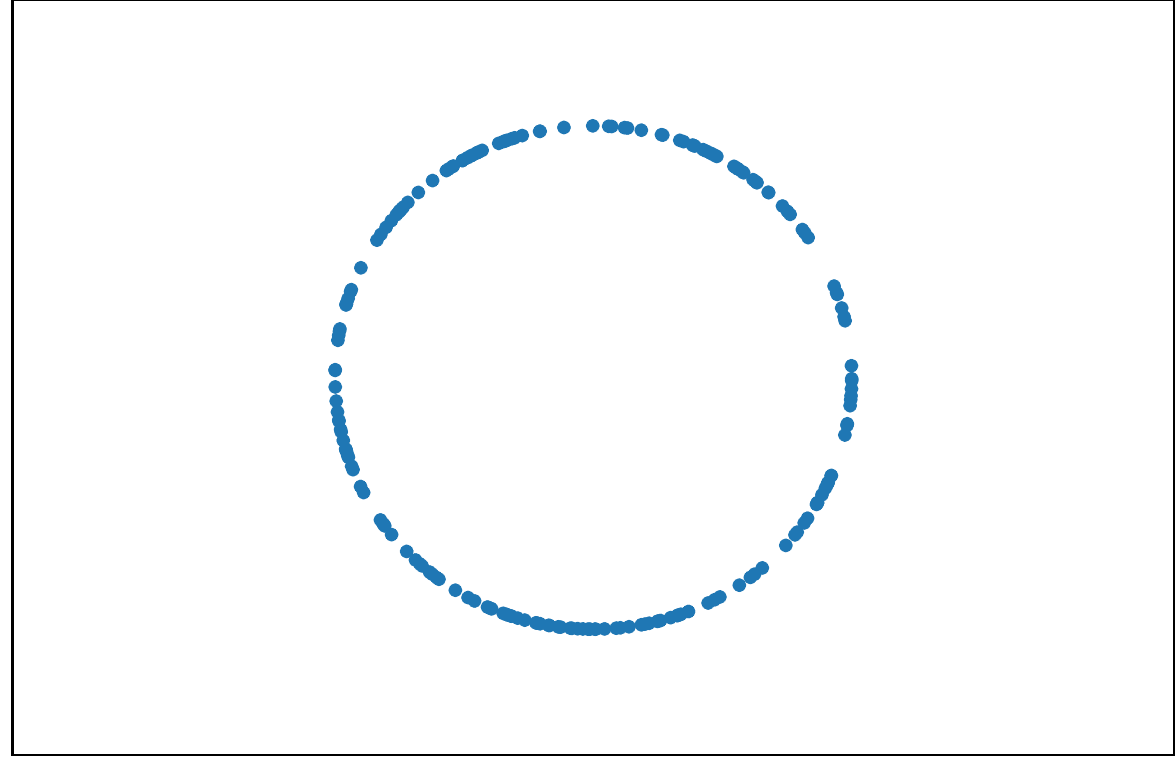}
 }
 \subfloat{
 \label{fig:sphere_vanilla_appendix}
 \centering
 \includegraphics[width=0.22\textwidth]{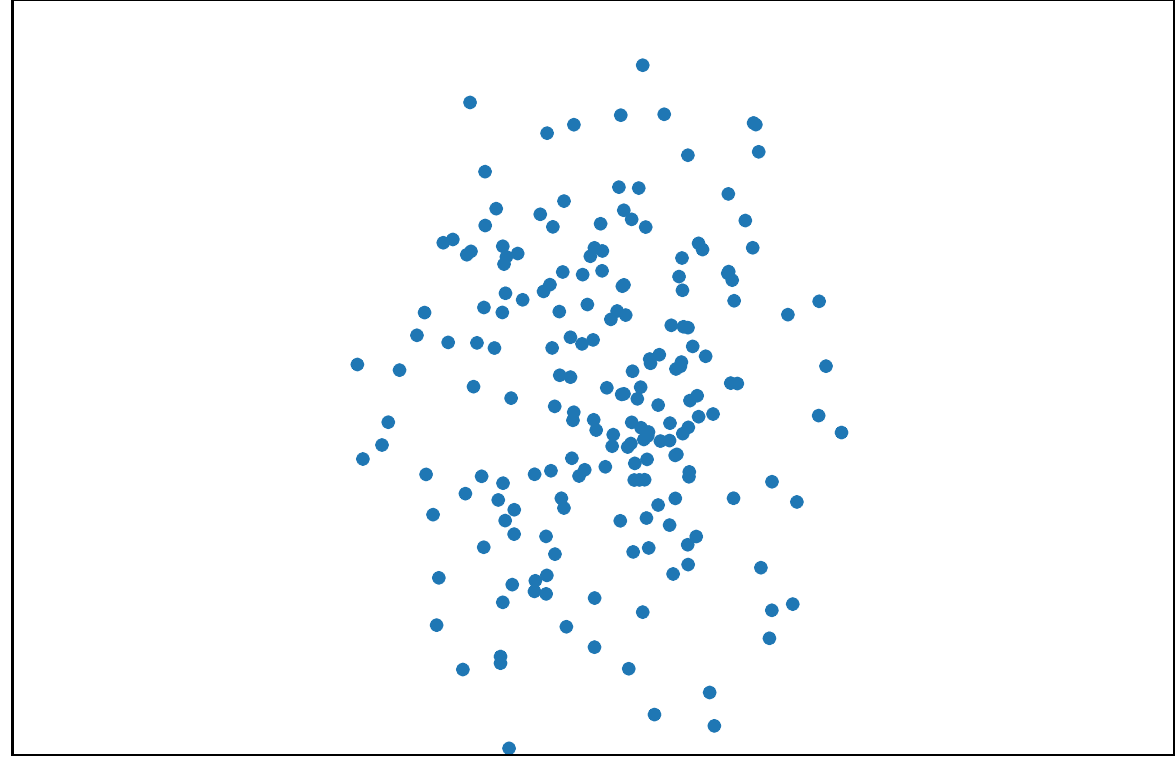}
 }
 \subfloat{
 \label{fig:sphere_vamp_appendix}
 \centering
 \includegraphics[width=0.22\textwidth]{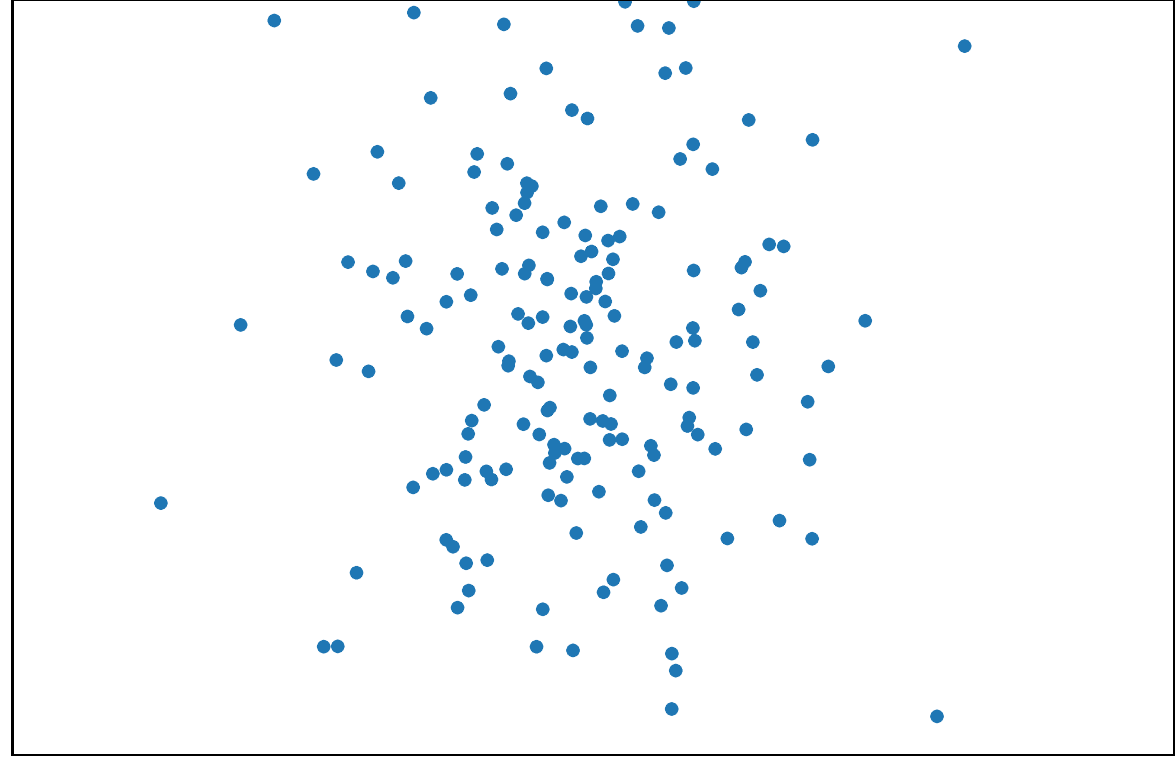}
 }
 \subfloat{
 \label{fig:sphere_ours_appendix}
 \centering
 \includegraphics[width=0.22\textwidth]{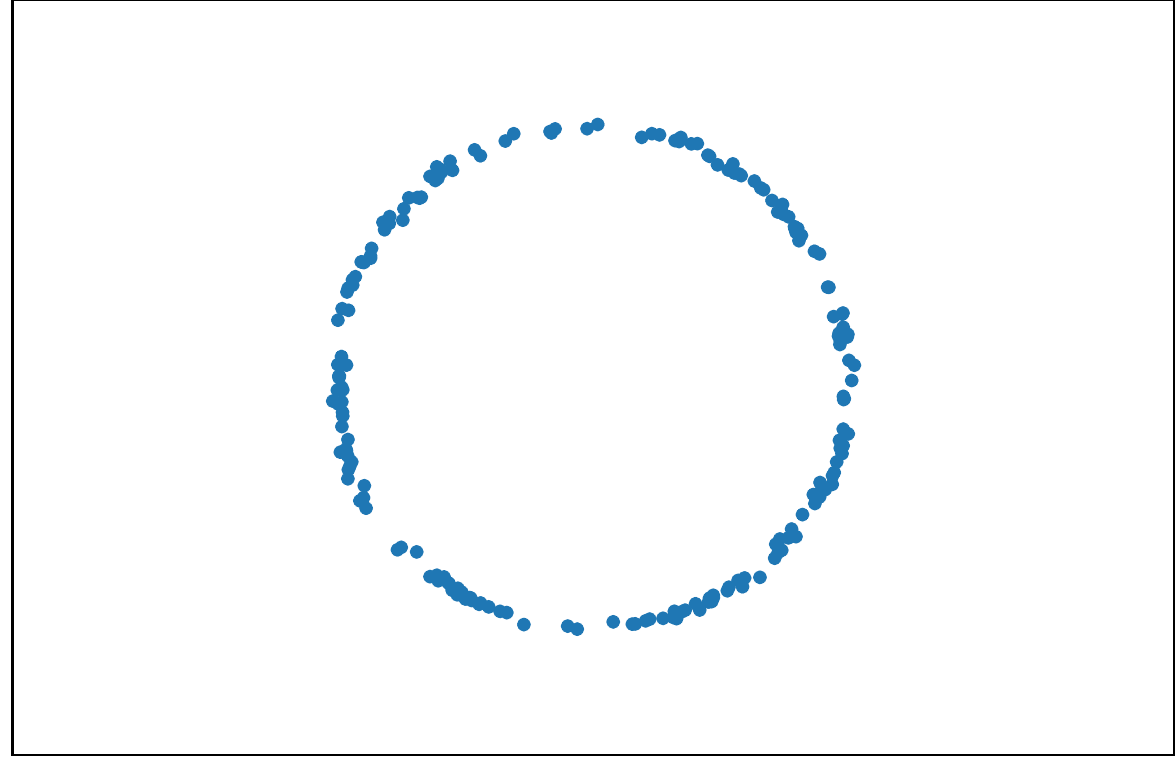}
 }
 \vspace{-10pt}
 
 \subfloat{
 \label{fig:square_real_appendix}
 \centering
 \includegraphics[width=0.22\textwidth]{figure/square_real.pdf}
 }
 \subfloat{
 \label{fig:square_vanilla_appendix}
 \centering
 \includegraphics[width=0.22\textwidth]{figure/square_vanilla.pdf}
 }
 \subfloat{
 \label{fig:square_vamp_appendix}
 \centering
 \includegraphics[width=0.22\textwidth]{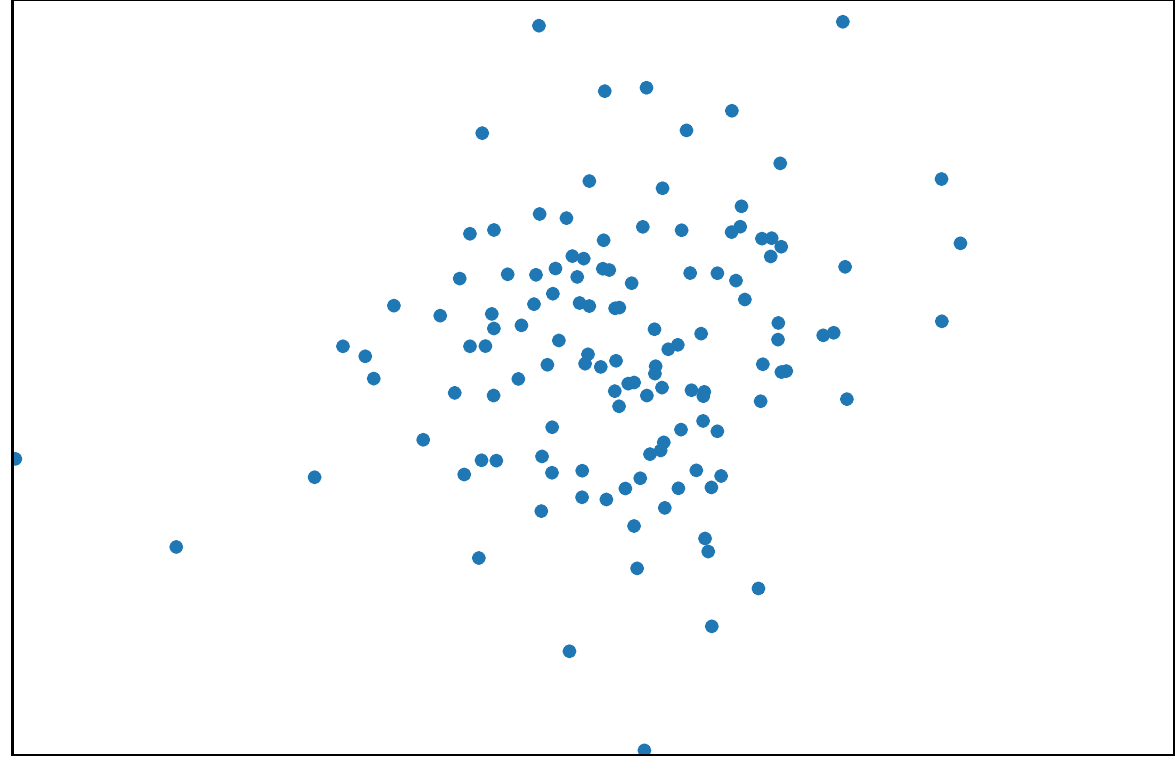}
 }
 \subfloat{
 \label{fig:square_ours_appendix}
 \centering
 \includegraphics[width=0.22\textwidth]{figure/square_ours.pdf}
 }

 \vspace{-10pt}
  \subfloat{
 \label{fig:star_real_appendix}
 \centering
 \includegraphics[width=0.22\textwidth]{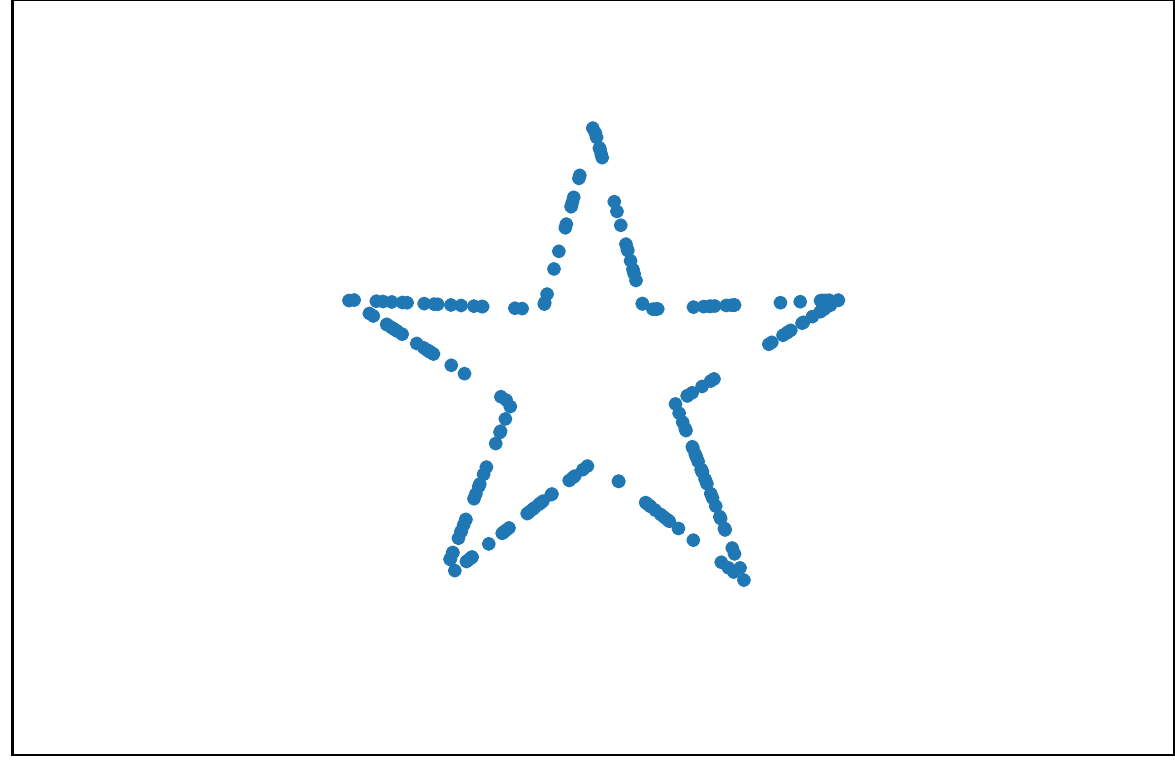}
 }
 \subfloat{
 \label{fig:star_vanilla_appendix}
 \centering
 \includegraphics[width=0.22\textwidth]{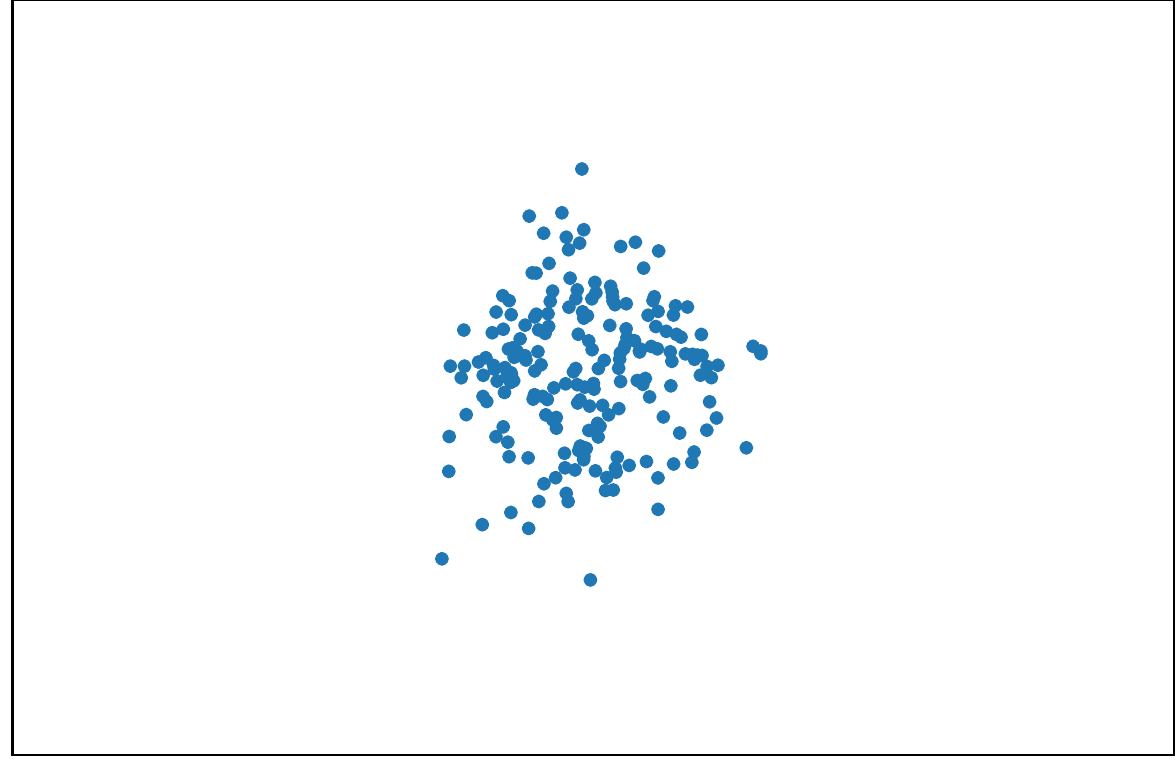}
 }
 \subfloat{
 \label{fig:star_vamp_appendix}
 \centering
 \includegraphics[width=0.22\textwidth]{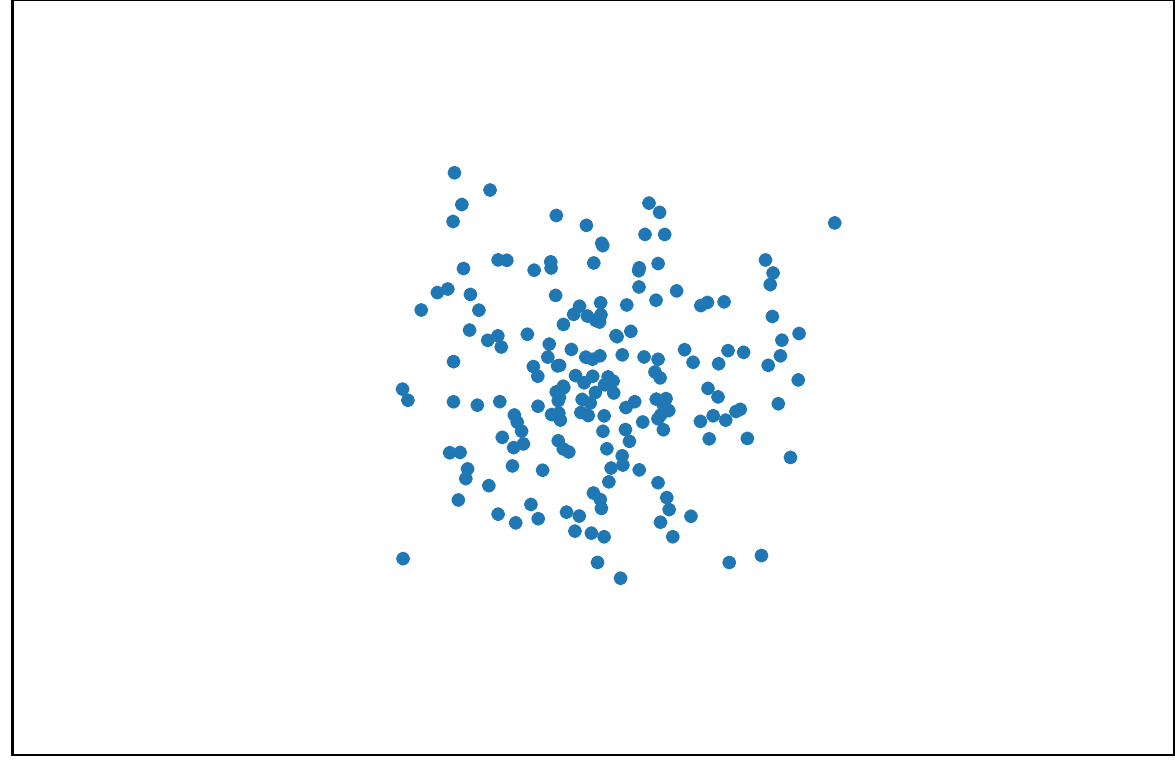}
 }
 \subfloat{
 \label{fig:star_ours_appendix}
 \centering
 \includegraphics[width=0.22\textwidth]{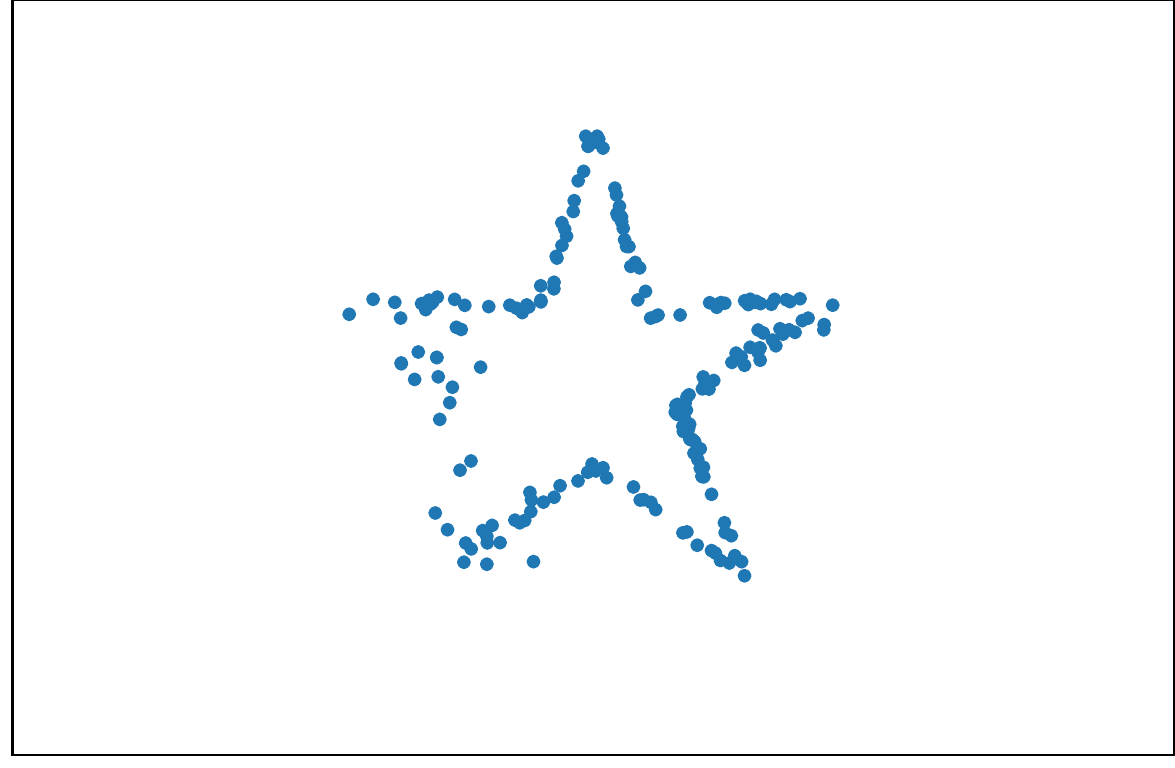}
 }
 
 \vspace{-10pt}
  \subfloat{
 \label{fig:infinity_real_appendix}
 \centering
 \includegraphics[width=0.22\textwidth]{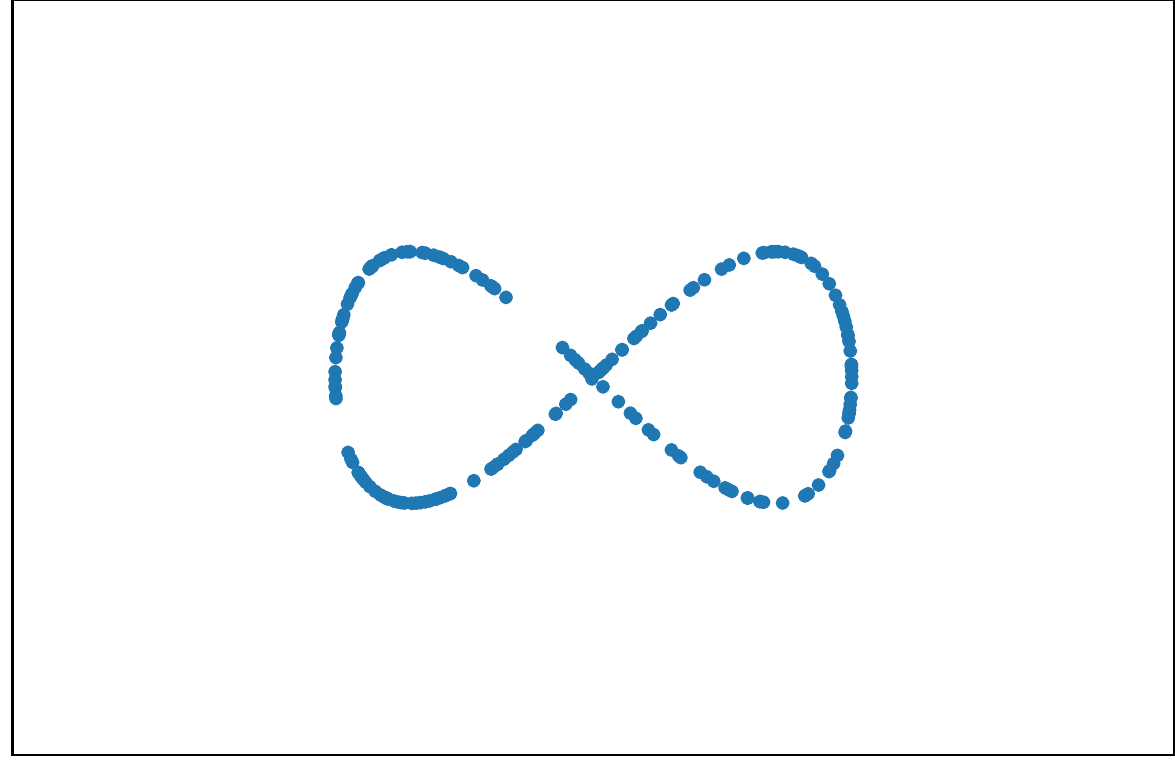}
 }
 \subfloat{
 \label{fig:infinity_vanilla_appendix}
 \centering
 \includegraphics[width=0.22\textwidth]{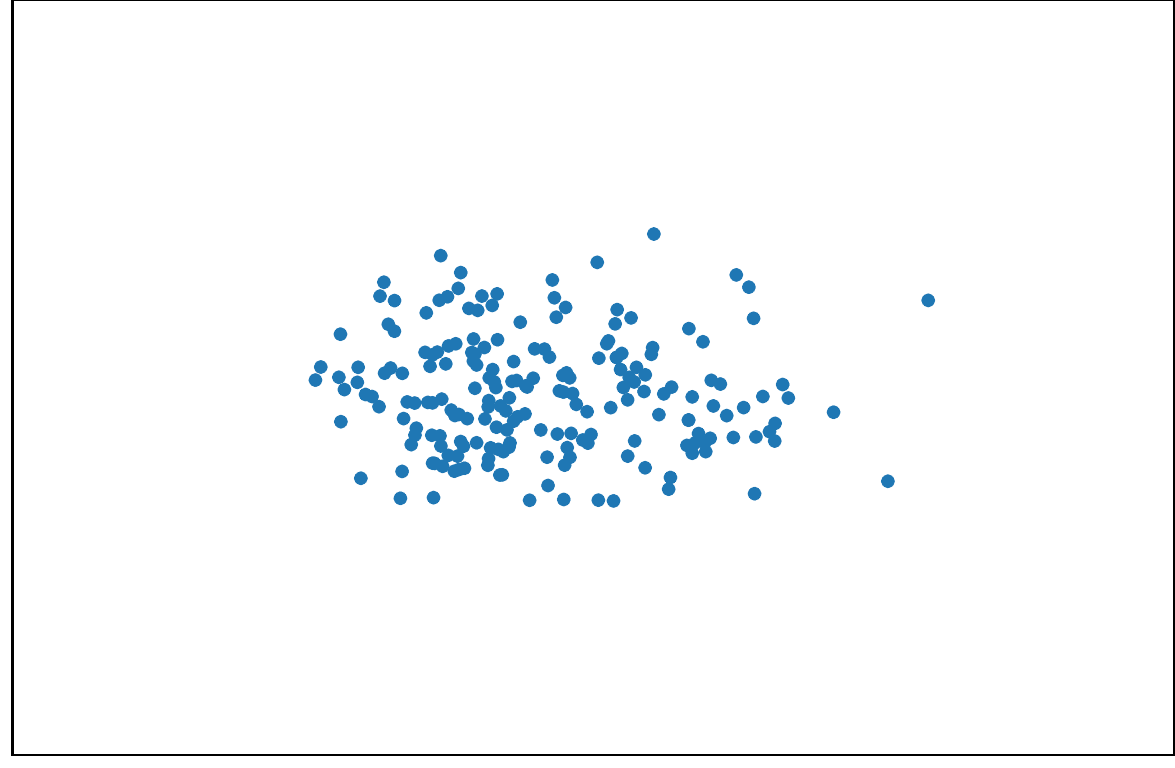}
 }
 \subfloat{
 \label{fig:infinity_vamp_appendix}
 \centering
 \includegraphics[width=0.22\textwidth]{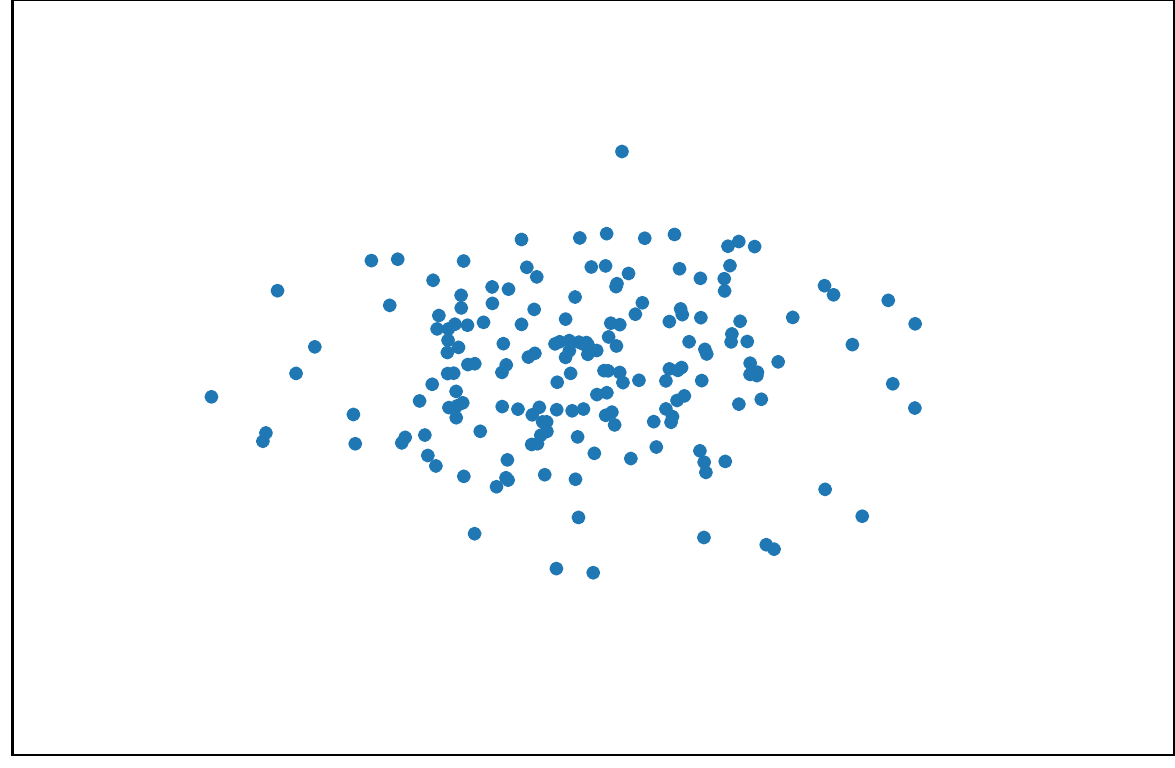}
 }
 \subfloat{
 \label{fig:infinity_ours_appendix}
 \centering
 \includegraphics[width=0.22\textwidth]{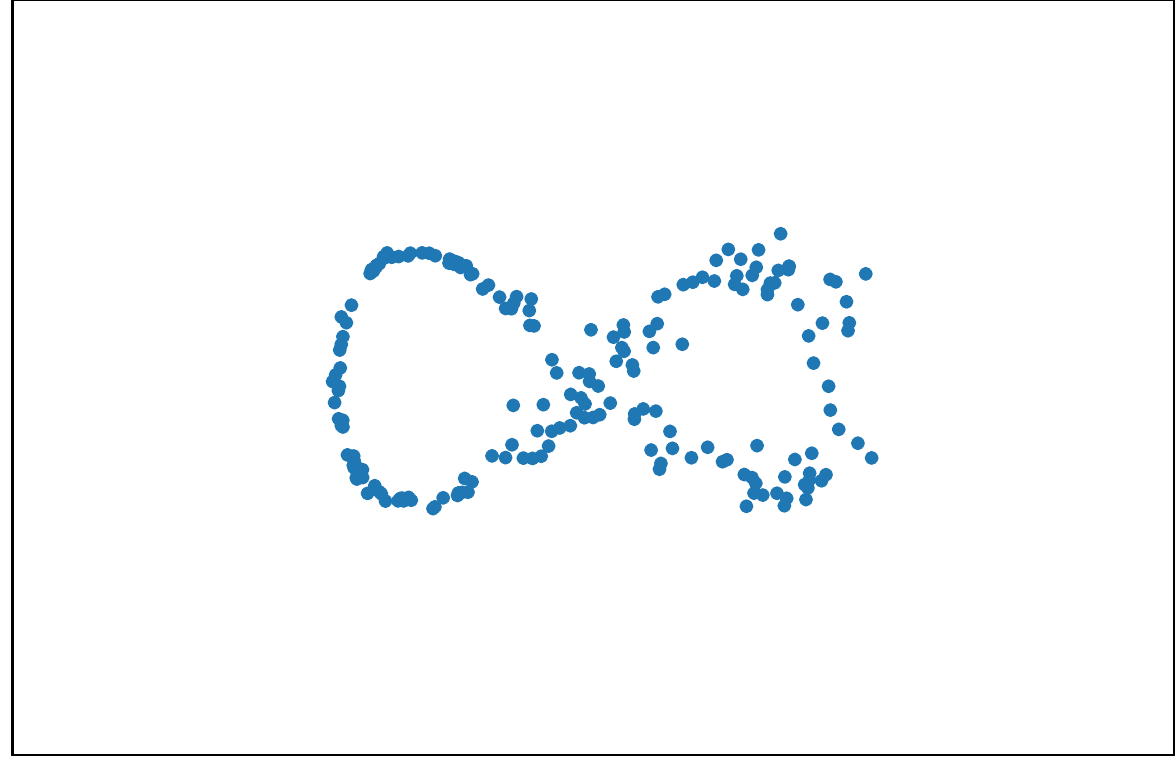}
 }
 
 \setcounter{subfigure}{0}
 \vspace{-10pt}
 \subfloat[Real distribution]{
 \label{fig:cluster_real_appendix}
 \centering
 \includegraphics[width=0.22\textwidth]{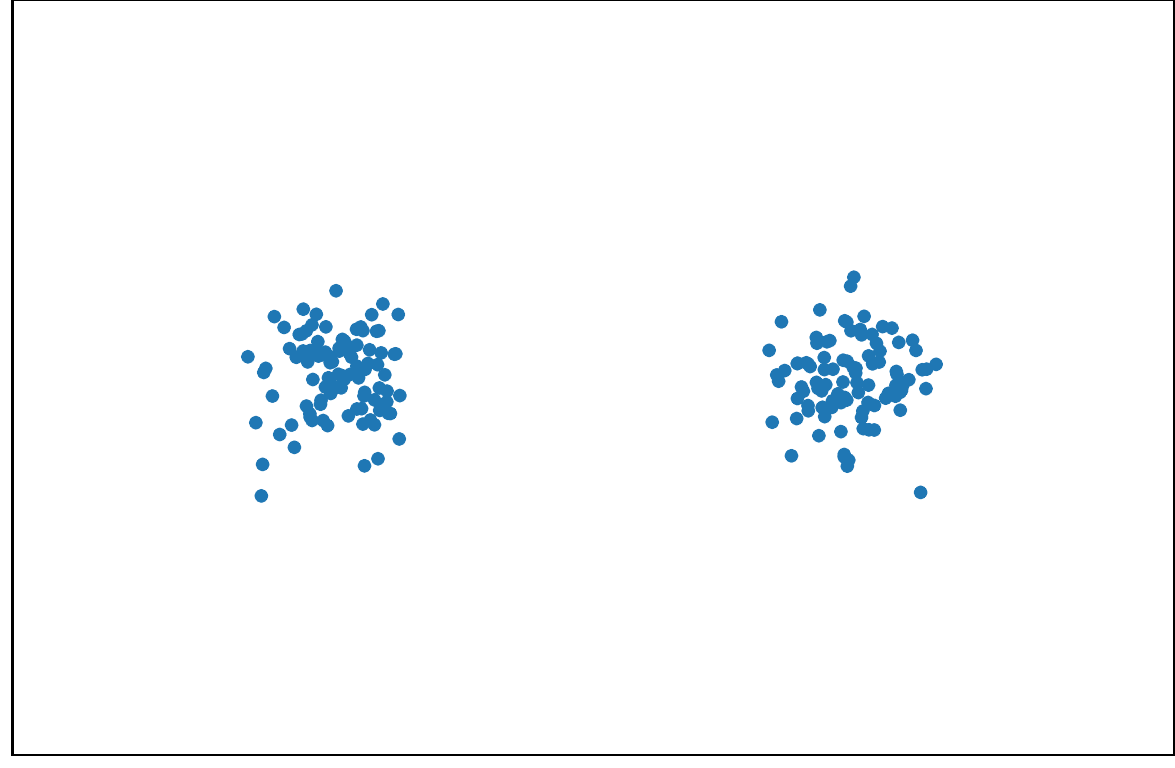}
 }
 \subfloat[VAE]{
 \label{fig:cluster_vanilla_appendix}
 \centering
 \includegraphics[width=0.22\textwidth]{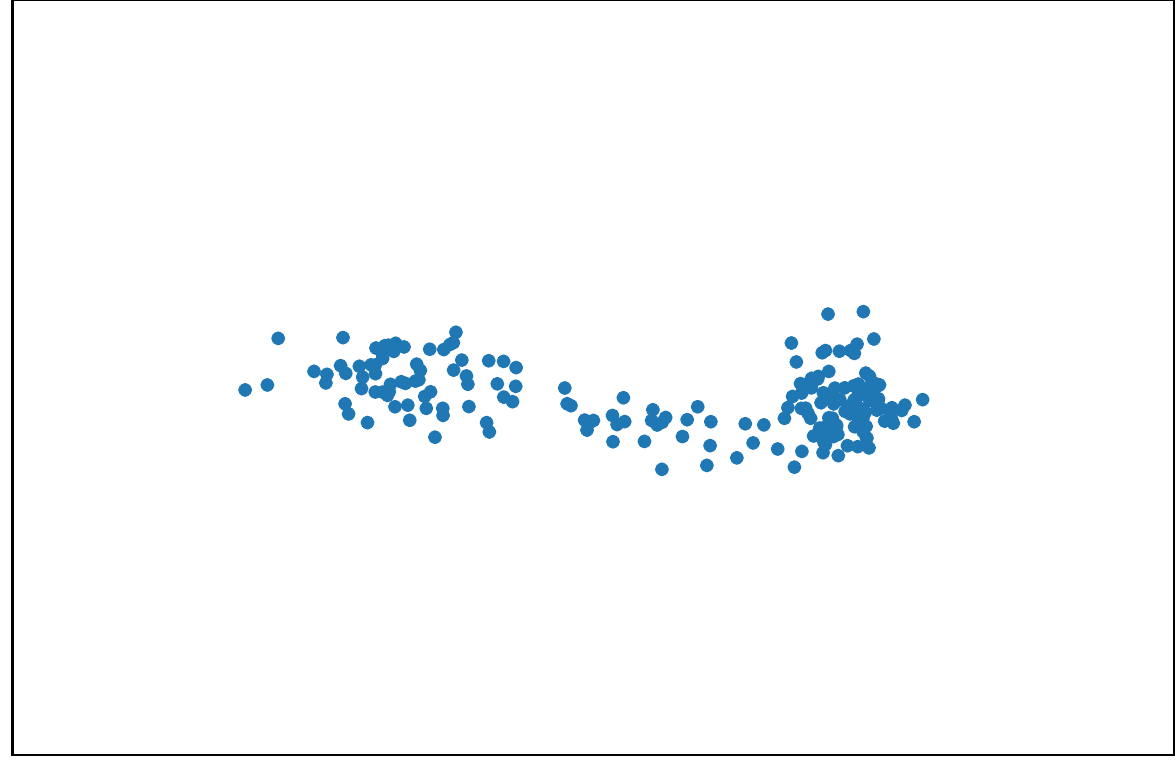}
 }
 \subfloat[Vamp-VAE]{
 \label{fig:cluster_vamp_appendix}
 \centering
 \includegraphics[width=0.22\textwidth]{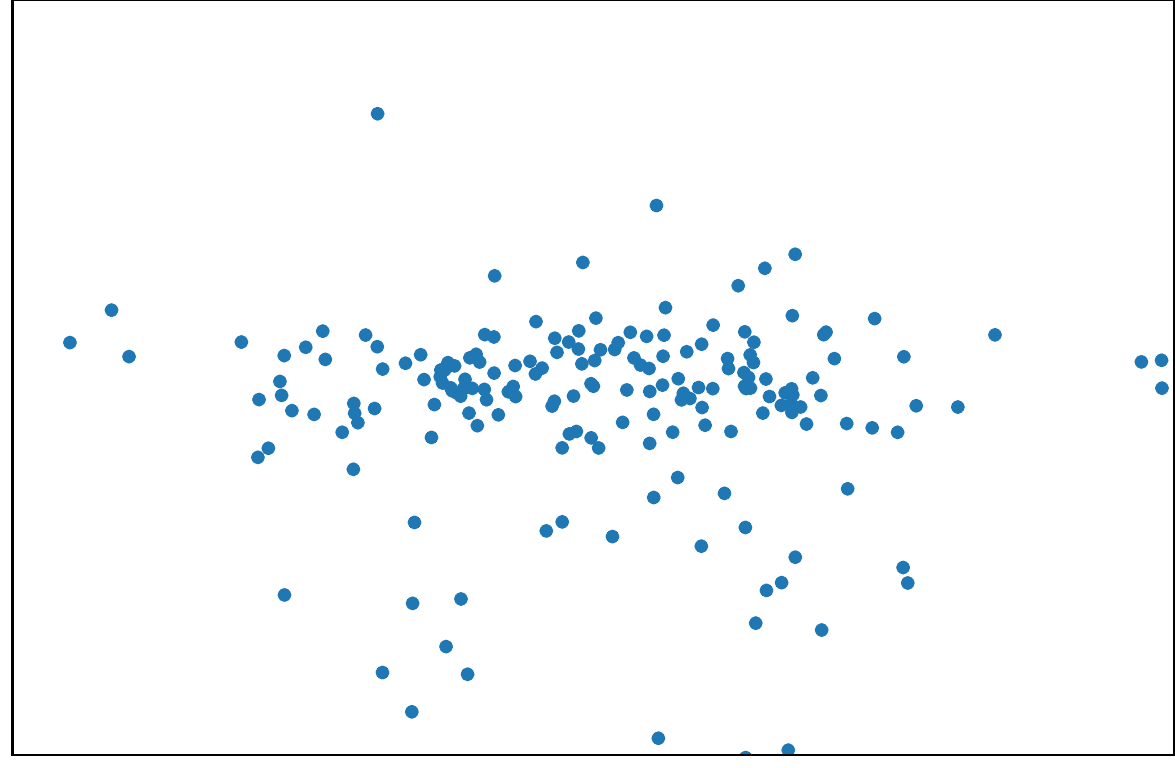}
 }
 \subfloat[\method]{
 \label{fig:cluster_ours_appendix}
 \centering
 \includegraphics[width=0.22\textwidth]{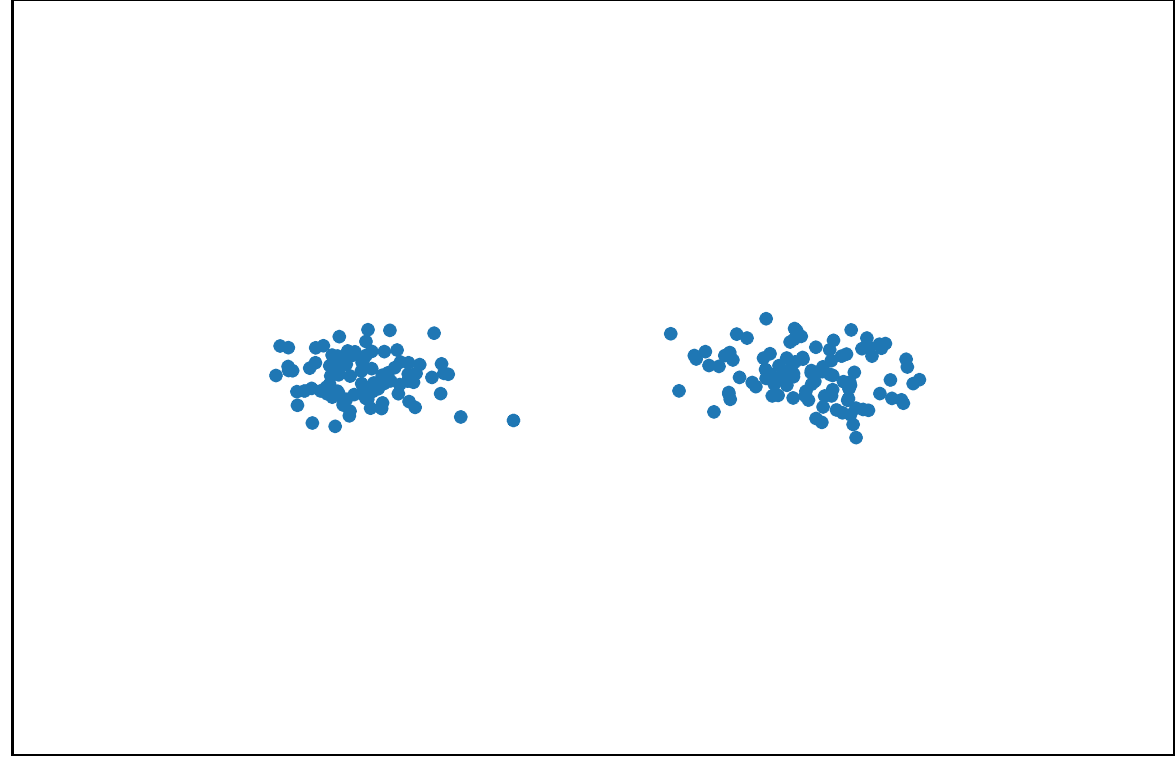}
 } 
 \caption{Extension of~\cref{fig:examples_low_dim} showing Vamp-VAE baseline and additional circular target distribution (top row, uses the same single hole $g_{\psi}$ as the second and third rows).
 }
 \end{figure}

When dimension of $\Y$ is greater than $2$, we have more diverse choice for $g$. When $K$ is decomposable, i.e., $K=\prod_i K_i$, we can separately cut the plane expanded by $\Y_{2i}$ and $\Y_{2i+1}$ into $K_i$ sectors by the \cref{eqn:cluster}. As a result, $\Y$ is split into $K=\prod_i k_i$ clusters.
When $K=2$, we find that $g$ only changes the $1$-st dimension of $\Y$, so it can be applied to cases where latent dimension is $1$.

\paragraph{Learnable proportions}
We can also make the mapping more flexible by learning rather than assigning the cluster proportions. To do so, we keep a learnable value $u_i$ for each cluster and set the angle of the $i$-th sector as $2\pi \text{Softmax}(u)_i$. Things are simpler for the 1-dimensional case where we can uniformly translate $y$ by a learnable bias $b$ before splitting the space from the origin.

\subsection{Sparsity}
\label{sec:sparse_appendix}

\paragraph{Relationship to soft attention}
We note that our setup for the sparsity mapping shares some similarities with a soft attention layer~\citep{bahdanau2014neural}.
However, there are also some important points of difference.
Firstly, soft attention aims to find the weights to blend features from different time steps (for sequential data) or different positions (for image data). In contrast, the dimension selector (DS) selects which dimensions to activate or deactivate for the same latent vector. Secondly, the weights of features are usually calculated by inner products of features for soft attention, while DS relies on a network to directly output the logits.

\paragraph{Sparsity regularizer}
Our sparsity regularizer term, $\ELBO_{sp}$, is used to encourage our dimensionality selector network (DS) to produce sparse mappings. 
It is defined using a mini-batch of samples $\{y_i\}_{i=1}^{M}$ drawn during training as per~\eqref{eq:sparsity_reg}.
During training, the first term of $\ELBO_{sp}$ decreases the number of activated dimensions for each sample, while the second term prevents the samples from all using the same set of activated dimensions, which would cause the model to degenerate to a vanilla \gls{VAE} with a lower latent dimensionality. 

We note that $\ELBO_{sp}$ alone is not expected to induce sparsity without also using the carefully constructed $g_{\psi}$ of the suggested \method.
%
We confirm this empirically by performing an ablation study on \textbf{MNIST} where we apply this regularization directly to a vanilla VAE.
We find that even when using very large values of $\gamma>30.0$ we can only slightly increase the sparsity score ($0.230\rightarrow 0.235$). Moreover, unlikely for the \method, this substantially deteriorates generation quality, with the FID score raising to more than $80.0$ at the same time.

\paragraph{Sparse metric}
We use the Hoyer extrinsic metric~\citep{hurley2009comparing} to measure the sparsity of representations. For a representation $z\in \R^D$,
\begin{align}
    \text{Hoyer}(z)=\frac{\sqrt{D}-||\hat{z}||_1/||\hat{z}||_2}{\sqrt{D}-1}.
\end{align}
Here, following~\citet{mathieu2019disentangling}, we crucially first normalized each dimension $d$ of $z$ to have standard deviation $1$, $\hat{z}_d = z_d/\sigma_d$, to ensure that we only measure sparsity that varies between data points (as is desired), rather than any tendency to uniformly `switch off' certain latent dimensions (which is tangential to our aims).
In other words, this normalization is necessary to avoid giving high scores to representations whose length scales vary between dimensions, but which are not really sparse.

By averaging $\text{Hoyer}(z)$ over all representations, we can get the sparse score of a method.
For the sparsest case, where each representation has a single activated dimension, the sparse score is $1$.
And when the representations get denser, $||\hat{z}||_2$ get smaller compared with $||\hat{z}||_1$, leading to smaller sparse scores.

{
\begin{figure}[t]
 \centering
 \subfloat{
 \centering
 \includegraphics[width=0.4\textwidth]{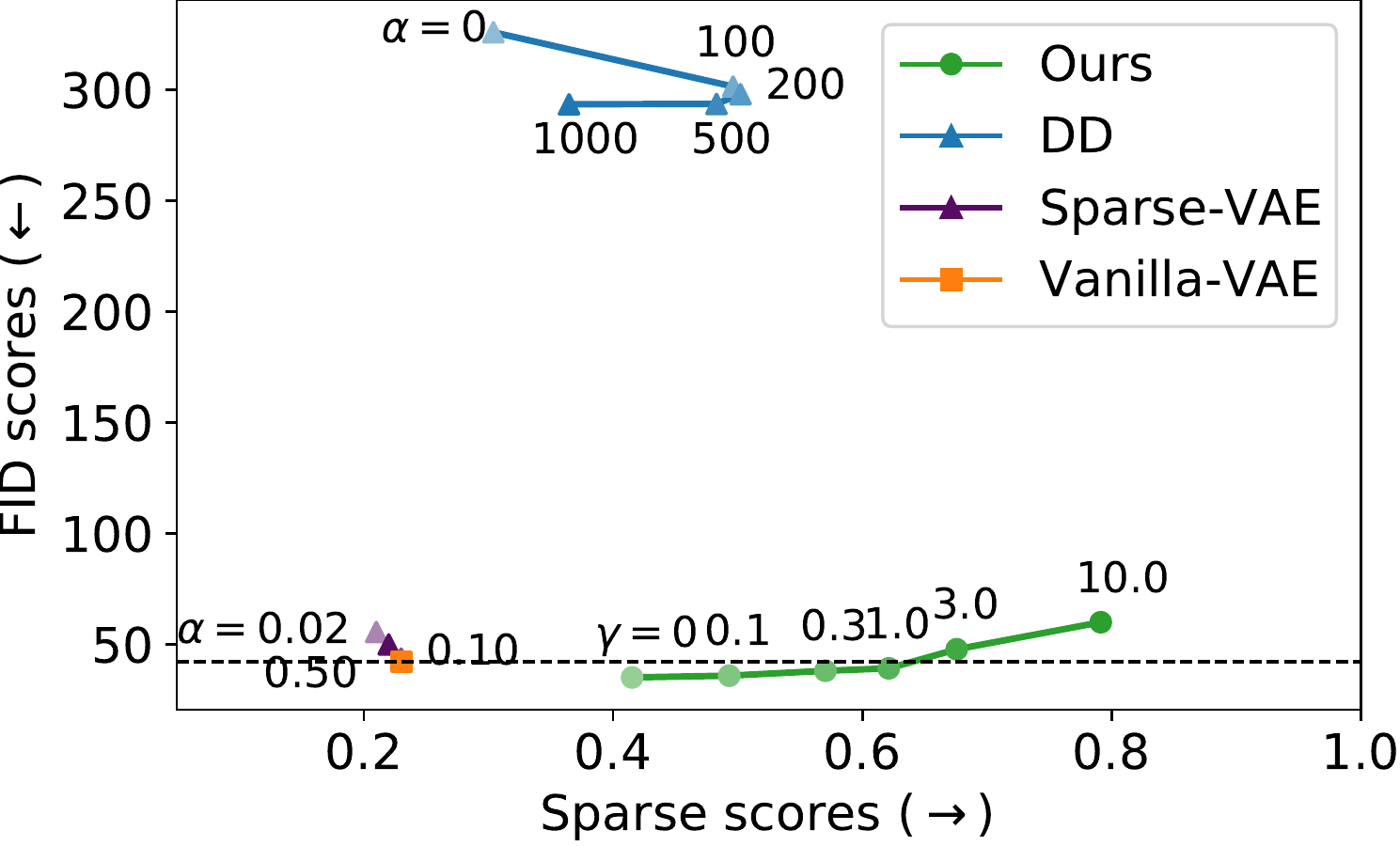}
 }
 \qquad
 \subfloat{
 \label{fig:sparse_downstream_fashion_mnist}
 \centering
 \includegraphics[width=0.39\textwidth]{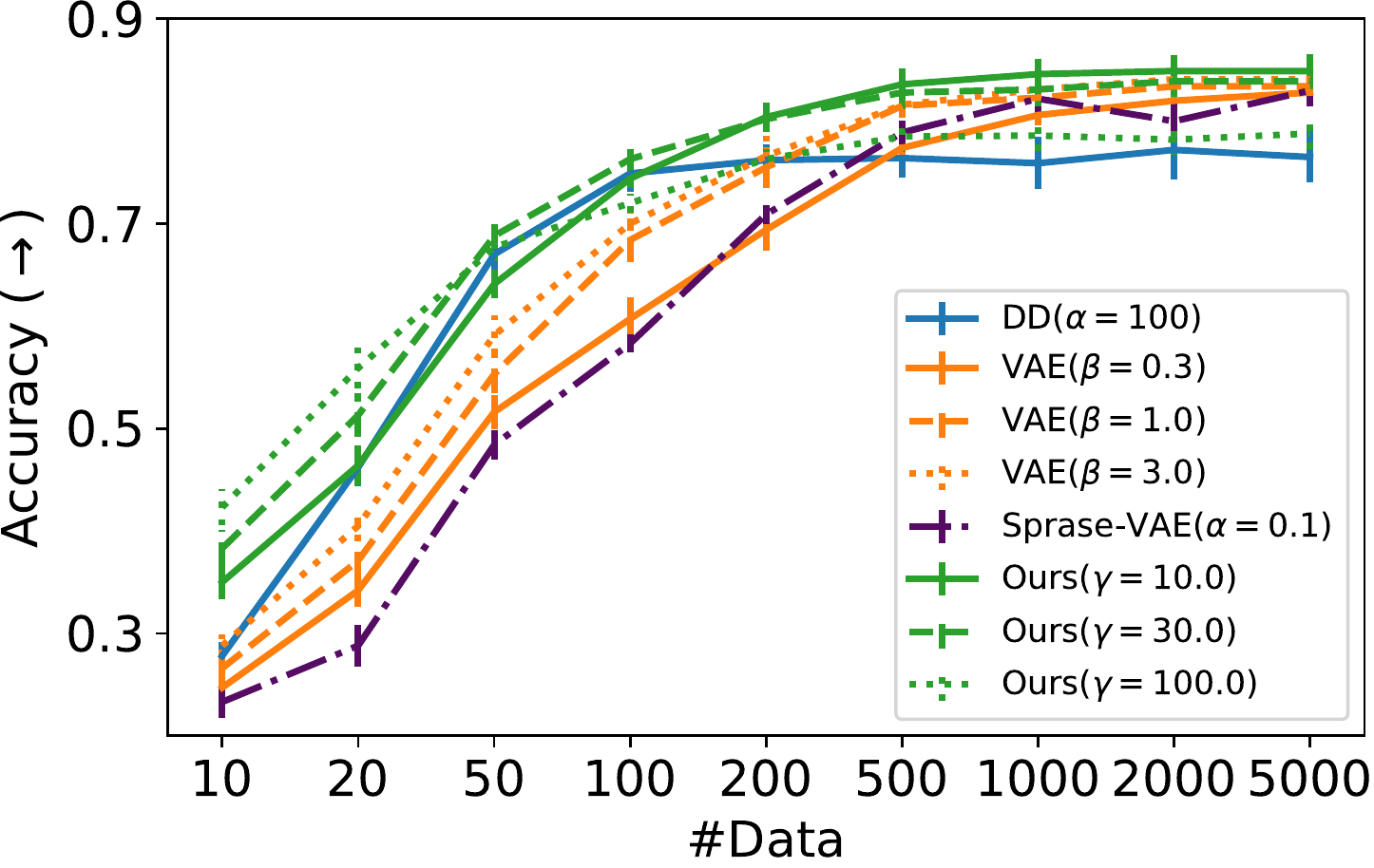}
 }
 \vspace{-6pt} 
 \caption{Results on \textbf{MNIST}. The \textit{left} figure shows FID and sparsity scores. Lower FID scores~($\downarrow$) represent better sample quality while higher sparse scores~($\rightarrow$) indicate sparser features.  The \textit{right} figure shows the performance of sparse features from \method on downstream classification tasks.
 See~\cref{sec:sparse} for details and results for \textbf{MNIST}.
 \vspace{-10pt}}
 \label{fig:fid_sparse_mnist}
\end{figure}}

\paragraph{Reproduction of Sparse-VAE}
We tried two different code bases for Sparse-VAE~\citep{tonolini2020variational}. 
The official code base\footnote{\url{https://github.com/ftonolini45/Variational_Sparse_Coding}} gives higher sparse scores for MNIST and FashionMNIST (though still lower than \method ), but is very unstable during training, with runs regularly failing after diverging and producing NaNs. 
This issue gets even more severe on CelebA which occurs after only a few training steps, undermining our ability to train anything meaningful at all.
To account for this, we switched to the codebase\footnote{\url{https://github.com/Alfo5123/Variational-Sparse-Coding}} from \cite{de2019replication} that looked to replicate the results of the original paper. We report the results from this code base because it solves the instability issue and achieves reasonable results on CelebA. Interestingly, though its generation quality is good on MNIST and Fashion-MNIST, it fails to achieve a sparse score significantly higher than vanilla-VAE. 
As the original paper does not provide any quantitative evaluation of the achieved sparsity, it is difficult to know if this behavior is expected.
We note though that the qualitative results shown in the paper appear to be substantially less sparse than those we show for the \method, cf their Figure 5 compared to the top row of our~\cref{fig:fashion_mnist_manipulation}.
In particular, their representation seems to mostly `switch off' some latents entirely, rather than having diversity between datapoints that is needed to score well under the Hoyer metric.

\begin{table}[t]
\centering
\small{
    \begin{tabular}{lrrrrr}
        \toprule
          \textbf{Parameters} & Synthetic & MNIST & Fashion-MNIST & MNIST-01 & CelebA\\\midrule
          Dataset sizes &Unlimited &55k/5k/10k&55k/5k/10k&10k/1k/2k&163k/20k/20k\\
          Input space & $\mathbf{R}^2$&Binary 28x28&Binary 28x28&Binary 28x28&RGB 64x64x3\\
          Encoder net &MLP &CNN&CNN&CNN&CNN\\
          Decoder net &MLP &CNN&CNN&CNN&CNN\\
          Latent dimension & 2-10&50&50&1-10&1-128\\
          Batch size &10-500&100&100&100&100\\
          Optimizer&Adam&Adam&Adam&Adam&Adam\\
          Learning rate&1e-3&1e-3&1e-3&1e-3&1e-3\\
         \bottomrule
    \end{tabular}
    \caption{Hyperparameters used for different experiments.}
    \label{tab:hyperparameter_appendix}
}
\end{table}

\begin{table}[t]
\centering
\small{
    \begin{tabularx}{5.5cm}{l}
        \toprule
          \textbf{Encoder} \\\midrule
          Input 64 x 64 x 3\\
          4x4 conv. 64 stride 2 \& BN \& LReLU\\
          4x4 conv. 128 stride 2 \& BN \& LReLU\\
          4x4 conv. 256 stride 2 \& BN \& LReLU\\
          Dense~($dim$)\\
         \bottomrule
    \end{tabularx}
    \quad\quad
    \begin{tabularx}{5.5cm}{l}
        \toprule
          \textbf{Decoder} \\\midrule
          Input $dim$\\
          Dense~(8x8x256) \& BN \& ReLU\\
          4x4 upconv. 256 stride 2 \& BN \& ReLU\\
          4x4 upconv. 128 stride 2 \& BN \& ReLU\\
          4x4 upconv. 3 stride 2\\
         \bottomrule
    \end{tabularx}
    \caption{Encoder and Decoder structures for CelebA, where $dim$ is the latent dimension.}
    \label{tab:celebA_model_appendix}
}
\end{table}

\subsection{Additional Experiment Details}

\paragraph{Datasets} Both synthetic and real datasets are used in this paper. All synthetic datasets~(sphere, square, star, and mixture of Gaussian) are generated by generators provided in our codes. For real datasets, We load MNIST, Fashion-MNIST, and CelebA directly from Tensorflow~\citep{tensorflow2015-whitepaper}, and we resize images from CelebA to $64$x$64$ following \cite{hou2017deep}.
For experiments with a specified number of training samples, we randomly select a subset of the training data. We use the same random seed for each model in the same experiment and different random seeds when repeating experiments.

\paragraph{Model structure} For low-dimensional data, the encoder and decoder are both simple multilayer perceptrons with 3 hidden layers~(10-10-10) and ReLU~\citep{glorot2011deep} activation. For MNIST and Fashion-MNIST, we use the same encoder and decoder as \cite{mathieu2019disentangling}. For CelebA, the structure of convolutional networks are shown in \cref{tab:celebA_model_appendix}.

\paragraph{Experiment settings} Other hyperparameters are shown in \cref{tab:hyperparameter_appendix}. All experiments are run on a GTX-1080-Ti GPU.


\end{appendices}
\end{document}